\documentclass{article}

\usepackage{microtype}
\usepackage{graphicx}
\usepackage{subcaption}
\usepackage{booktabs} 

\usepackage{hyperref}

\usepackage[preprint]{icml2026}

\usepackage{amsmath}
\usepackage{amssymb}
\usepackage{mathtools}
\usepackage{amsthm}

\usepackage[capitalize,noabbrev]{cleveref}

\theoremstyle{plain}
\newtheorem{theorem}{Theorem}[section]
\newtheorem{proposition}[theorem]{Proposition}

\theoremstyle{definition}

\theoremstyle{remark}

\usepackage[textsize=tiny]{todonotes}

\usepackage{tcolorbox}
\tcbuselibrary{skins, breakable}

\usepackage{pifont}

\newtcolorbox{questionbox}[1][]{
  breakable,
  colback=gray!5,
  colframe=black,
  coltitle=black,
  title=#1,
  boxrule=0.5pt,
  arc=2mm,
  outer arc=2mm,
  left=6pt,
  right=6pt,
  top=6pt,
  bottom=6pt,
  fonttitle=\bfseries,
  enhanced,
  width=\textwidth,
  colbacktitle=gray!20,  
}

\usetikzlibrary{positioning, arrows.meta}

\newcommand{\xb}{\mathbf{x}}
\newcommand{\xbi}{\mathbf{x}_{-i}}
\newcommand{\xbj}{\mathbf{x}_{-j}}

\newcommand{\gam}{GAM~}
\newcommand{\gamm}{GA\textsuperscript{2}M}
\newcommand{\gamms}{GA\textsuperscript{2}Ms}
\newcommand{\method}{CALM}

\icmltitlerunning{Interpretability-by-Design with Accurate Locally Additive Models and Conditional Feature Effects}

\begin{document}

\twocolumn[
  \icmltitle{Interpretability-by-Design with Accurate Locally Additive Models and Conditional Feature Effects}



  \icmlsetsymbol{equal}{*}

  \begin{icmlauthorlist}
    \icmlauthor{Vasilis Gkolemis}{ath,hua}
    \icmlauthor{Loukas Kavouras}{ath}
    \icmlauthor{Dimitrios Kyriakopoulos}{ath}
    \icmlauthor{Konstantinos Tsopelas}{ath}
    \icmlauthor{Dimitrios Rontogiannis}{ronto}
    \icmlauthor{Giuseppe Casalicchio}{lmu}
    \icmlauthor{Theodore Dalamagas}{ath}
    \icmlauthor{Christos Diou}{hua}
  \end{icmlauthorlist}

  \icmlaffiliation{ath}{ATHENA RC, Greece}
  \icmlaffiliation{hua}{HUA, Greece}
  \icmlaffiliation{ronto}{Max Planck ISS, Germany}
  \icmlaffiliation{lmu}{LMU, Germany}
  \icmlcorrespondingauthor{Vasilis Gkolemis}{vgkolemis@athenarc.gr
}

  \icmlkeywords{Machine Learning, ICML}

  \vskip 0.3in
]



\printAffiliationsAndNotice{}  

\begin{abstract}
Generalized additive models (GAMs) offer interpretability through independent univariate feature effects but underfit when interactions are present in data.
GA$^2$Ms add selected pairwise interactions which improves accuracy, but sacrifices interpretability and limits model auditing.  
We propose \emph{Conditionally Additive Local Models} (CALMs), a new model class, that balances the interpretability of GAMs with the accuracy of GA$^2$Ms.
CALMs allow multiple univariate shape functions per feature, each active in different regions of the input space.
These regions are defined independently for each feature as simple logical conditions (thresholds) on the features it interacts with.
As a result, effects remain locally additive while varying across subregions to capture interactions.
We further propose a principled distillation-based training pipeline that identifies homogeneous regions with limited interactions and fits interpretable shape functions via region-aware backfitting.
Experiments on diverse classification and regression tasks show that CALMs consistently outperform GAMs and achieve accuracy comparable with GA$^2$Ms.
Overall, CALMs offer a compelling trade-off between predictive accuracy and interpretability.
\end{abstract}

\section{Introduction}
\label{sec:introduction}
In high-stakes decision making, machine learning models must provide not only reliable predictions but also human-understandable explanations \cite{murdoch2019interpretable, doshi2017towards}. 
This requirement has motivated the development of \emph{interpretable-by-design} models, whose structure permits direct inspection of their behavior \cite{molnar2020interpretable}.
However, interpretability is a continuum rather than a binary property: interpretable-by-design models span a spectrum where gains in predictive accuracy often entail greater structural complexity \cite{rudin2019stop, lipton2018mythos}.

This trade-off is particularly evident in GAMs~\cite{lou2012intelligible} and, their extension, \gamms~\cite{caruana2015intelligible}. 
GAMs achieve high interpretability by modeling predictions as a sum of independent univariate effects, but this strict additivity prevents them from capturing feature interactions, limiting their predictive accuracy. 
\gamms~add selected pairwise interactions, improving performance at the cost of crucial interpretability properties \cite{radenovic2022neural}.
Specifically, interaction terms 
(i) obscure the unique attribution of the prediction to individual features because the pairwise interaction terms are generally not additively separable 
(ii) complicate global auditing by requiring the simultaneous inspection of multiple (univariate and bivariate) effects. 
Section~\ref{subsec:interpretability-properties} analyzes these limitations in detail.

\begin{figure*}[ht]
  \centering
  \begin{tikzpicture}[
    node distance=1.7cm and .1cm,
    every node/.style={font=\sffamily},
    img/.style={inner sep=0pt, outer sep=0pt, draw=blue, thick},
    arrow/.style={-{Latex[width=1mm, length=2mm]}, semithick}
  ]

  \node[img] (x1) {\includegraphics[width=3.5cm]{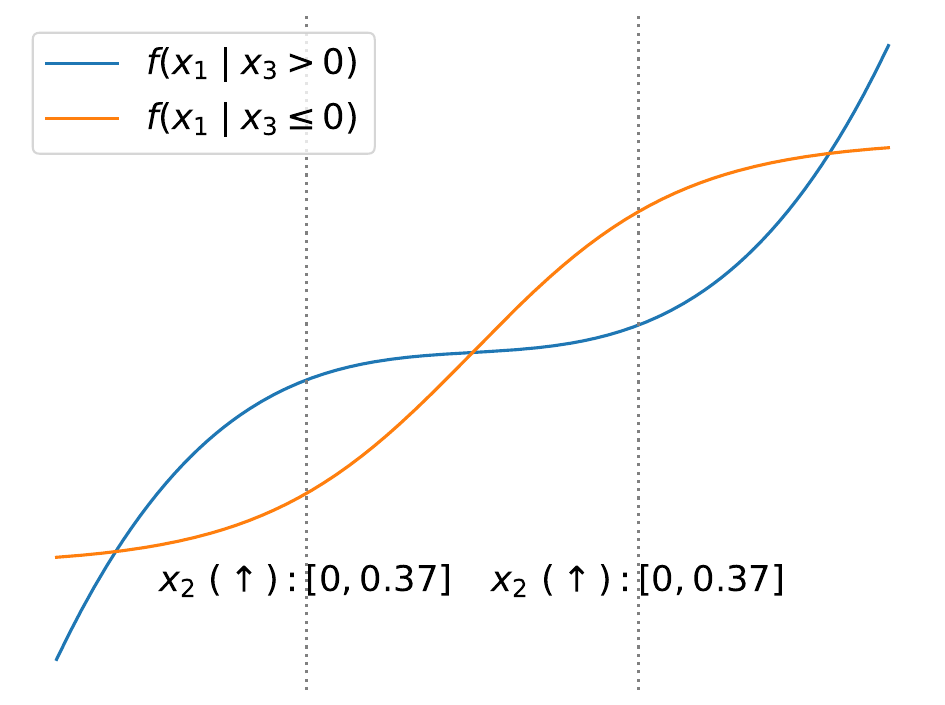}};
  \node[img, right=of x1] (x2) {\includegraphics[width=3.5cm]{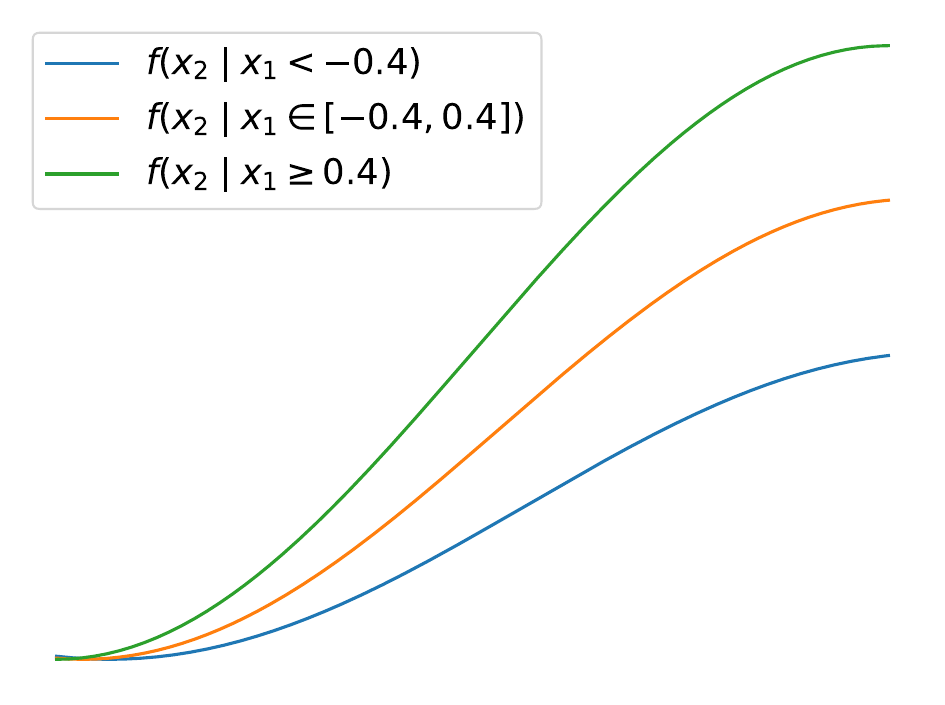}};

  \node[right=of x2] (dots) {$\cdots$};
  \node[img, right=of dots] (xd) {\includegraphics[width=3.5cm]{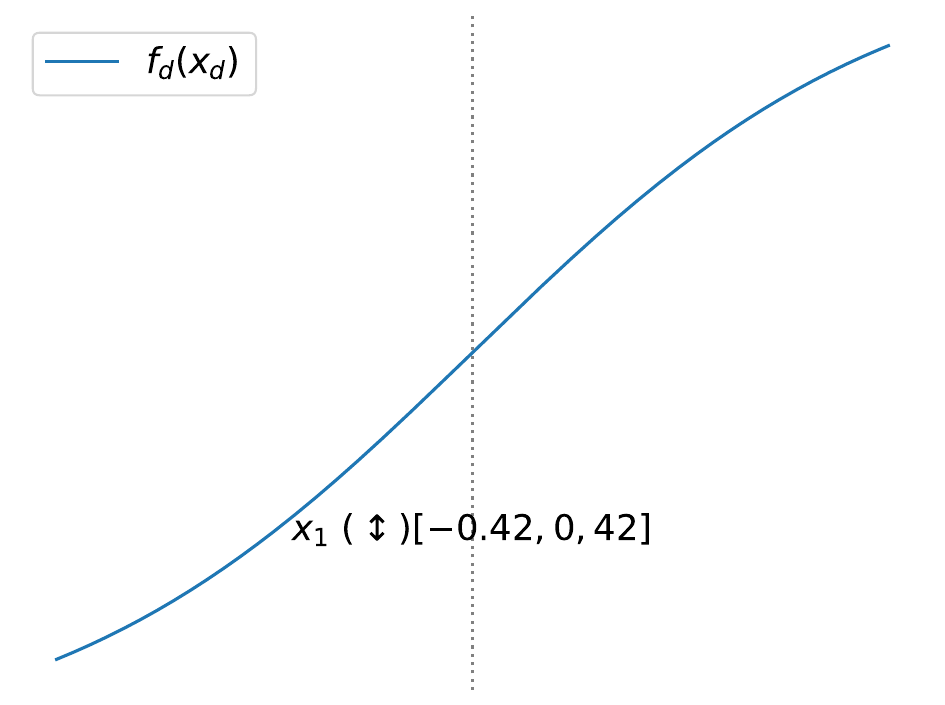}};
  \node[above=0.1cm of xd] {$x_d$};

  \node[above=0.1cm of x1] {$x_1 | x_3$};
  \node[above=0.1cm of x2] {$x_2 | x_1$};

  \node[below=.5cm of x2, draw, circle] (sum) {$\sum$};

  \node[below left=0.04cm and 0.34cm of x1, draw=blue, circle] (beta) {$\beta_0$};

  \node[right=1.1cm of sum, draw, circle, inner sep=2pt] (link) {$g^{-1}$};

  \node[right=1.1cm of link, draw, circle, inner sep=6pt] (yhat) {$\hat{y}$};

  \draw[arrow] ([yshift=-0.1cm]x1.south) -- ([yshift=0.4cm]sum.west);
  \draw[arrow] (x2.south) -- (sum.north);
  \draw[arrow] (xd.south) -- ([yshift=0.4cm]sum.east);
  \draw[arrow] (beta.east) -- ([yshift=0.1cm]sum.west);
  \draw[arrow] (sum.east) -- (link.west);
  \draw[arrow] (link.east) -- (yhat.west);

  \end{tikzpicture}
  \caption{
    CALMs use \textit{conditional feature effects}: every feature effect is expressed by a collection of 1D functions, each associated with a different region of the input space. In the example, the effect of $x_1$ conditions on $x_3$, the effect of $x_2$ on $x_1$, while $x_d$ does not interact with any other feature and thus has a single plot. CALMs are \textit{interpretable-by-design}—summarized in $d$ figures of 1D plots—and \textit{accurate}, as they can model feature interactions.
\textbf{Code available at}: \url{https://github.com/givasile/CALM}.}

  \label{fig:concept-image}
\end{figure*}

We introduce \emph{Conditionally Additive Local Models (CALMs)}, a new model class that strikes a balance between the predictive accuracy of \gamms~and the interpretability of GAMs. 
The core idea is to learn \emph{conditional feature effects}: multiple univariate effects per feature, each active within a distinct region of the input space where the feature exhibits nearly additive behavior—that is, where it minimally interacts with others.
These regions are defined through threshold-based conditions on interacting features.
For example, if $x_i$ interacts with $x_j$, a CALM may learn separate effects for $x_i$; one when $x_j < \tau$ and another when $x_j \geq \tau$ for some threshold $\tau$. 
Figure~\ref{fig:concept-image} illustrates this representation.
Conditional effects thus enable interaction-aware modeling while maintaining (i) transparent feature contributions and (ii) straightforward global model auditing.

To fit CALMs to data, we propose a three-step distillation-based pipeline:
(1) Train a black-box reference model to capture complex interactions present in the data;
(2) Partition the input space independently for each feature using CART-based splitters~\cite{herbinger2022REPID} optimized to minimize heterogeneity—a criterion that, when minimized, provably reduces feature interactions within the resulting regions~\cite{herbinger2024GADGET};
(3) Fit region-specific shape functions using a region-aware extension of the standard backfitting algorithm.
This procedure efficiently identifies interacting features (e.g., $x_i$ interacts with $x_j$),
optimal split points (e.g., $\tau$ such that separate effects are learned for $x_j < \tau$ and $x_j \geq \tau$), and requires minimal tuning—primarily the maximum allowed number of interactions per feature.

Extensive evaluation on diverse regression and classification benchmarks demonstrates that 
CALMs consistently outperform GAMs in predictive accuracy, and often match or exceed \gamms. 
Furthermore, we formally show that CALMs satisfy key interpretability properties unsupported by~\gamms.
Code for reproducing all experiments is provided in the supplementary material.

\textbf{Contributions.}
(i) We introduce \emph{Conditionally Additive Local Models} (CALMs), a novel interpretable-by-design model class that balances GAM's interpretability and \gamms~accuracy via conditional feature effects.
(ii) We develop a robust training algorithm based on distillation, heterogeneity-minimization and region-aware backfitting.
(iii) We provide a formal analysis of the interpretability properties inherent in CALMs.
(iv) We present extensive empirical evidence demonstrating improved accuracy–interpretability trade-offs over GAMs and \gamms.

\section{Background and Related Work}
\label{sec:related-work}

Interpretable-by-design models enforce transparency in their decision-making process.
Examples include decision trees~\cite{breiman1984cart}, rule lists~\cite{letham2015stroke, angelino2018learningcertifiablyoptimalrule}, and prototype-based classifiers~\cite{chen2019looks, bien2012prototype}.
Among them, GAMs~\cite{hastie1990generalized, wood2017generalized} stand as a particularly popular candidate, mainly due to their simple global interpretability.

GAMs model the output as $g(\mathbb{E}[y]) = \beta_0 + \sum_{i} f_i(x_i)$, where each $f_i$ is a univariate shape function, $\beta_0$ is the global intercept, and $g$ is a link function.
The key benefit of GAMs is that feature effects can be \emph{independently} visualized through a simple 1D plot.
Methods for learning $f_i \:\forall i$ include spline-based approaches~\cite{lou2012intelligible, wood2017generalized}, gradient boosting~\cite{friedman2001greedy}, and neural-based variants~\cite{agarwal2021neural, KRAUS2024IGANN, radenovic2022neural}.
However, standard GAMs assume that features contribute independently to the output, which limits their accuracy when feature interactions are present in data.

To capture these interactions, \gamms~add pairwise terms: 
$g(\mathbb{E}[y]) = \beta_0 + \sum_{i} f_i(x_i) + \sum_{i<j} f_{ij}(x_i, x_j)$~\cite{lou2013accurate}.
\gamms~instances vary in (i) how they select the top-$K$ interactions to maintain sparsity and (ii) how they learn $f_i$ and $f_{ij}$.
For example,~\citep{lou2013accurate} greedily selects interactions based on loss reduction, while, neural-based approaches~\cite{yang2020gami, ibrahim2023grandslamin, chang2022node} integrate pairwise interactions into standard neural-based architectures.

While \gamms~improve accuracy, they obscure individual feature contributions and complicate model auditing (see Section~\ref{subsec:interpretability-properties}).
A key open question remains: \textit{Can we achieve \gamm-level accuracy while maintaining GAM-level interpretability?}
To this end, we draw inspiration from \emph{regional effect methods}, which handle interactions by partitioning the feature space into subregions where interactions are weak.

To understand regional effects, consider first how standard global effect work.
Global effect plots, such as PDP, ALE or SHAP-DP, explain a black box model by decomposing its complex $d$-dimensional function $f(\xb) \rightarrow y$ into $d$ one-dimensional plots $x_i \rightarrow y$. 
However, when strong interactions exist between $x_i$ and other features, these plots can yield misleading explanations \cite{gkolemis2023dale, gkolemis2023rhale}.

Regional feature effect plots address that by partitioning the feature space into subregions where interactions are minimal~\cite{herbinger2022REPID, herbinger2024GADGET}.
Within each subregion, simple univariate plots accurately represent feature effects without being confounded by interactions with other features.
We adopt this strategy to identify low-interaction subregions and then we fit shape functions within each subregion. 

Our approach relates to model distillation~\cite{hinton2015distilling, tan2018distill} and surrogate modeling~\cite{guidotti2018survey}.
These approaches train an interpretable ``student'' model to mimic a complex ``teacher'', either locally ~\cite{ribeiro2016should} or globally~\cite{guidotti2018survey}.
However, instead of explaining the teacher, we use it to identify subregions with minimal feature interactions, on which we then fit shape functions. 
In a rough analogy, CALM serves as the interpretable-by-design ``student'' that replaces the black-box ``teacher'' as the final predictor.

Other works that share the above idea are:
SLIM~\cite{hu2020surrogate} and related analyses~\cite{herbinger2023leveraging} propose tree-based models with simple predictors in each region, mainly in the context of model distillation and explanation.
In contrast, our goal is not to approximate a black-box model, but to construct an interpretable-by-design predictor with GAM-like structure.
Related ideas also appear in~\cite{gkolemis2023regionally}, which leverages regional feature effects to build interpretable models; however, that work does not explicitly characterize the interpretability guarantees of the learned model or evaluate them systematically.


\section{CALM: Conditionally Additive Local Model}
\label{sec:ram}
CALM captures feature interactions via \emph{conditional feature effects}, a set of univariate shape functions per feature, 
each active in a different region of the input space.

\subsection{Model Formulation}
\label{subsec:model-overview}

Let $\mathbf{X}=(X_1, \ldots, X_d)\in \mathcal{X}\subseteq \mathbb{R}^d$ be a random vector with joint distribution $P_{\mathbf{X}}$ over the input data, $\xb = (x_1,\ldots,x_d)$ where $\mathcal{X} = \mathcal{X}_1 \times \cdots \times \mathcal{X}_d$. We use the subscript $-i$ to denote quantities excluding the $i$-th feature (e.g., $\mathbf{X}_{-i}$ defined on space $\mathcal{X}_{-i}$). Let also $Y$ be the output random variable, with $Y\in\mathbb{R}$ for regression and $Y\in\{0,1\}$ for binary classification.
CALM is then defined as
\begin{equation}
    \label{eq:CALM}
    g \left ( \mathbb{E}[Y \mid \mathbf{X}=\xb ] \right ) = \beta_0 + \sum_{i=1}^d f_i^{(r_i(\xbi))}(x_i)
\end{equation}
where 
\( \beta_0 \in \mathbb{R} \) is an intercept and
\( g \) is a link function (identity for regression and logit for classification).
For each feature $i$, CALM learns a set of univariate shape functions \( \{ f_i^{(r)} \}_{r=1}^{R_i} \).
At prediction time, a region selection function $r_i(\xbi): \mathbb{R}^{d-1} \to \{1, \ldots, R_i\}$ chooses the specific shape function for $x_i$ based on the context of other features $\xbi$.
This allows the effect of $x_i$ to vary across regions, effectively modeling feature interactions, while maintaining univariate interpretability within each region.
The final prediction is given by: %
\begin{equation}
    \label{eq:CAML-predictor}
    f_{\texttt{CALM}}(\xb) =  g^{-1}\left ( \beta_0 + \sum_{i=1}^d f_i^{(r_i(\xbi))}(x_i) \right )
\end{equation}%
For the rest of the paper, we may use $\hat{y}(\mathbf{x})$ to denote $f_{\texttt{CALM}}$ for brevity.
The selection function \( r_i(\xbi) \) partitions the input space (excluding $x_i$) into $\mathcal{P}_i := \{\mathcal{R}_i^{(r)}\}_{r=1}^{R_i}$, independently for each feature $i$,  where  each $\mathcal{R}_i^{(r)} \subseteq \mathcal{X}_{-i}$.
The goal of these partitions is to minimize interactions between $x_i$ and other features within each region, ensuring the effect of $x_i$ is accurately captured by a single univariate shape function.
To maintain interpretability, each partition is represented by a binary decision tree $T_i$ of maximum depth $d_{max}$, built over $\xbi$. 
Internal nodes apply axis-aligned splits, i.e., inequality thresholds \(x_j < \tau\) or $x_j \geq \tau$ for continuous features, and equality tests \( x_j = \tau \) or \(x_j \neq 1\) for categorical ones. 
Each leaf of the tree defines a region \(\mathcal{R}_i^{(r)}\).
Formally, each region is defined by a conjnuction of at most $m_i^{(r)} \leq d_{max}$ rules:
\begin{equation}
\label{eq:region-definition}
\mathcal{R}_i^{(r)} = \left\{ \xbi  \;\middle|\; \wedge_{k=1}^{m_i^{(r)}} (x_{j_k} \;\text{op}_k\; \tau_k)  \right\}.
\end{equation}
where 
\( j_k \in \{1, \dots, d\} \setminus \{i\} \) indexes an interacting feature, 
\( \text{op}_k \in \{<, \geq, =\,, \ne\} \) is a comparison operator,
and
 \( \tau_k \in \mathbb{R} \) is a threshold.

CALM is able to capture feature interactions involving up to $(d_{max} + 1)$ features, as each region conditions on up to $d_{max}$ features.
While increasing $d_{max}$ can improve accuracy, it comes at the cost of interpretability.

To balance this accuracy-interpretability tradeoff, CALM exposes two hyperparameters:
$d_{max}$ (tree depth) bounds the number of shape functions per feature to $R_i \leq 2^{d_{max}}$, while $K$ limits the total number of shape functions across all features via $\sum_i R_i \leq K$.
In our experiments, we set $d_{max} = 2$ (yielding $R_i=4$ regions per feature) and leave $K$ unconstrained.

Notably, CALM learns, at most, $d\cdot 2^{d_{max}}$ univariate shape functions (up to $2^{d_{max}}$ per feature), yet these combine to express up to $2^{d\cdot d_{max}}$ distinct additive models--an exponential increase in expressive power.

\subsection{Training algorithm for fitting CALM}

Given a dataset \( \mathcal{D} = \{(\xb^{(i)}, y^{(i)})\}_{i=1}^N \), we fit a CALM 
predictor by minimizing the empirical risk 
\( \mathbb{E}_{(\mathbf{X}, Y) \sim \hat{P}_{\mathcal{D}}}\left [ \mathcal{L}(Y, \hat{y}(\mathbf{X})) \right ] \),
where $\hat{y}(\mathbf{X})$ denotes a CALM model, 
$\mathcal{L}$ a loss function 
and 
$\hat{P}_{\mathcal{D}}$ the empirical distribution over $\mathcal{D}$.
Fitting a CALM requires estimating:
(a) a set of $d$ partitions $\mathcal{P}_i$, $i \in \{1, \dots, d\}$, represented by binary trees $T_i$ (one per feature);
(b) region-specific shape functions $\{f_i^{(r)}\}_{r=1}^{R_i}$ and a global intercept $\beta_0$.
We propose a three-step distillation-based pipeline summarized in Algorithm~\ref{alg:calm-training}.
\begin{algorithm}[tb]
\caption{Training a CALM model}
\label{alg:calm-training}
\begin{algorithmic}[1]
\renewcommand{\algorithmicensure}{\textbf{Output:}} 
\REQUIRE Training data \( \mathcal{D} = \{(\xb^{(i)}, y^{(i)})\}_{i=1}^N \)
\ENSURE CALM predictor \( f_{\texttt{CALM}}(\xb) \)
\STATE \textbf{Step 1:} Train reference model \( f_{\texttt{ref}} \) on \( \mathcal{D} \).
\STATE \textbf{Step 2:} Learn feature-specific trees \( T_i \), \( i=1,\dots,d \).
\STATE \textbf{Step 3:} Estimate shape functions \( \{ \{f_i^{(r)}\}_{r=1}^{R_i} \}_{i=1}^d\).
\end{algorithmic}
\end{algorithm}
\textbf{Step 1: Train a Reference Model.}
We train a high-capacity black-box predictor \( f_{\texttt{ref}} : \mathcal{X} \to \mathbb{R} \) on \( \mathcal{D}\). 
This model serves as functional proxy for \( \mathbb{E}[Y | \mathbf{X} ] \) and is used exclusively to detect interactions.
The reference model is discarded after training.
The choice of \( f_{\texttt{ref}} \) is independent of later stages; any accurate predictor can be used, such as gradient-boosted trees (our default), neural networks, random forests or foundation models, like TabPFN \cite{hollmann2022tabpfn}.

\textbf{Step 2: Learn feature-specific partitioning trees.}

This step learns $d$ independent, feature-specific partitions 
$\mathcal{P}_i := \{\mathcal{R}_i^{(r)}\}_{r=1}^{R_i}$, 
each represented by a binary tree $T_i$ defined over $\xbi$.
The objective is to identify near-additive regions $\mathcal{R}_i^{(r)}$, in which the effect of $x_i$ on the output is well approximated by a univariate function $f_i^{(r)}$, due to locally weak higher-order interactions.

To identify such regions, we use the interaction-related heterogeneity measure from~\cite{herbinger2024GADGET}. 
Feature effect methods (e.g., PDP or ALE) explain a black-box model 
(like \(f_{\texttt{ref}}(\xb)\)) by decomposing it into univariate effects \(f_i(x_i) \: \forall i\). 
To do so, they first define the local effects \(h(x_i, \xbi^{(j)})\) that quantify the contribution of feature \(x_i\) on a specific instance \(\mathbf{x}^{(j)}\).
Then they compute $f_i(x_i)$ by averaging the local effects.
Heterogeneity is the variability of these local effects around their average, directly measuring how much \(x_i\)’s effect depends on other features. High heterogeneity signals strong interactions, while low heterogeneity identifies near-additive, interaction-free regions—making it an effective criterion for our purpose.

Following the above, we define the \emph{pointwise} heterogeneity of $x_i$ in a region $\mathcal{R}$ (e.g., $\mathcal{R}_i^{(r)}$) as:
\begin{equation}
\begin{aligned}
H_i^{\mathcal{R}}(x_i) &= \mathbb{E}_{\mathbf{X}_{-i}\mid \mathbf{X}_{-i} \in \mathcal{R}}\left[\left ( h(x_i , \mathbf{X}_{-i}) - \mu_i^{\mathcal{R}}(x_i) \right )^2\right] \\
&\text{where } \mu_i^{\mathcal{R}}(x_i) = \mathbb{E}_{\mathbf{X}_{-i}\mid \mathbf{X}_{-i} \in \mathcal{R}}\left[h(x_i , \mathbf{X}_{-i})\right]
\end{aligned}
\label{eq:pointwiseh}
\end{equation}
The corresponding \emph{feature-level heterogeneity} is:
\begin{equation}
    H^{\mathcal{R}}_i=\mathbb{E}_{X_i|\mathbf{X}_{-i}\in \mathcal{R}}\left[H_i^{\mathcal{R}}(X_i)\right]
    \label{eq:heterogeneity}
\end{equation}
In our experiments, heterogeneity is computed using PDP-based formulas.
As shown in~\cite{herbinger2024GADGET}, ALE- and SHAP-DP variants also provide valid alternatives; we refer to~\cite{herbinger2022REPID, herbinger2024GADGET} for details.

As additional theoretical justification for using heterogeneity as the splitting criterion, 
we show that, under specific assumptions, the approximation error of CALM is bounded by the sum of the expected feature heterogeneities.
Consequently, reducing heterogeneity of each feature through partitioning yields a CALM model $f_{\texttt{CALM}}$ with lower mean squared error,
assuming perfect fit in Step 3 of Algorithm~\ref{alg:calm-training}.
The proof is provided in Appendix~\ref{app:learning-rams} (Proposition \ref{prop:step2}).

Therefore, we use $H_i^{\mathcal{R}}$ as the splitting criterion and learn a binary decision tree $T_i$ for each feature $x_i$ using a greedy CART-style algorithm.
At each node, we consider splits over all conditioning features $x_j \neq x_i$ and select the split that maximizes the reduction in heterogeneity.
For numerical features, splits take the form $x_j \{\leq, >\} \tau$, while for categorical features we use $x_j \{=, \neq\} \tau$.
A split is accepted only if the relative reduction in $H_i^{\mathcal{R}}$ exceeds a threshold $\epsilon$; otherwise, the node becomes a leaf.
This procedure yields a partition $\mathcal{P}_i = \{ \mathcal{R}_i^{(r)} \}_{r=1}^{R_i}$ where heterogeneity is significantly reduced, ensuring the effect of $x_i$ is locally stable and less distorted by interactions than in the global space. 
Additional implementation details are provided in Appendix~\ref{app:learning-rams}.
In all experiments, we use PDP-based heterogeneity, set the maximum tree depth to $d_{\max}=2$, and fix $\epsilon=0.2$.

\paragraph{Step 3: Estimate Shape Functions.}
Given the partitions $\{\mathcal{P}_i\}_{i=1}^d$, we estimate the region-specific shape functions $\{f_i^{(r)}\}$ by minimizing the empirical loss of the CALM predictor.
We use a modified gradient boosting procedure in which, at each iteration, a single shape function $f_i^{(r)}$ is updated using only the observations whose $\xbi$ fall into the corresponding region $\mathcal{R}_i^{(r)}$ (see Appendix~\ref{app:learning-rams} for details).

Although each update is restricted to a single region, the resulting optimization
problem is inherently \emph{coupled}.
The regions are defined separately for each feature, so the subsets of samples
used to update different shape functions generally overlap.
Consequently, updating one $f_i^{(r)}$ changes the residuals seen by all other
shape functions.
Therefore it can be understood as a coordinated optimization of a single
additive predictor with region-gated components.

The following proposition formalizes this intuition by characterizing the target
of Step~3 at the population level and its convergence behavior under idealized
updates. The proof is in Appendix~\ref{app:learning-rams}. 
In practice, Step~3 is implemented using gradient boosting to approximate the exact regional updates.

\begin{proposition}[Optimality and convergence]
\label{prop:step3-main}
Let $m(\xb) = \mathbb{E}[Y \mid \mathbf{X}=\xb]$ denote the true regression function. Assume regression with squared loss, fixed partition trees $\{T_i\}_{i=1}^d$ (as learned by Step~2) and $\mathbb{E}[Y^2]<\infty$.
Let $\mathcal H(\{T_i\})$ denote the fixed-tree CALM class (Appendix~\ref{app:step3-theory}), and assume
$\mathcal H(\{T_i\})\subset L_2(P_X)$ is nonempty, closed, and convex.
Then:
\begin{enumerate}
    \item \emph{Optimality}: 
    Any 
    $s^\star \in \arg\min_{s\in\mathcal H(\{T_i\})} \mathbb{E}[(Y-s(\mathbf{X}))^2]$
    is the $L_2(P_{\mathbf{X}})$-best approximation of $m$ within $\mathcal H(\{T_i\})$.
    \item Convergence: The idealized exact cyclic regional backfitting converges to an empirical risk minimizer over $\mathcal H(\{T_i\})$.
\end{enumerate}
\end{proposition}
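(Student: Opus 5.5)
The plan is to handle the two claims separately. Optimality follows from an orthogonal decomposition of the risk. Convergence follows from viewing exact cyclic regional backfitting as block coordinate descent on a convex quadratic.

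\textbf{Optimality.} Fix any $s\in\mathcal H(\{T_i\})\subset L_2(P_{\mathbf X})$. Since $\mathbb{E}[Y^2]<\infty$, the function $m$ lies in $L_2(P_{\mathbf X})$ by Jensen's inequality, so all terms below are finite. Write $Y-s(\mathbf X) = (Y-m(\mathbf X)) + (m(\mathbf X)-s(\mathbf X))$ and expand the square. The cross term is
\[
\mathbb{E}\bigl[(Y-m(\mathbf X))(m(\mathbf X)-s(\mathbf X))\bigr].
\]
Conditioning on $\mathbf X$ and using the tower property, it vanishes because $\mathbb{E}[Y-m(\mathbf X)\mid \mathbf X]=0$; Cauchy--Schwarz justifies the conditioning. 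This gives
\[
\mathbb{E}[(Y-s(\mathbf X))^2]=\mathbb{E}[(Y-m(\mathbf X))^2]+\|m-s\|_{L_2(P_{\mathbf X})}^2 .
\]
The first term does not depend on $s$. Hence the minimizers of the risk over $\mathcal H(\{T_i\})$ are exactly the minimizers of $\|m-s\|$, i.e.\ the best approximations of $m$ in $\mathcal H(\{T_i\})$. Closedness and convexity of the class in the Hilbert space $L_2(P_{\mathbf X})$ ensure that this projection exists and that the fitted function $s^\star$ is unique, even if its parametrization is not. This part is routine.

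\textbf{Convergence.} Let $P_N$ be the empirical measure. The fixed-tree class is a sum of closed subspaces (or convex sets)
\[
V_{i,r}=\{\xb\mapsto \mathbf 1[\xbi\in\mathcal R_i^{(r)}]\,\phi(x_i)\},
\]
one per block, plus constants for $\beta_0$. The empirical risk $\frac1N\sum_n (y_n-\sum_{i,r} s_{i,r}(\xb_n))^2$ is therefore a convex quadratic in the blocks $(s_{i,r})$. An exact regional update of block $(i,r)$ is the $L_2(P_N)$ projection of the partial residual $y-\sum_{(j,q)\neq(i,r)} s_{j,q}$ onto $V_{i,r}$. Concretely, it is the conditional mean of the partial residual given $x_i$, restricted to samples with $\xbi\in\mathcal R_i^{(r)}$, and zero elsewhere. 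Because the regions of a single feature are disjoint, all blocks of feature $i$ can even be updated jointly.

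Cyclic backfitting is thus cyclic alternating projection, equivalently Gauss--Seidel or block coordinate descent, on a convex quadratic whose blocks are closed subspaces of the finite-dimensional space $L_2(P_N)\cong\mathbb R^N$. I would invoke the von Neumann--Halperin theorem: cyclic products of projections $\prod_k(I-P_k)$ converge strongly to the projection onto the orthogonal complement of $\sum_k V_k$. The sum is closed here because we are in finite dimensions. It follows that the residual converges to $y-\Pi_{\overline{\sum V_k}}\,y$, so the fitted values converge to the empirical projection of $y$, which is an empirical risk minimizer. This is the same argument Buja, Hastie and Tibshirani use for classical backfitting.

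\textbf{Main obstacle.} The delicate point is that the blocks overlap and are non-orthogonal, and the component functions themselves may be non-identifiable. Examples are constants shared across features, and a sum over $r$ of region-gated functions of $x_i$ that coincides with other blocks. I therefore expect to prove convergence of the fitted values $\hat y$, and hence of the risk, rather than of the individual $f_i^{(r)}$. If the class is only assumed convex and not a subspace, I would fall back on the general result for block coordinate minimization of a convex, continuously differentiable objective whose per-block minimizers are exact (Tseng, 2001). That result gives that every limit point is a minimizer, and coercivity of the quadratic in the fitted values yields convergence of the risk to the minimum.
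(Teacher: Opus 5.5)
Your proposal is correct and follows essentially the same route as the paper. Part 1 uses the identical decomposition $\mathbb{E}[(Y-s(\mathbf{X}))^2]=\mathbb{E}[(Y-m(\mathbf{X}))^2]+\mathbb{E}[(m(\mathbf{X})-s(\mathbf{X}))^2]$, with the cross term removed by $\mathbb{E}[Y-m(\mathbf{X})\mid\mathbf{X}]=0$. Part 2 makes the same reduction of exact regional updates to block minimization of a convex quadratic in the finite vector of fitted values.

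Your use of the von Neumann--Halperin cyclic-projection theorem on the residuals is a sharper instance of the paper's appeal to ``standard results for cyclic block coordinate descent'' (the classical backfitting argument). Its advantage is that it gives convergence of the fitted values without requiring the components to be identifiable. The paper instead relies on its centering condition for identifiability, and its device of shifting a region-wise mean into the global intercept does not in general leave fitted values unchanged when $R_i>1$. Working directly with fitted values, as you do, is therefore the safer choice.
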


\subsection{Interpretability of a CALM} \label{subsec:interpretability-properties}

CALMs offer interpretability similar to GAMs, as both require inspecting $d$ univariate plots; one per feature.
CALMs are slightly more complex: unlike GAMs, each plot can contain up to $2^{d_{\text{max}}}$ curves corresponding to a different region of the input space and with interaction-induced discontinuities marked by vertical lines.
However, because regions are interpretable (specified by simple threshold conditions), these region-specific curves provide explanations which are suitable for model auditing and decision support.
For example:
\textit{The effect of $\mathtt{age}$ on $\mathtt{mortality\_rate}$ follows this curve when the patient is $\mathtt{male}$ and has a $\mathtt{BMI}$ above 30.}

\begin{figure}
    \centering
    \includegraphics[width=0.45\textwidth]{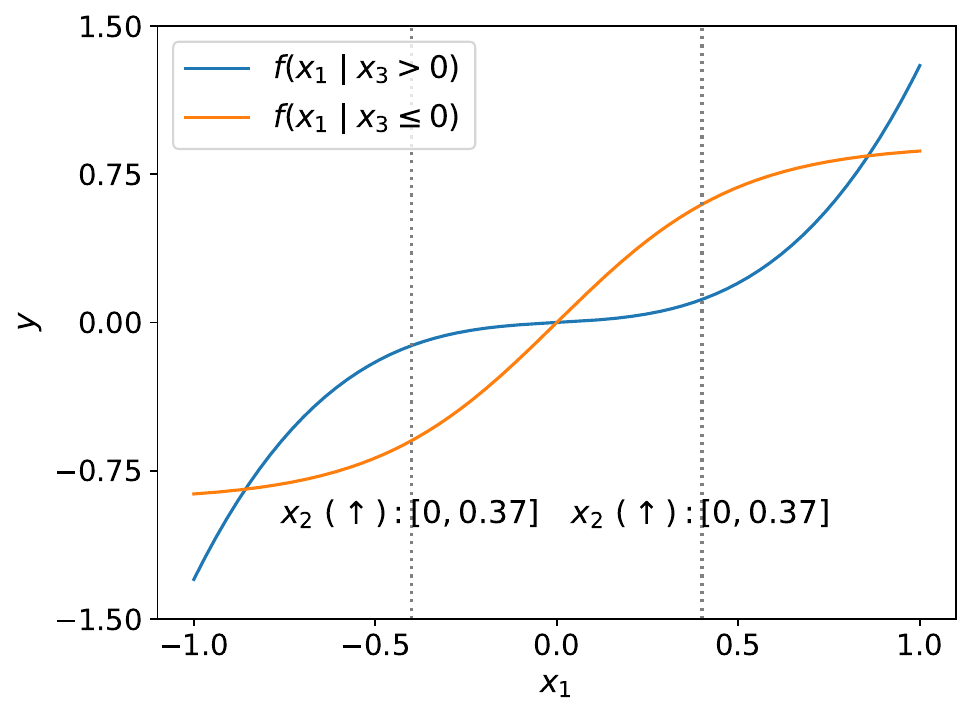}
    \caption{
    CALM plot for $x_1$. 
    Each curve shows the contribution of $x_1$ to $y$ (P1) in a different region of the input space ($x_3 \lessgtr 0$). 
    Vertical lines indicate interaction-induced discontinuities, e.g., at $x_1 \approx -0.4$ there exist a positive hidden jump of $[0, 0.37]$ due to interaction of $x_1$ with $x_2$, which must be considered when assessing regional sensitivity (P2) or global properties (P3).
    }
    \label{fig:concept-image-x1}
\end{figure}

\textbf{Interpreting a CALM plot.}
In Figure~\ref{fig:concept-image-x1} each curve gives the contribution of \( x_1 \) to \( y \) in a specific region;
the blue curve when \( x_3 > 0 \) and the orange curve when \( x_3 \leq 0 \).
For example, at \( x_1 = -0.5 \), the contribution is approximately \(-0.2\) (blue) or \(-0.75\) (orange), depending on \( x_3 \).
The plots also illustrate how altering \( x_1 \) to \(x_1 \rightarrow x_1 + \Delta x \) impacts the prediction.
Vertical dotted lines mark points of a hidden discontinuity which is due to $x_1$ participating as an interaction term for feature $x_2$. As shown in Figure~\ref{fig:concept-image}, the effect of \( x_2 \) is conditioned by \(x_1 \leq - 0.4 \), \(-0.4 \leq x_1 \leq 0.4 \) and \(x_1 > 0.4\), therefore in Figure~\ref{fig:concept-image-x1} we observe vertical lines in $x_1 \pm 0.4$.
If a change in $x_1$ does not cross a vertical line, the change in the output \( (\Delta y) \) equals the curve difference \( (\Delta f_i )\).
Crossing a line signifies a hidden jump, in the range $[\alpha, \beta]$, so 
$ \Delta f_i + \alpha \leq \Delta y \leq \Delta f_i + \beta$.
Arrows provide a fast understanding of the jump:
\(\uparrow\) means $\Delta y > \Delta f_i$,
\(\downarrow\) means $\Delta y < \Delta f_i$,
\(\updownarrow\) means it depends.

Below, we outline three crucial interpretability properties, along with discussion about their satisfiability by \gam, \gamm, and \method. Complete proofs are provided in Appendix~\ref{app:interpretability}.

\paragraph{P1. Local Feature Contribution:} What is the contribution of each feature to the prediction?\\
\textit{Formally:} Given an input $\xb$, how much does each $x_i$ contribute to $\hat y(\xb)$?\\
The explanation is \emph{local}—it concerns a specific input $\xb$.
In \gam, the contribution is \( f_i(x_i) \).
In \gamm, the contribution is neither explicit nor unique; it must be inferred post-hoc (e.g., via SHAP or LIME), with each method relying on different assumptions and yielding different results.
In \method, the contribution is \( f_i^{(r(\xbi))}(x_i) \).

\paragraph{P2. Regional Feature Sensitivity:} How does changing $x_i$ change the prediction? \\ 
\textit{Formally:} Given $x_i$ and $\Delta x>0$, what is $\Delta \hat{y} = \hat y(\xb +  \mathbf{e}_i \Delta x )-\hat y(\xb)$, assuming only $x_i$ is perturbed and $\xbi$ is fixed? \\
The explanation is \emph{regional}—it characterizes the effect of $x_i$ on a specific region ($[x_i, x_i + \Delta x]$) \emph{independently} of the values of other features.
In \gam, the change is $\Delta \hat{y} = f_i(x_i+\Delta x)-f_i(x_i)$.
In \gamm, the change cannot be determined without knowing the values of all features that interact with $x_i$. 
In \method, an exact answer is possible only when the perturbation does not cross a vertical line. 
In this case, the resulting change is a set of values $\Delta y := \{ \Delta f_i^{(r)}\}_{r=1}^{R_i}$, one for each curve in the plot: $\Delta f^{(r)} = f_i^{(r)}(x_i+\Delta x)-f_i^{(r)}(x_i)$. 
If a crossing occurs, an exact answer is not attainable, however, for less precise questions, such as whether the $\Delta x$ change in $x_i$ will have a positive impact on $y$, an answer is still feasible (see \textbf{P3.} below).

\paragraph{P3. Global Feature Property:} Is the model globally monotonic increasing with respect to $x_i$?\\ 
\textit{Formally:} For all $\xb$ and $\Delta x>0$, is $\Delta \hat{y} = \hat y(\xb + \mathbf{e}_i \Delta x )-\hat y(\xb) > 0$?\\
The explanation is \textit{global}—it assesses the monotonicity of $x_i$ across the entire input space.
In \gam, monotonicity is easily determined by the shape of $f_i(x_i)$.
In \gamm, verifying monotonicity requires a concurrent examination of the 1D shape of $f_i(x_i)$ along with all pairwise interactions $f_{ij}$ along all $j$, 
an inspection which is infeasible for a human.
In \method, if no vertical lines are present, simply check whether all curves $f_i^{(r)}(x_i)$ for $r=1, \ldots, R_i$ are monotonically increasing. If vertical lines exist, simply verify that all arrows are positive.

\subsection{Efficiency}

The efficiency of fitting a CALM to a dataset is mainly affected by the number of instances $N$ and the feature dimensionality $d$.

Step 1 involves fitting a black box model, which, in general, is computationally efficient. Our default setup uses an XGBoost model that fits in a few seconds.

Step 2 is the computational bottleneck as it requires fitting $d$ binary trees $T_i$, resulting in a worst-case cost of $\mathcal{O}(d_{\text{max}} d^2 N \log{N})$.
Since the trees are shallow ($d_{\text{max}}$ is small), this reduces to $\mathcal{O}(d^2 N \log{N})$, which is acceptable in practice given that $d$ is typically on the order of tens for tabular datasets.
Crucially, the local effects needed for evaluating candidate splits are computed once for the entire dataset and stored. 
Then, at each candidate split, the algorithm checks the precomputed effects that relate to the samples in the node (indexed lookups), avoiding repeated evaluations of the black-box model. 
While this does not change the asymptotic complexity, it makes each split evaluation computationally inexpensive and is critical for practical efficiency; in our experiments, this entire step completes in a few seconds.

Step 3 simply requires fitting the regional shape functions, which depends on the underlying fitting approach. Our default setup uses gradient boosting which is computationally efficient. 

Overall our method is efficient, typically fitting most tabular datasets within a few seconds, as shown by the runtimes reported in the Appendix~\ref{app:detailed-experiments}.

\section{Empirical Evaluation}
\label{sec:empirical-evaluation}
\begin{figure*}[t]
  \centering
  \begin{subfigure}[b]{0.33\textwidth}
    \includegraphics[width=\linewidth]{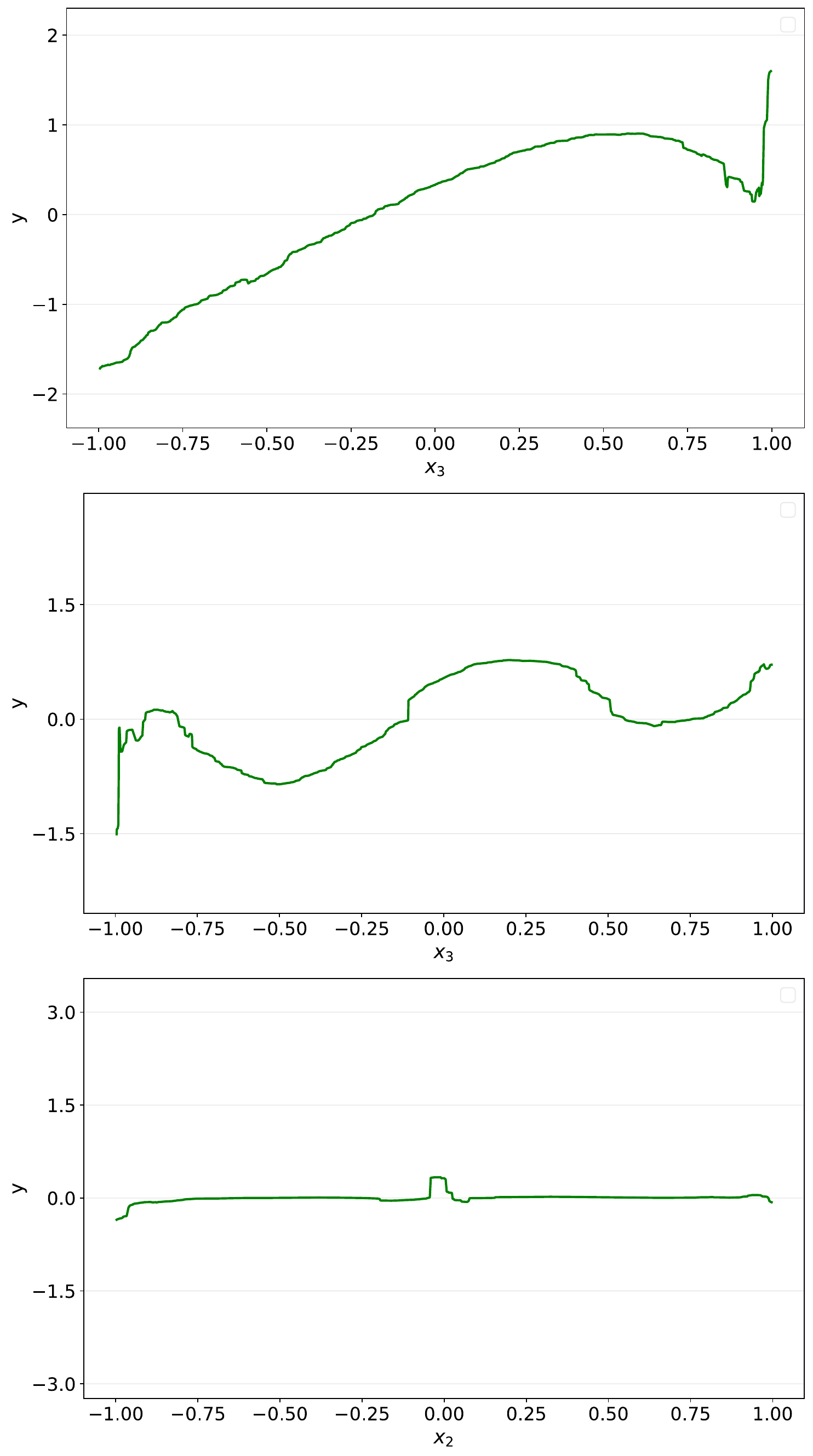}
    \caption{\gam{}}
    \label{fig:synth-gam}
  \end{subfigure}
  \hfill
  \begin{subfigure}[b]{0.33\textwidth}
    \includegraphics[width=\linewidth]{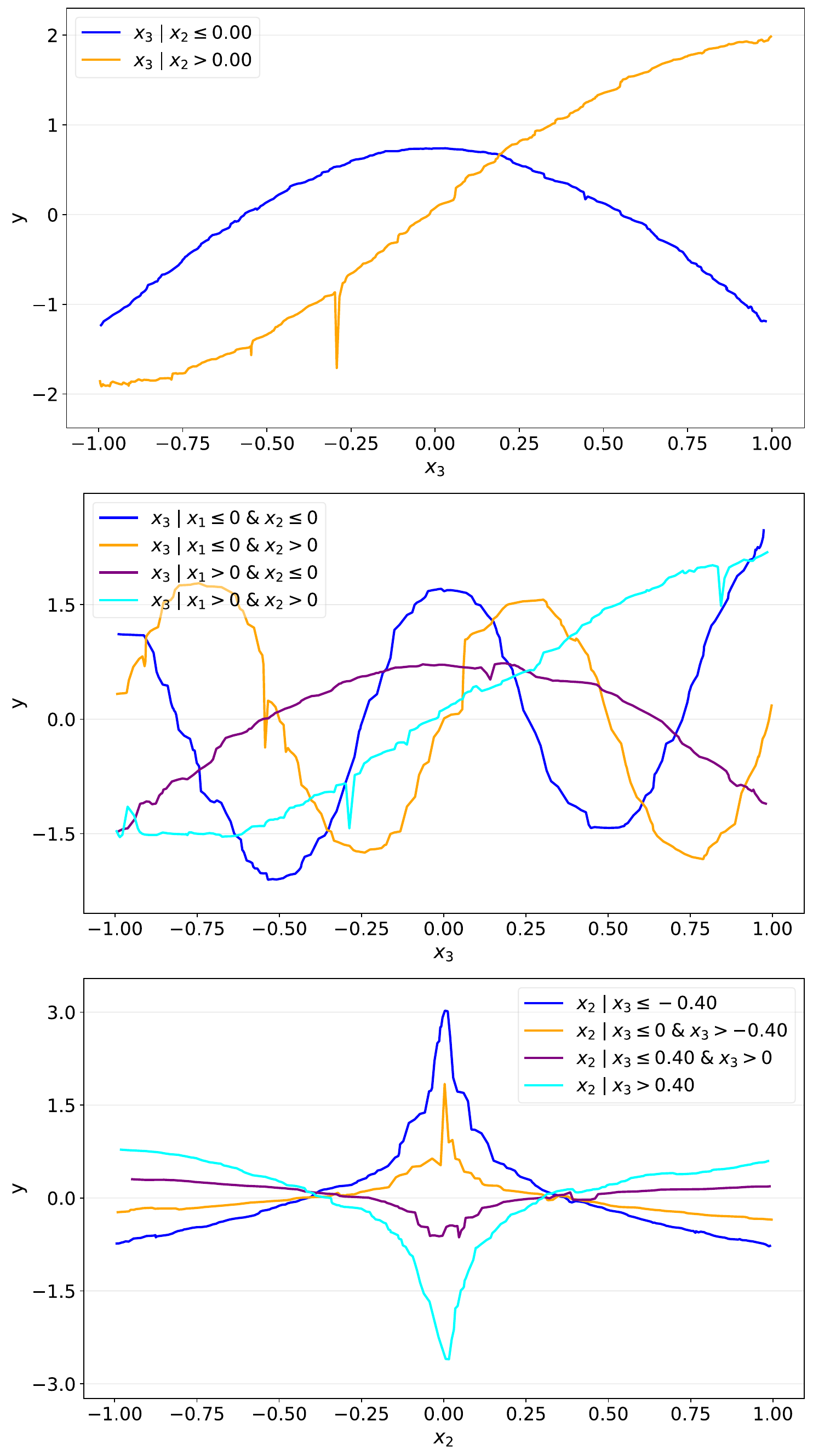}
    \caption{\method{}}
    \label{fig:synth-ram}
  \end{subfigure}%
  \hfill
  \begin{subfigure}[b]{0.33\textwidth}
    \includegraphics[width=\linewidth]{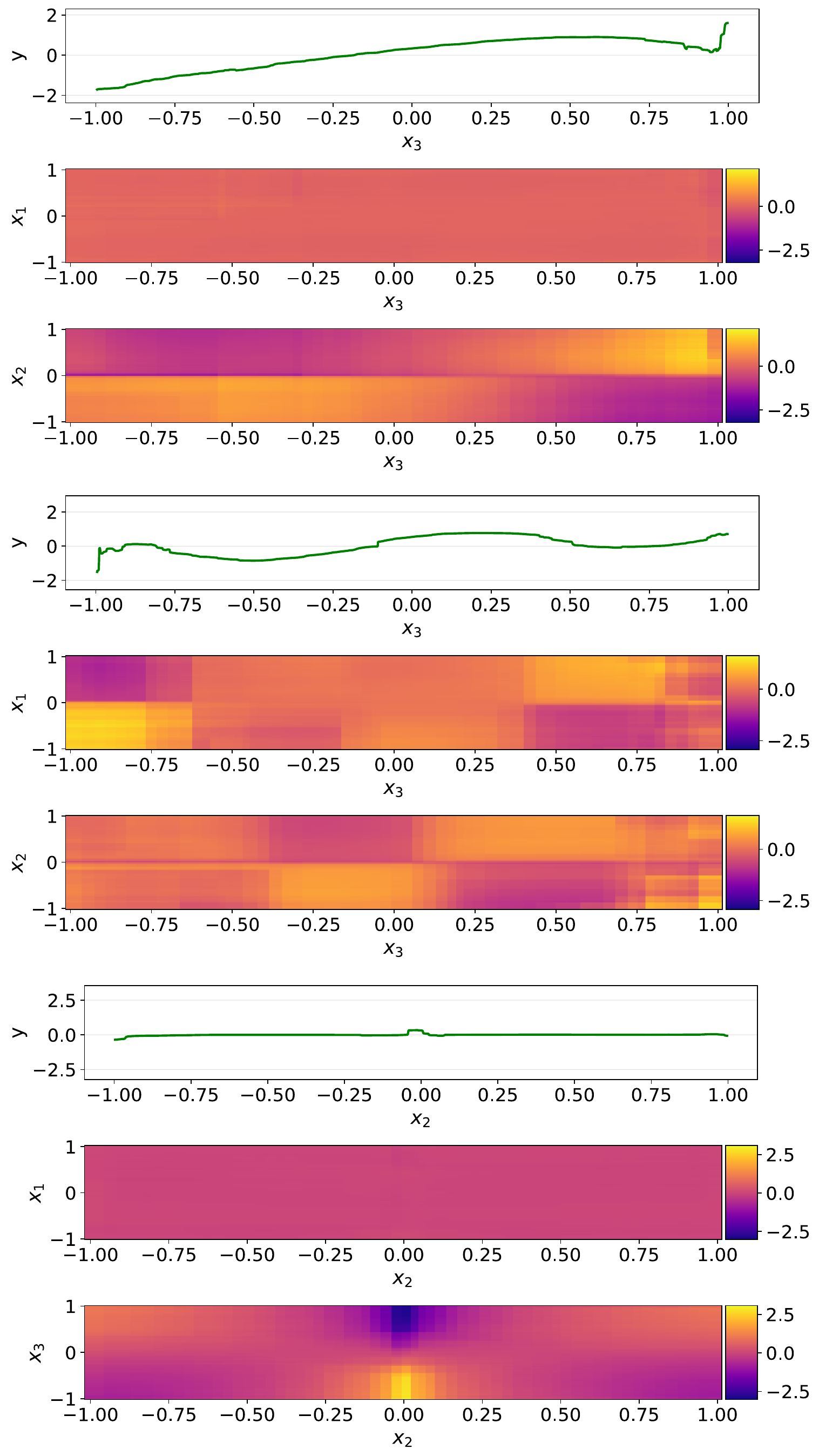}
    \caption{\gamm{}}
    \label{fig:synth-gam2}
  \end{subfigure}
\caption{Explanatory plots for the three synthetic regression tasks: top row (Task 1) shows two-region interaction; middle row (Task 2) shows four-region interaction; bottom row (Task 3) shows general interactions.}
  \label{fig:synth}
\end{figure*}

The empirical evaluation of CALM includes three synthetic regression datasets and 25 public real-world tabular datasets—10 for classification and 15 for regression.
Across all experiments, CALM is applied with its default configuration:
Step 1 uses XGBoost as the black-box model.
Step 2 uses PDP-based heterogeneity with $d_{\text{max}} = 2$, $\epsilon = 0.2$, and $K$ remains unconstrained.
Step 3 uses standard gradient boosting.  
All predictive results are reported as mean $\pm$ standard deviation over standard 5-fold cross-validation.
Full experimental details are in Appendix~\ref{app:detailed-experiments}.

\textbf{Implementation details.} We used well-known open-source implementations for all baseline models: \href{https://scikit-learn.org/stable/}{scikit-learn} (random forests), \href{https://xgboost.readthedocs.io/en/release_3.0.0/python/index.html}{XGBoost}, \href{https://www.tensorflow.org/}{tensorflow} (neural networks and NAM), \href{https://interpret.ml/docs/}{interpretml} (EBM, EB$^2$M), \href{https://pygam.readthedocs.io/en/latest/}{pygam} (SPLINE), and official repositories for \href{https://github.com/zzzace2000/nodegam}{NODE-GA$^2$M}\cite{chang2022node} and \href{https://github.com/ZebinYang/gaminet}{GAMI‐Net}\cite{yang2020gami}.

\subsection{Synthetic example}\label{sec:explanatory_example}

We compare \method{} against EBM (as the GAM baseline) and EB$^2$M (as the \gamm{} baseline) on three synthetic datasets.
Both EBM and EB$^2$M are used with their default parameters. 
Evaluation is based on $R^2$ performance.
Each dataset contains 1000 samples with features drawn from $x_i \sim \mathcal{U}(-1, 1)$. Table~\ref{tab:explanatory-example-r2} summarizes the accuracy of each approach.

\begin{table}[ht]
\centering
\caption{Synthetic examples: $R^2$ comparison (mean $\pm$ std. dev. over 5-fold CV).}
\label{tab:explanatory-example-r2}
\begin{tabular}{lccc}
\toprule
\textbf{Dataset} & GAM & \gamm & \method{} \\
\midrule
Case 1 & $0.737 \text{\scriptsize$\pm$0.028}$ & $0.974 \text{\scriptsize$\pm$0.011}$ & $0.995 \text{\scriptsize$\pm$0.002}$ \\
Case 2 & $0.479 \text{\scriptsize$\pm$0.052}$ & $0.712 \text{\scriptsize$\pm$0.036}$ & $0.949 \text{\scriptsize$\pm$0.024}$ \\
Case 3 & $0.527 \text{\scriptsize$\pm$0.023}$ & $0.961 \text{\scriptsize$\pm$0.009}$ & $0.975 \text{\scriptsize$\pm$0.006}$ \\
\bottomrule
\end{tabular}
\end{table}

\begin{table*}[t] 
\caption{Classification accuracy vs. baselines (mean $\pm$ std. dev. over 5-fold CV). \textbf{\ding{51}}/\textbf{\ding{55}} denotes higher/lower accuracy than the average GAM (or GA$^2$M) baseline.}
\label{tab:clf-acc}
\centering
\footnotesize 
\setlength{\tabcolsep}{3.5pt} 
\begin{tabular}{lccccccccc}
\toprule
\textbf{Dataset} & \textbf{BB} & \multicolumn{2}{c}{\textbf{GAM}} & \textbf{\method{}} & \multicolumn{3}{c}{\textbf{GA$^2$M}} & \multicolumn{2}{c}{\textbf{\method{} vs.}} \\
\cmidrule(lr){2-2} \cmidrule(lr){3-4} \cmidrule(lr){6-8} \cmidrule(lr){9-10}
& \textbf{XGB} & \textbf{NAM} & \textbf{EBM} & & \textbf{EB$^2$M} & \textbf{NODE} & \textbf{GAMI} & \textbf{GAM} & \textbf{GA$^2$M} \\
\midrule
Adult & $0.870 \text{\scriptsize$\pm$0.002}$ & $0.851 \text{\scriptsize$\pm$0.002}$ & $0.870 \text{\scriptsize$\pm$0.002}$ & $0.870 \text{\scriptsize$\pm$0.002}$ & $0.871 \text{\scriptsize$\pm$0.003}$ & $0.850 \text{\scriptsize$\pm$0.002}$ & $0.857 \text{\scriptsize$\pm$0.003}$ & \ding{51} & \ding{51} \\
COMPAS & $0.661 \text{\scriptsize$\pm$0.007}$ & $0.682 \text{\scriptsize$\pm$0.016}$ & $0.681 \text{\scriptsize$\pm$0.012}$ & $0.684 \text{\scriptsize$\pm$0.013}$ & $0.684 \text{\scriptsize$\pm$0.011}$ & $0.667 \text{\scriptsize$\pm$0.013}$ & $0.686 \text{\scriptsize$\pm$0.013}$ & \ding{51} & \ding{51} \\
HELOC & $0.717 \text{\scriptsize$\pm$0.013}$ & $0.723 \text{\scriptsize$\pm$0.011}$ & $0.728 \text{\scriptsize$\pm$0.014}$ & $0.728 \text{\scriptsize$\pm$0.012}$ & $0.730 \text{\scriptsize$\pm$0.012}$ & $0.715 \text{\scriptsize$\pm$0.008}$ & $0.725 \text{\scriptsize$\pm$0.012}$ & \ding{51} & \ding{51} \\
MIMIC2 & $0.890 \text{\scriptsize$\pm$0.001}$ & $0.886 \text{\scriptsize$\pm$0.001}$ & $0.886 \text{\scriptsize$\pm$0.003}$ & $0.886 \text{\scriptsize$\pm$0.003}$ & $0.886 \text{\scriptsize$\pm$0.003}$ & $0.888 \text{\scriptsize$\pm$0.002}$ & $0.884 \text{\scriptsize$\pm$0.003}$ & \ding{51} & \ding{51} \\
Appendicitis & $0.868 \text{\scriptsize$\pm$0.069}$ & $0.848 \text{\scriptsize$\pm$0.056}$ & $0.877 \text{\scriptsize$\pm$0.077}$ & $0.878 \text{\scriptsize$\pm$0.063}$ & $0.869 \text{\scriptsize$\pm$0.079}$ & $0.849 \text{\scriptsize$\pm$0.016}$ & $0.764 \text{\scriptsize$\pm$0.074}$ & \ding{51} & \ding{51} \\
Phoneme & $0.898 \text{\scriptsize$\pm$0.006}$ & $0.808 \text{\scriptsize$\pm$0.002}$ & $0.821 \text{\scriptsize$\pm$0.007}$ & $0.861 \text{\scriptsize$\pm$0.011}$ & $0.863 \text{\scriptsize$\pm$0.007}$ & $0.869 \text{\scriptsize$\pm$0.003}$ & $0.876 \text{\scriptsize$\pm$0.009}$ & \ding{51} & \ding{55} \\
SPECTF & $0.862 \text{\scriptsize$\pm$0.029}$ & $0.839 \text{\scriptsize$\pm$0.030}$ & $0.894 \text{\scriptsize$\pm$0.015}$ & $0.894 \text{\scriptsize$\pm$0.015}$ & $0.882 \text{\scriptsize$\pm$0.017}$ & $0.820 \text{\scriptsize$\pm$0.067}$ & $0.874 \text{\scriptsize$\pm$0.016}$ & \ding{51} & \ding{51} \\
Magic & $0.885 \text{\scriptsize$\pm$0.004}$ & $0.850 \text{\scriptsize$\pm$0.006}$ & $0.857 \text{\scriptsize$\pm$0.005}$ & $0.864 \text{\scriptsize$\pm$0.004}$ & $0.872 \text{\scriptsize$\pm$0.003}$ & $0.876 \text{\scriptsize$\pm$0.004}$ & $0.874 \text{\scriptsize$\pm$0.003}$ & \ding{51} & \ding{55} \\
Bank & $0.908 \text{\scriptsize$\pm$0.003}$ & $0.901 \text{\scriptsize$\pm$0.003}$ & $0.902 \text{\scriptsize$\pm$0.002}$ & $0.905 \text{\scriptsize$\pm$0.002}$ & $0.909 \text{\scriptsize$\pm$0.002}$ & $0.903 \text{\scriptsize$\pm$0.001}$ & $0.908 \text{\scriptsize$\pm$0.003}$ & \ding{51} & \ding{55} \\
Churn & $0.958 \text{\scriptsize$\pm$0.004}$ & $0.885 \text{\scriptsize$\pm$0.009}$ & $0.886 \text{\scriptsize$\pm$0.005}$ & $0.946 \text{\scriptsize$\pm$0.003}$ & $0.957 \text{\scriptsize$\pm$0.009}$ & $0.954 \text{\scriptsize$\pm$0.009}$ & $0.952 \text{\scriptsize$\pm$0.007}$ & \ding{51} & \ding{55} \\
\bottomrule
\bottomrule
\end{tabular}
\end{table*}

\begin{table*}[t]
\caption{Regression RMSE vs. baselines (mean $\pm$ std. dev. over 5-fold CV). \textbf{\ding{51}}/\textbf{\ding{55}} denotes lower/higher RMSE than the average GAM (or \gamms) baseline.}
\label{tab:regr-rmse}
\centering
\footnotesize 
\setlength{\tabcolsep}{3pt} 
\begin{tabular}{lccccccccc}
\toprule
\textbf{Dataset} & \textbf{BB} & \multicolumn{2}{c}{\textbf{GAM}} & \textbf{\method{}} & \multicolumn{3}{c}{\textbf{GA$^2$M}} & \multicolumn{2}{c}{\textbf{\method{} vs.}} \\
\cmidrule(lr){2-2} \cmidrule(lr){3-4} \cmidrule(lr){6-8} \cmidrule(lr){9-10}
& \textbf{XGB} & \textbf{NAM} & \textbf{EBM} & & \textbf{EB$^2$M} & \textbf{NODE} & \textbf{GAMI} & \textbf{GAM} & \textbf{GA$^2$M} \\
\midrule
Bike Sharing & $39.35 \text{\scriptsize$\pm$1.38}$ & $101.97 \text{\scriptsize$\pm$1.31}$ & $100.21 \text{\scriptsize$\pm$1.01}$ & $55.67 \text{\scriptsize$\pm$1.22}$ & $54.80 \text{\scriptsize$\pm$0.96}$ & $54.47 \text{\scriptsize$\pm$1.58}$ & $53.44 \text{\scriptsize$\pm$1.90}$ & \ding{51} & \ding{55} \\
California Housing & $0.45 \text{\scriptsize$\pm$0.01}$ & $0.61 \text{\scriptsize$\pm$0.01}$ & $0.55 \text{\scriptsize$\pm$0.01}$ & $0.51 \text{\scriptsize$\pm$0.01}$ & $0.49 \text{\scriptsize$\pm$0.01}$ & $0.50 \text{\scriptsize$\pm$0.01}$ & $0.51 \text{\scriptsize$\pm$0.04}$ & \ding{51} & \ding{55} \\
Parkinsons Motor & $1.44 \text{\scriptsize$\pm$0.09}$ & $6.11 \text{\scriptsize$\pm$0.16}$ & $4.20 \text{\scriptsize$\pm$0.09}$ & $2.24 \text{\scriptsize$\pm$0.13}$ & $2.35 \text{\scriptsize$\pm$0.06}$ & $3.49 \text{\scriptsize$\pm$0.31}$ & $2.76 \text{\scriptsize$\pm$0.31}$ & \ding{51} & \ding{51} \\
Parkinsons Total & $1.86 \text{\scriptsize$\pm$0.08}$ & $7.90 \text{\scriptsize$\pm$0.10}$ & $4.85 \text{\scriptsize$\pm$0.11}$ & $2.97 \text{\scriptsize$\pm$0.09}$ & $2.77 \text{\scriptsize$\pm$0.06}$ & $4.60 \text{\scriptsize$\pm$0.53}$ & $3.81 \text{\scriptsize$\pm$0.66}$ & \ding{51} & \ding{51} \\
Seoul Bike & $209.6 \text{\scriptsize$\pm$3.47}$ & $320.2 \text{\scriptsize$\pm$4.38}$ & $303.7 \text{\scriptsize$\pm$3.86}$ & $238.9 \text{\scriptsize$\pm$1.64}$ & $235.2 \text{\scriptsize$\pm$1.58}$ & $231.0 \text{\scriptsize$\pm$4.23}$ & $245.82 \text{\scriptsize$\pm$8.45}$ & \ding{51} & \ding{55} \\
Wine & $0.62 \text{\scriptsize$\pm$0.01}$ & $0.72 \text{\scriptsize$\pm$0.01}$ & $0.70 \text{\scriptsize$\pm$0.01}$ & $0.69 \text{\scriptsize$\pm$0.02}$ & $0.68 \text{\scriptsize$\pm$0.01}$ & $0.69 \text{\scriptsize$\pm$0.01}$ & $0.71 \text{\scriptsize$\pm$0.00}$ & \ding{51} & \ding{55} \\
Energy & $67.97 \text{\scriptsize$\pm$2.58}$ & $89.80 \text{\scriptsize$\pm$2.67}$ & $85.23 \text{\scriptsize$\pm$2.49}$ & $83.09 \text{\scriptsize$\pm$1.97}$ & $77.33 \text{\scriptsize$\pm$2.46}$ & $82.18 \text{\scriptsize$\pm$2.53}$ & $180.1 \text{\scriptsize$\pm$84.09}$ & \ding{51} & \ding{51} \\
CCPP & $3.09 \text{\scriptsize$\pm$0.09}$ & $4.21 \text{\scriptsize$\pm$0.05}$ & $3.44 \text{\scriptsize$\pm$0.08}$ & $3.42 \text{\scriptsize$\pm$0.07}$ & $3.28 \text{\scriptsize$\pm$0.07}$ & $3.97 \text{\scriptsize$\pm$0.07}$ & $3.92 \text{\scriptsize$\pm$0.07}$ & \ding{51} & \ding{51} \\
Electrical & $0.04 \text{\scriptsize$\pm$0.02}$ & $0.04 \text{\scriptsize$\pm$0.01}$ & $0.02 \text{\scriptsize$\pm$0.01}$ & $0.02 \text{\scriptsize$\pm$0.01}$ & $0.02 \text{\scriptsize$\pm$0.01}$ & $0.01 \text{\scriptsize$\pm$0.01}$ & $0.02 \text{\scriptsize$\pm$0.01}$ & \ding{51} & \ding{55} \\
Elevators & $0.00 \text{\scriptsize$\pm$0.00}$ & $0.00 \text{\scriptsize$\pm$0.00}$ & $0.00 \text{\scriptsize$\pm$0.00}$ & $0.00 \text{\scriptsize$\pm$0.00}$ & $0.00 \text{\scriptsize$\pm$0.00}$ & $0.00 \text{\scriptsize$\pm$0.00}$ & $0.00 \text{\scriptsize$\pm$0.00}$ & \ding{51} & \ding{51} \\
No2 & $0.47 \text{\scriptsize$\pm$0.03}$ & $0.50 \text{\scriptsize$\pm$0.02}$ & $0.49 \text{\scriptsize$\pm$0.03}$ & $0.49 \text{\scriptsize$\pm$0.04}$ & $0.47 \text{\scriptsize$\pm$0.03}$ & $0.52 \text{\scriptsize$\pm$0.02}$ & $0.50 \text{\scriptsize$\pm$0.03}$ & \ding{51} & \ding{51} \\
Sensory & $0.51 \text{\scriptsize$\pm$0.03}$ & $0.48 \text{\scriptsize$\pm$0.01}$ & $0.48 \text{\scriptsize$\pm$0.01}$ & $0.45 \text{\scriptsize$\pm$0.02}$ & $0.45 \text{\scriptsize$\pm$0.02}$ & $0.49 \text{\scriptsize$\pm$0.01}$ & $0.46 \text{\scriptsize$\pm$0.03}$ & \ding{51} & \ding{51} \\
Airfoil & $1.54 \text{\scriptsize$\pm$0.10}$ & $4.80 \text{\scriptsize$\pm$0.20}$ & $4.57 \text{\scriptsize$\pm$0.15}$ & $2.50 \text{\scriptsize$\pm$0.15}$ & $2.17 \text{\scriptsize$\pm$0.09}$ & $2.13 \text{\scriptsize$\pm$0.11}$ & $4.79 \text{\scriptsize$\pm$0.21}$ & \ding{51} & \ding{51} \\
Skill Craft & $0.94 \text{\scriptsize$\pm$0.02}$ & $0.93 \text{\scriptsize$\pm$0.03}$ & $0.90 \text{\scriptsize$\pm$0.02}$ & $0.90 \text{\scriptsize$\pm$0.02}$ & $0.90 \text{\scriptsize$\pm$0.03}$ & $0.91 \text{\scriptsize$\pm$0.02}$ & $1.38 \text{\scriptsize$\pm$0.80}$ & \ding{51} & \ding{51} \\
Ailerons & $0.00 \text{\scriptsize$\pm$0.00}$ & $0.00 \text{\scriptsize$\pm$0.00}$ & $0.00 \text{\scriptsize$\pm$0.00}$ & $0.00 \text{\scriptsize$\pm$0.00}$ & $0.00 \text{\scriptsize$\pm$0.00}$ & $0.00 \text{\scriptsize$\pm$0.00}$ & $0.00 \text{\scriptsize$\pm$0.00}$ & \ding{51} & \ding{51} \\
\bottomrule
\bottomrule
\end{tabular}
\end{table*}

\paragraph{Case 1.} We set $
y = x_1^2 + \log(|x_2|) +
2\sin\left(\frac{\pi}{2} x_3\right) \mathbf{1}_{x_2 \geq 0} + 2\cos\left(\frac{\pi}{2} x_3\right) \mathbf{1}_{x_2 < 0}
$.
Since \gam cannot model the $x_2-x_3$ interaction, it simplifies the $x_3 \to y$ effect by averaging the sine and cosine modes into one curve (Fig.~\ref{fig:synth-gam}, top row), resulting in $R^2 = 0.737$.
Both \gamm~and \method~capture the interaction correctly, achieving near-perfect accuracy ($0.974$ and $0.995$). However, their interpretability differs.
\gamm~requires three views to interpret $x_3$'s effect: (i) the main effect averaging the sine and cosine cases, (ii) a near-zero $x_1$–$x_3$ heatmap, and (iii) an $x_2$–$x_3$ heatmap capturing the sign-dependent interaction (Fig.~\ref{fig:synth-gam2}, top row).
In contrast, \method~simplifies the interpretation with two 1D plots and a clear separation of the regimes by conditioning on $x_2$'s sign (Fig.~\ref{fig:synth-ram}, top row).

\paragraph{Case 2.} We set: \( y = x_0^2 + \log(|x_1|) + 2\big[ \sin\left(\frac{\pi}{2} x_2\right) \mathbf{1}_{x_0 \geq 0,\, x_1 \geq 0} + \cos\left(\frac{\pi}{2} x_2\right) \mathbf{1}_{x_0 \geq 0,\, x_1 < 0} + \sin(2\pi x_2) \mathbf{1}_{x_0 < 0,\, x_1 \geq 0} + \cos(2\pi x_2) \mathbf{1}_{x_0 < 0,\, x_1 < 0} \big] \).
\gam{} (Fig.~\ref{fig:synth-gam}, middle row) merges all four modes into a single curve, limiting its accuracy to $R^2 = 0.479$.
\gamm{} (Fig.~\ref{fig:synth-gam2}, middle row) captures partial structure through two 2D interactions ($x_1$–$x_3$ and $x_2$–$x_3$) but misses the full 3-way interaction, reaching $R^2 = 0.712$. 
Furthermore, the model's behavior is hard to grasp, as it requires integrating information from three distinct plots: the main effect and two interaction heatmaps.
\method{} (Fig.~\ref{fig:synth-ram}, middle row) separates the four regimes into distinct 1D curves, achieving both high accuracy ($R^2 = 0.949$) and clear interpretability.

\paragraph{Case 3.} We set $y = x_1^2 + \log(|x_2|) \sin(\frac{\pi}{2} x_3)$, 
where the logarithmic effect of $x_2$, \(\log|x_2|\), is modulated by the sinusoidal term $\sin(\frac{\pi}{2} x_3)$, resulting in a general interaction structure.
\gam{} (Fig.~\ref{fig:synth-gam}, bottom row) fails to capture the interaction and instead fits a blurred sinusoidal curve, leading to low accuracy ($R^2 = 0.527$).
\gamm{} (Fig.~\ref{fig:synth-gam2}, bottom row) achieves high accuracy ($R^2 = 0.961$) by capturing the interaction in the $x_2$–$x_3$ heatmap. Still, interpretation remains difficult, as understanding the $x_2 \to y$ effect requires integrating three views: the main effect plot and two interaction heatmaps.
\method{} (Fig.~\ref{fig:synth-ram}, bottom row) cannot fully express the continuous $x_2$–$x_3$ interaction, but approximates it by partitioning $x_2$ and assigning each segment an average logarithmic response.
While simplified, this yields a high accuracy ($R^2 = 0.975$) and a concise, transparent explanation that remains \emph{close} to the underlying behavior.

\subsection{Evaluation on Real Datasets}
\label{sec:real-datasets}

On each real dataset, we evaluate \method{} against three black-box baselines (neural networks, random forests, and XGBoost), three \gam{}models (NAM, EBM, and SPLINE), and three \gamm{} models: EB$^2$M (EBM with pairwise interactions enabled), NODE-GA$^2$M, and GAMI‐Net. Evaluation is based on accuracy for classification tasks and RMSE for regression datasets.

\textbf{Classification Results.}
On classification datasets (Table~\ref{tab:clf-acc}), \method{}: 
(i) outperforms the average performance of \gam{}s in 9/10 datasets and matches it on the remaining one; 
(ii) outperforms the average performance of \gamm{}s on 5/10 datasets and matches it on one case;
(iii) outperforms the black-box model on 4/10 datasets of the datasets and matches it on one. 

\textbf{Regression Results.}
On regression datasets (Table~\ref{tab:regr-rmse}), \method{}: 
(i) outperforms the average performance of \gam{}s in 14/15 datasets and matches it on the remaining one; 
(ii) outperforms the average performance of \gamm{}s on 8/15 datasets and matches it on two;
(iii) outperforms the black-box model on 3/10 datasets of the datasets and matches it on two. 

\textbf{Model Complexity.}
In addition to its strong predictive performance, \method{} achieves its results using remarkably few pairwise feature interactions, 6.2 on average for classification and 15.1 for regression. On classification tasks, this is fewer than EB$^2$M (15.2), GAMI-Net (17.6), and NODE-GA$^2$M (97.7). On regression tasks, \method{} remains sparser than GAMI-Net (16.3) and NODE-GA$^2$M (87.3), and close to EB$^2$M (14.1), as shown in Tables~\ref{tab:classification_interactions} and~\ref{tab:regression_interactions} of Appendix~\ref{app:detailed-experiments}. While most \gamm{} methods allow explicit control over the number of interactions, we apply them using their default configurations. In contrast, \method{} does not impose an interaction cap, but still discovers a sparse structure through region-based conditioning.

\textbf{Runtime.}
On classification tasks, \method{} is slightly slower than EBM and EB$^2$M, but noticeably faster than the other two \gamm{} models (GAMI-Net and NODE-GA$^2$M), and even faster than NAM, despite the fact that NAM does not model interactions. In regression tasks, \method{} exhibits a runtime similar to NAM and EB$^2$M, and remains significantly faster than the remaining \gamm{} baselines. Average runtimes across datasets are reported in Tables~\ref{tab:clf-time-def} and~\ref{tab:regr-time-def} (Appendix~\ref{app:detailed-experiments}).

\textbf{Summary.}
Across both classification and regression tasks, \method{} consistently outperforms the average \gam{} models and achieves comparable performance to GAM models in most cases. Remarkably, it surpasses the best individual \gamm{} on selected datasets. These gains are achieved with fewer interactions (highlighting the model’s inherent sparsity) and with practical runtimes, significantly lower than those of complex \gamm{} architectures. Together, these results demonstrate that \method{} strikes an effective balance between  accuracy, simplicity, and efficiency.


\section{Conclusions}
\label{sec:limitations}
We introduced CALM, a novel class of interpretable-by-design models that bridge the gap between the interpretability of GAMs and the accuracy of \gamms. 
CALMs model feature interactions using conditional feature effects, a set of univariate shape functions per feature, conditioned on its interacting features.
Extensive evaluation shows that conditional effects are sufficient to preserve, or even exceed, the accuracy of \gamms~(which only model bivariate interactions), without resorting to 3D plots or heatmaps. 

CALM main limitations is that interpretability may harden as the number of interactions increases. Although each feature is linked to a maximum of $2^{d_{\text{max}}} = 4$ shape functions, extensive cross-feature dependencies can clutter plots with vertical lines, making them harder to read. Furthermore, the method’s accuracy is strictly tied to the quality of the underlying black-box and additive models; if either underperforms, the entire predictive output suffers.


\bibliography{bibliography}
\bibliographystyle{icml2026}

\newpage
\appendix
\onecolumn
\section{Conditionally Additive Local Models (CALMs)}
\label{app:learning-rams}

In the main paper (Algorithm~\ref{alg:calm-training}), we summarize the procedure of fitting a Conditionally Additive Local Model (CALM):
\[
f_{\texttt{CALM}}(\xb) = g^{-1} \left (  \beta_0 + \sum_{i=1}^d f_i^{r_i(\xbi)}(x_i) \right ),
\]

as defined in Eq.\eqref{eq:CALM}, to a dataset $\mathcal{D} = \{\xb^{(i)}, y^{(i)}\}_{i=1}^N$ in three main steps: (i) fit a high‑capacity reference model \(f_{\mathtt{ref}}\) on $\mathcal{D}$, (ii) fit a partition tree $T_i$  using an interaction-related heterogeneity measure $H_i$ for each feature \(i = 1, \ldots,D\), and (iii) estimate region‑specific effects \( \{ f_i^{(r)} \}_{r=1}^{R_i} \) for each feature \(i = 1, \ldots,D\).
We provide additional details for each step below.

\subsection{Step 1: Fit a high‑capacity reference model \(f_{\mathtt{ref}}\) on $\mathcal{D}$}

The initial stage involves fitting a high-capacity predictive model, denoted as \(f_{\mathtt{ref}}: \mathbb{R}^D \rightarrow \mathbb{R}\) to dataset \(\mathcal{D} = \{(\mathbf{x}^{(i)}, y^{(i)})\}_{i=1}^N\). 
The reference model serves as an accurate functional approximation of the underlying relationship between the input features \(\mathbf{x}\) and the response variable \(y\).
The choice of \(f_{\mathtt{ref}}\) is flexible; any sufficiently expressive and accurate model can be employed. Commonly used options include gradient-boosted decision trees (e.g., XGBoost, which we adopt by default), deep neural networks, random forests, or more recent architectures such as TabPFN~\cite{hollmann2022tabpfn}. 

\subsection{Step 2: Fit a partition tree $T_i$ using heterogeneity $H_i$ for each feature}

We first define a suitable heterogeneity measure $H_i$ (see Section~\ref{subsec:heterogeneity-measures}), and then explore the relationship between heterogeneity and CALM's error in approximating $f_{\texttt{ref}}$. Motivated by these results, we fit a partition tree \(T_i\) for each feature \(i\) (see Section~\ref{subsec:partition-tree}). 
Finally, if the user specifies a constraint \(K\) on the maximum number of interactions, we apply a pruning step to retain the \(K\) most significant interactions (see Section~\ref{subsec:pruning}).

\subsubsection{Heterogeneity Measures $H_i$}
\label{subsec:heterogeneity-measures}

Interaction-related heterogeneity $H_i$ quantifies the extent to which a feature $x_i$ interacts with all other features \(x_j\), where \(j \in \{1, \ldots, d\} \setminus \{i\}\).
The computation of $H_i$ is based on the variance of local effects: $h(x_i , \xbi^{(j)})$.

\paragraph{Local Effect.}
We define $h(x_i , \xbi^{(j)})$ as the local effect of feature $x_i$ when applied to the $j$-th background observation $\xb_{-i}^{(j)}$.
That is, if we take the $j$-th observation and substitute the $i$-th feature with the value $x_i$, the resulting change (local effect) in the model output is captured by $h(x_i , \xbi^{(j)})$.
The computation of $h(\cdot)$ relies on two components: (i) a reference teacher model $f_{\mathtt{ref}}$ and (ii) a background dataset $\{\mathbf{x}^{(i)}\}_{i=1}^N$ containing input features only. 
The specific form of $h(\cdot)$ depends on the chosen feature effect method.
Several methods can be used, including Partial Dependence Plot (PDP), SHAP-Dependence Plot (SHAP-DP) and Robust and Heterogeneity-aware ALE (RHALE).

Below, we provide the definition of $h$ in the case of PDP, which we adopt as the default method in our experiments.
Let \(f_{\mathtt{ref}}\) be the reference model and \(\{\mathbf{x}^{(j)}\}_{j=1}^N\) the background dataset. 
The PDP-based heterogeneity is defined as:

\begin{equation}
h\bigl(x_i,\,\mathbf{x}_{-i}^{(j)}\bigr) = 
f_{\mathtt{ref}}\bigl(x_i,\,\mathbf{x}_{-i}^{(j)}\bigr)
- c(\mathbf{x}_{-i})
\end{equation}

where $c(\mathbf{x}^{(j)}_{-i})= \mathbb{E}_{X_i} \left [ f_{\mathtt{ref}}\bigl(X_i,\,\mathbf{x}_{-i}^{(j)}\bigr) \right ]$ is a centering constant.
For details on alternative approaches, we refer the reader to~\cite{herbinger2024GADGET, herbinger2022REPID}.

\paragraph{Point-wise heterogeneity $H_i(x_i)$ and Heterogeneity $H_i$.}

The pointwise heterogeneity is then defined as 
\begin{equation}
H_i(x_i) = \mathbb{E}_{\mathbf{X}_{-i}}\left[\left(h(x_i, \mathbf{X}_{-i}) - \mu_i(x_i)\right)^2\right]
\end{equation}
where $\mu_i(x_i) = \mathbb{E}_{\mathbf{X}_{-i}}[h(x_i, \mathbf{X}_{-i})]$ is the mean local effect. Inside a region, this heterogeneity takes the form of Eq. \eqref{eq:pointwiseh}. Feature-level heterogeneity is 
\begin{equation}
H_i = \mathbb{E}_{X_i}\left[H_i(X_i)\right] = \mathbb{E}_{X_i}\left[\text{Var}_{\mathbf{X}_{-i}}[h(X_i, \mathbf{X}_{-i})]\right]
\end{equation}
Inside a region this definition takes the form of Eq. \eqref{eq:heterogeneity}.

A more formal motivation for the use of heterogeneity in CALM comes from the proposition of the following section.

\subsubsection{Heterogeneity and CALM approximation error}

\begin{proposition}
    Let $\mathcal{E} = \mathbb{E}_{\mathbf{X}}\left[\left(f_{\texttt{ref}}(\mathbf{X}) - f_{\texttt{CALM}}(\mathbf{X})\right)^2\right]$ be the mean squared error of the CALM approximation of $f_{\texttt{ref}}$. Assume that $f_{\texttt{ref}}: \mathcal{X} \to \mathbb{R}$ admits a multivariate regionally additive approximation
\begin{equation*}
f_{\texttt{ref}}(\mathbf{x}) \approx \beta_0 + \sum_{i = 1}^dh_i^{r_i(\mathbf{x}_{-i})}(\mathbf{x})
\end{equation*}
such that each $h_i^{r_i(\mathbf{x}_{-i})}$ captures the contribution of the $i$-th feature to the function's output (including interactions). Also, assume that there is no error in fitting of the univariate CALM shape functions. Then, 
\begin{equation}
\min_{f_{\texttt{CALM}}} \mathcal{E} \leq d\sum_{i=1}^{d}\left(\sum_{r=1}^{R_i}P_{\mathcal{R}_i^{(r)}}H^{\mathcal{R}_i^{(r)}}_i\right)
\label{eq:error_bound}
\end{equation}
where region $\mathcal{R}_i^{(r)}$ is the $r$-th region of the $i$-th feature, $P_{\mathcal{R}_i^{(r)}}$ is the probability mass of the region in the data and $H^{\mathcal{R}_i^{(r)}}_i$ is the heterogeneity of $h_i^{\mathcal{R}_i^{(r)}}(\mathbf{x})$.
\label{prop:step2}
\end{proposition}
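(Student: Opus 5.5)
Under the stated idealization I would treat the regionally additive representation as exact, $f_{\texttt{ref}}(\mathbf{x}) = \beta_0 + \sum_{i=1}^d h_i^{r_i(\mathbf{x}_{-i})}(\mathbf{x})$, and then exhibit one particular CALM whose error is at most the right-hand side of \eqref{eq:error_bound}. Since the minimum over $f_{\texttt{CALM}}$ is no larger than the error of any candidate, this is enough. For the candidate I take the regional mean local effects. For each feature $i$ and region $\mathcal{R}=\mathcal{R}_i^{(r)}$, set
\[
f_i^{(r)}(x_i) := \mu_i^{\mathcal{R}}(x_i) = \mathbb{E}_{\mathbf{X}_{-i}\mid \mathbf{X}_{-i}\in\mathcal{R}}\bigl[h_i^{(r)}(x_i,\mathbf{X}_{-i})\bigr],
\]
where $h_i^{(r)}(x_i,\mathbf{x}_{-i})$ plays the role of the local effect $h$ in \eqref{eq:pointwiseh}. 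I keep the same intercept $\beta_0$ and use an identity link (squared loss on the score scale). The assumption of no fitting error in the univariate shape functions is what allows these exact conditional means to be realized as CALM components.

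\textbf{Step 1 (pointwise decomposition).} With this choice,
\[
f_{\texttt{ref}}(\mathbf{x}) - f_{\texttt{CALM}}(\mathbf{x}) = \sum_{i=1}^d \Delta_i(\mathbf{x}), \qquad \Delta_i(\mathbf{x}) := h_i^{r_i(\mathbf{x}_{-i})}(\mathbf{x}) - \mu_i^{\mathcal{R}_i^{(r_i(\mathbf{x}_{-i}))}}(x_i).
\]
The intercepts cancel, and the two sums use the same region index $r_i(\mathbf{x}_{-i})$, so they line up term by term.

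\textbf{Step 2 (Cauchy--Schwarz).} Applying $\bigl(\sum_{i=1}^d a_i\bigr)^2 \le d\sum_{i=1}^d a_i^2$ pointwise and taking expectations gives
\[
\mathcal{E} \le d\sum_{i=1}^d \mathbb{E}\bigl[\Delta_i(\mathbf{X})^2\bigr].
\]
This is where the factor $d$ in the bound comes from. It also explains why I do not try to use cross-term orthogonality: the $\Delta_i$ are centered only conditionally on their own feature's regions, which differ across features, so the cross terms need not vanish.

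\textbf{Step 3 (per-feature term).} For fixed $i$, I use the law of total expectation over the partition $\mathcal{P}_i$:
\[
\mathbb{E}\bigl[\Delta_i^2\bigr] = \sum_{r=1}^{R_i} P_{\mathcal{R}_i^{(r)}}\, \mathbb{E}\bigl[\Delta_i^2 \mid \mathbf{X}_{-i}\in\mathcal{R}_i^{(r)}\bigr].
\]
Inside a region, I condition further on $X_i$. The inner expectation over $\mathbf{X}_{-i}$ is exactly the pointwise heterogeneity $H_i^{\mathcal{R}}(x_i)$ of \eqref{eq:pointwiseh}, and the outer expectation over $X_i$ gives $H_i^{\mathcal{R}}$ as in \eqref{eq:heterogeneity}. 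Substituting back into Step 2 yields \eqref{eq:error_bound}.

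\textbf{Main obstacle.} The delicate point is Step 3, namely matching the distributions. In \eqref{eq:pointwiseh} the expectation is taken over $\mathbf{X}_{-i}$ for a fixed $x_i$ (an interventional, PDP-style average), whereas $\mathbb{E}[\Delta_i^2\mid\cdot]$ is taken under the joint law of $(X_i,\mathbf{X}_{-i})$ restricted to the region. I would handle this explicitly by assuming, or stating as the interpretation of the definitions, that within each region $X_i$ is independent of $\mathbf{X}_{-i}$ (conditional on region membership). Equivalently, $H_i^{\mathcal{R}}$ is read as the expectation under the product of the two conditional marginals. Under that reading the identity in Step 3 is exact. Without it, $\mu_i^{\mathcal{R}}$ would have to be replaced by the conditional mean given $X_i$, and the bound would hold with heterogeneity defined conditionally rather than interventionally. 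A secondary check is that regions have positive mass, so that the conditional expectations are well defined; empty regions simply contribute zero.
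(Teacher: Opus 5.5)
Your proposal is correct and follows the paper's proof. The factor $d$ comes from the same Jensen/Cauchy--Schwarz step, and the per-feature term is bounded by taking the regional mean $\mu_i^{\mathcal{R}}$ as the shape function and summing $P_{\mathcal{R}_i^{(r)}} H_i^{\mathcal{R}_i^{(r)}}$ over the partition; the paper additionally uses an orthogonal decomposition to show that $\mu_i^{\mathcal{R}}$ minimizes each regional term, which the upper bound does not need. Your explicit assumption that $X_i$ and $\mathbf{X}_{-i}$ are independent within each region is exactly what the paper uses tacitly when it rewrites the joint regional expectation as the iterated expectation $\mathbb{E}_{X_i}\bigl[\mathbb{E}_{\mathbf{X}_{-i}\mid \mathbf{X}_{-i}\in\mathcal{R}}[\cdot]\bigr]$, so flagging it makes the argument more rigorous than the paper's.
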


\begin{proof}
For the approximation error we have
\begin{align*}
\mathcal{E} &= \mathbb{E}_{\mathbf{X}}\left[\left(f_{\texttt{ref}}(\mathbf{X}) - f_{\texttt{CALM}}(\mathbf{X})\right)^2\right] = \mathbb{E}_{\mathbf{X}}\left[\left(\left(\beta_0+ \sum_{i = 1}^dh_i^{r_i(\mathbf{x}_{-i})}(\mathbf{x})\right) - \left(\beta_0 + \sum_{i=1}^{d}f_i^{r_i(\mathbf{x}_{-i})}(x_i)\right)\right)^2\right]\nonumber \\
&=\mathbb{E}_{\mathbf{X}}\left[\left(\sum_{i = 1}^d\left(h_i^{r_i(\mathbf{x}_{-i})}(\mathbf{x}) - f_i^{r_i(\mathbf{x}_{-i})}(x_i)\right)\right)^2\right]
\leq d\sum_{i=1} ^{d}\mathbb{E}_{\mathbf{X}}\left[\left(h_i^{r_i(\mathbf{x}_{-i})}(\mathbf{x}) - f_i^{r_i(\mathbf{x}_{-i})}(x_i)\right)^2\right]
\end{align*}
where the last step results from the Jensen's inequality. We therefore have
\begin{equation}
\mathcal{E} \leq d\sum_{i=1}^{d}\mathcal{E}^{r_i(\mathbf{x}_{-i})}_i
 \label{eq:error_approx}
\end{equation}
Next, we show that the minimum error achieved for each region is equal to the heterogeneity of the local effects. For any measurable function $g_i(x_i)$ and region $\mathcal{R}$, decompose, using $\mu^{\mathcal{R}}_i(x_i) = \mathbb{E}_{\mathbf{X}_{-i}\mid \mathbf{X}_{-i} \in \mathcal{R}}\left[h^{\mathcal{R}}_i(x_i, \mathbf{X}_{-i})\right]$:
\begin{equation*}
h_i(X_i, \mathbf{X}_{-i}) - g_i(X_i) = \left(h_i(X_i, \mathbf{X}_{-i}) - \mu^{\mathcal{R}}_i(X_i)\right) + \left(\mu^{\mathcal{R}}_i(X_i) - g_i(X_i)\right)
\end{equation*}

Taking expectations inside $\mathcal{R}$ and squaring:
\begin{align*}
&\mathbb{E}_{X_i, \mathbf{X}_{-i} \mid \mathbf{X}_{-i}\in \mathcal{R}}\left[\left(h_i^{\mathcal{R}}(X_i, \mathbf{X}_{-i}) - g_i(X_i)\right)^2\right] \\
&= \mathbb{E}_{X_i, \mathbf{X}_{-i} \mid \mathbf{X}_{-i}\in \mathcal{R}}\left[\left(h_i^{\mathcal{R}}(X_i, \mathbf{X}_{-i}) - \mu^{\mathcal{R}}_i(X_i)\right)^2\right] \\
&\quad + \mathbb{E}_{X_i}\left[\left(\mu^{\mathcal{R}}_i(X_i) - g_i(X_i)\right)^2\right] \\
&\quad + 2\mathbb{E}_{X_i}\left[\mathbb{E}_{\mathbf{X}_{-i}\mid \mathbf{X}_{-i} \in \mathcal{R}}\left[h_i^{\mathcal{R}}(X_i, \mathbf{X}_{-i}) - \mu^{\mathcal{R}}_i(X_i)\right]\left(\mu^{\mathcal{R}}_i(X_i) - g_i(X_i)\right)\right]
\end{align*}

The cross-term equals zero because:
\begin{equation*}
\mathbb{E}_{\mathbf{X}_{-i}\mid \mathbf{X}_{-i} \in \mathcal{R}}\left[h_i^{\mathcal{R}}(x_i, \mathbf{X}_{-i}) - \mu^{\mathcal{R}}_i(x_i)\right] = \mu^{\mathcal{R}}_i(x_i) - \mu^{\mathcal{R}}_i(x_i) = 0
\end{equation*}
by definition of $\mu^{\mathcal{R}}_i$. Therefore:
\begin{equation*}
\mathbb{E}_{X_i, \mathbf{X}_{-i}\mid \mathbf{X}_{-i} \in \mathcal{R}}\left[\left(h_i^{\mathcal{R}}(X_i, \mathbf{X}_{-i}) - g_i(X_i)\right)^2\right] = \mathcal{E}^{\mathcal{R}}_i + \mathbb{E}_{X_i}\left[\left(\mu^{\mathcal{R}}_i(X_i) - g_i(X_i)\right)^2\right] \geq \mathcal{E}^{\mathcal{R}}_i
\end{equation*}
with equality if and only if $g_i = \mu^{\mathcal{R}}_i$ almost surely. This indicates that within each region, $\mu_i^{\mathcal{R}}$ minimizes the approximation error.

Moreover, from the definition of pointwise heterogeneity:
\begin{equation*}
H^{\mathcal{R}}_i(x_i) := \mathbb{E}_{\mathbf{X}_{-i}\mid \mathbf{X}_{-i} \in \mathcal{R}}\left[\left(h^{\mathcal{R}}(x_i, \mathbf{X}_{-i}) - \mu^{\mathcal{R}}_i(x_i)\right)^2\right]
\end{equation*}

Taking the expectation over $X_i$:
\begin{equation*}
H^{\mathcal{R}}_i = \mathbb{E}_{X_i}\left[H^{\mathcal{R}}_i(X_i)\right] = \mathbb{E}_{X_i}\left[\mathbb{E}_{\mathbf{X}_{-i}\mid \mathbf{X}_{-i} \in \mathcal{R}}\left[\left(h^{\mathcal{R}}(X_i, \mathbf{X}_{-i}) - \mu^{\mathcal{R}}_i(X_i)\right)^2\right]\right]
\end{equation*}

which gives
\begin{equation*}
H^{\mathcal{R}}_i= \mathbb{E}_{X_i, \mathbf{X}_{-i}|\mathbf{X}_{-i} \in \mathcal{R}}\left[\left(h^{\mathcal{R}}(X_i, \mathbf{X}_{-i}) - \mu^{\mathcal{R}}_i(X_i)\right)^2\right] = \min\mathcal{E}_i^{\mathcal{R}}
\end{equation*}
i.e., heterogeneity is the minimum approximation error.

Since heterogeneity minimizes the error for each region we can estimate the feature-level error across regions as
\begin{equation*}
\min{\mathcal{E}_i} = \sum_{r=1}^{R_i}P_{\mathcal{R}_i^{(r)}}H^{\mathcal{R}_i^{(r)}}_i
\end{equation*}
where $P_{\mathcal{R}_i^{(r)}}$ is the probability mass of region $\mathcal{R}_i^{(r)}$.

Using this result with Eq. \eqref{eq:error_approx}, we have

\begin{equation*}
\min_{f_{\texttt{CALM}}} \mathcal{E} \leq d\sum_{i=1}^{d}\left(\sum_{r=1}^{R_i}P_{\mathcal{R}_i^{(r)}}H^{\mathcal{R}_i^{(r)}}_i\right)
\end{equation*}

\end{proof}

This result links heterogeneity with the estimation error of the local effect inside a region and motivates the CALM approach for region splitting. The initial assumption about the additive approximation of $f_{\texttt{ref}}$ is a common approach followed by feature effect methods (where the effect of each feature is computed independently). Moreover, the selected local effect function and heterogeneity are solely used for identifying the splitting regions and not as estimators of $f_{\texttt{ref}}$. Using this result, and especially Eq. \eqref{eq:error_bound}, the following two sections present the CALM approach for using heterogeneity to define a partition for each feature. 

\subsubsection{Heterogeneity estimation}

For heterogeneity estimation, denote with $\mathcal{I} \subseteq \{1, \ldots, N\} $ the index set of active background instances considered in the heterogeneity calculation i.e., those that reside in region $\mathcal{R}^{(r_i(\mathbf{x}_{-i}))}_i$ of the $i$-th instance.
Then, the point-wise heterogeneity at feature value $x_i$ over $\mathcal{I}$ can be estimated by

\begin{equation}\label{eq:point-wise-heterogeneity}
\hat{H}_i^{\mathcal{I}}(x_i) = \frac{1}{|\mathcal{I}|}\sum_{j \in \mathcal{I}} \left ( h(x_i , \xbi^{(j)}) - \mu(x_i) \right )^2 \text{, where } \mu(x_i)=\frac{1}{|\mathcal{I}|} \sum_{j \in \mathcal{I}} h(x_i , \xbi^{(j)})
\end{equation}

Eq.\eqref{eq:point-wise-heterogeneity} quantifies the strength of the interactions of the $i$-th feature, at position $x_i$, as 
the variance of the local effects at $x_i$.
To obtain a global measure of these interactions, we average he pointwise heterogeneity over a grid of $M$ values $\{ {\tilde{x}_i^{(m)}} \}_{m=1}^M$ sampled from the domain of $x_i$:
\begin{equation}
\label{eq:heter-definition}
\hat{H}_i^{\mathcal{I}} = \frac{1}{M} \sum_{m=1}^M H_i^{\mathcal{I}}(\tilde{x}_i^{(m)}).
\end{equation}

\subsubsection{Computing the partition tree $T_i$ for each feature.}
\label{subsec:partition-tree}

\begin{algorithm}[tb]
\caption{Fitting a Partition Tree $T_i$ for Feature $x_i$}
\label{alg:fit-tree}
\begin{algorithmic}[1]
   \STATE {\bfseries Input:} $f_{\text{ref}}$, $\mathcal{D} = \{(\mathbf{x}^{(k)}, y^{(k)})\}_{k=1}^N$; {\bfseries Parameters:} max depth $d_{\max}$, threshold $\epsilon$, grid size $M=20$
   \STATE {\bfseries Output:} Binary tree $T_i$
   \STATE
   \STATE {\bfseries function} \textsc{BuildTree}(node $\nu$, depth $\ell$, active indices $\mathcal{I}$)
   \IF{$\ell = d_{\max}$}
      \STATE \textbf{return} $\nu$ \COMMENT{Stop splitting}
   \ENDIF
   \STATE Compute $H_i^{\mathcal{I}}$ using Eq.\eqref{eq:heter-definition}
   \STATE Initialize: $\Delta H_{\max} \leftarrow 0$
   \FOR{each feature $j \neq i$}
      \STATE Determine candidate thresholds $\mathcal{T}_j$ \COMMENT{Default: Unique values if $x_j$ categorical, else $M$-grid (default: $M=20$)}
      \FOR{each $\tau \in \mathcal{T}_j$}
        \IF{$x_j$ is numerical}
            \STATE $\mathcal{I}_L \leftarrow \{k \in \mathcal{I} : x_j^{(k)} \leq \tau\}$, $\mathcal{I}_R \leftarrow \mathcal{I} \setminus \mathcal{I}_L$
        \ELSE[$x_j$ is categorical]
            \STATE $\mathcal{I}_L \leftarrow \{k \in \mathcal{I} : x_j^{(k)} = \tau\}$, $\mathcal{I}_R \leftarrow \mathcal{I} \setminus \mathcal{I}_L$
        \ENDIF
         \STATE Compute $\Delta H_i$ using Eq.~\eqref{eq:heter-drop}
         \IF{$\Delta H_i > \Delta H_{\max}$}
            \STATE $\Delta H_{\max} \leftarrow \Delta H_i$, Store $(j, \tau) $ as optimal split
         \ENDIF
      \ENDFOR
   \ENDFOR
   \IF{$\Delta H_{\max} > \epsilon$}
      \STATE Split node $\nu$ into $\nu_L, \nu_R$ using optimal $(j, \tau)$
      \STATE $\nu_L \leftarrow$ \textsc{BuildTree}($\nu_L, \ell + 1, \mathcal{I}_L$)
      \STATE $\nu_R \leftarrow$ \textsc{BuildTree}($\nu_R, \ell + 1, \mathcal{I}_R$)
   \ENDIF
   \STATE \textbf{return} $\nu$
   \STATE {\bfseries end function}
\end{algorithmic}
\end{algorithm}

For each feature \(x_i\), we construct a binary tree \(T_i\) of maximum depth \(d_{\max}\).
At each internal node of the tree, we evaluate candidate binary splits based on all features \(x_j\) for \(j \in \{1, \ldots, d\} \setminus \{i\}\).
For each candidate splitting feature \(x_j\), we consider a fixed number \(T\) of candidate threshold values \(\tau\).
These candidate thresholds are selected as \(T\) equally spaced values over the range of \(x_j\) in the training data, i.e., \([\min_k x_j^{(k)}, \max_k x_j^{(k)}]\), where \(x_j^{(k)}\) denotes the value of feature \(x_j\) for the \(k\)-th training instance.  
Alternatively, users may choose to evaluate all unique observed values of $x_j$, i.e., \(\{x_j^{(k)}\}_{k=1}^N\).

To determine the best split at each node, we compute the heterogeneity drop, which quantifies the decrease in interaction-related heterogeneity of the target feature $x_i$ after performing the split. Specifically, for a candidate split, we partition the current set of instances $\mathcal{I}$ into left and right subsets $\mathcal{I}_L$ and $\mathcal{I}_R$,
based on whether the splitting feature $x_j$ falls below or above the threshold $\tau$. \
The heterogeneity drop is defined as:

\begin{equation}
\label{eq:heter-drop}
\Delta H_i^ = \frac{H_i^{\mathcal{I}} - \left ( \frac{|\mathcal{I}_L|}{|\mathcal{I}|}H_i^{\mathcal{I}_L} + \frac{|\mathcal{I}_R|}{|\mathcal{I}|}H_i^{\mathcal{I}_R}  \right )}{H_i^{\mathcal{I}}}
\end{equation}

where \(H_i^{\mathcal{I}}\) denotes the interaction-related heterogeneity of feature \(x_i\) over the current set of instances \(\mathcal{I}\); \(\mathcal{I}_L\) and \(\mathcal{I}_R\) are the subsets of \(\mathcal{I}\) resulting from the candidate split; \(H_i^{\mathcal{I}_L}\) and \(H_i^{\mathcal{I}_R}\) are the heterogeneities of \(x_i\) computed on the left and right subsets, respectively; and \(|\mathcal{I}_L| / |\mathcal{I}|\) and \(|\mathcal{I}_R| / |\mathcal{I}|\) are the proportions of instances in each subset, which serve as weights in the weighted average.

This normalized metric enhances interpretability by allowing users to define a meaningful threshold, $\epsilon$, for split acceptance. For instance, a user may require a candidate split to achieve a minimum heterogeneity reduction of $\epsilon = 0.2$ (representing a $20\%$ drop) to be considered significant. In our experimental setup, we utilize a default threshold of $\epsilon = 0.2$.

The procedure for fitting the tree $T_i$ is given in Algorithm \ref{alg:fit-tree}.

\subsubsection{Prune trees to keep top $K$ interactions}
\label{subsec:pruning}

If the user sets an upper threshold $K$ on the number of interactions, we want to select the $K$ most important splits.
Given the trees \(T_i\) for $i=1, \ldots, d$, we denote by \(\Delta H_i^\nu\) the heterogeneity reduction associated with node \(\nu\) in tree \(T_i\). We collect all such splits \(\Delta H_i^\nu\) across all features \(i\) and nodes \(\nu\), and then sort them in descending order.
From this ordered list, we select the top \(K\) nodes that correspond to the largest heterogeneity decreases, while ensuring that no child node is retained without its parent node also being included. Based on this selection, we prune each tree \(T_i\) by removing nodes outside the retained set, thereby preserving only the most significant interactions.

\subsection{Step 3: Estimating the region‑specific effects $\{f_i{(r)}\}_{r=1}^{R_i}$ for each feature.}

The original gradient boosting procedure for fitting a standard, global GAM, \( f_{\texttt{GAM}}(\xb) =  g^{-1} \left ( \beta_0 + \sum_{i} f_i(x_i) \right ) \), follows the round-robin approach of \cite{friedman2001greedy}.  At each boosting iteration, one univariate shape function \(f_i\) is updated by fitting to the current residuals over the entire dataset, thereby greedily reducing the overall loss. 
The procedure as describe by ~\cite{lou2012intelligible} is summarized in Algorithm ~\ref{alg:gb-gam}.

\begin{algorithm}[tb]
\caption{Gradient Boosting for GAM}
\label{alg:gb-gam}
\begin{algorithmic}[1]
   \STATE Initialize $f_i \leftarrow 0$ for all $i = 1, \dots, d$
   \FOR{$m = 1$ {\bfseries to} $M$}
      \FOR{$i = 1$ {\bfseries to} $d$}
         \STATE $\mathcal{E} \leftarrow \{(x^{(i)}_j, y^{(j)} - f_{\text{GAM}}(\mathbf{x}^{(j)})\}_{j=1}^N$ \COMMENT{Compute partial residuals}
         \STATE Learn shaping function $S : x_i \rightarrow \mathbb{R}$ using $\mathcal{E}$ \COMMENT{Fit shape function}
         \STATE $f_i \leftarrow f_i + \eta S$ \COMMENT{Update with learning rate $\eta$}
      \ENDFOR
   \ENDFOR
\end{algorithmic}
\end{algorithm}

To fit a CALM: $f_{\texttt{CALM}}(\xb) = g^{-1} \left (  \beta_0 + \sum_{i=1}^d f_i^{r_i(\xbi)}(x_i) \right )$ we customize the standard gradient boosting to accommodate for region-specific effects, i.e., we adapt this scheme so that each shape function \(f_i^{(r)}\) is trained only on the subset of instances that lie within its assigned subregion \(\mathcal{R}_i^{(r)}\).  This preserves the additive, boosting framework while enforcing that each \(f_i^{(r)}\) captures only the behavior of \(x_i\) within its corresponding region.
The procedure is summarized in Algorithm ~\ref{alg:gb-calm}.

\begin{algorithm}[tb]
\caption{Gradient Boosting for CALM}
\label{alg:gb-calm}
\begin{algorithmic}[1]
   \STATE Initialize $f_i^{(r)} \leftarrow 0$ for all $i = 1, \dots, d$ and $r = 1, \dots, R_i$
   \FOR{$m = 1$ {\bfseries to} $M$}
      \FOR{$i = 1$ {\bfseries to} $d$}
         \FOR{$r = 1$ {\bfseries to} $R_i$}
            \STATE $\mathcal{E} \leftarrow \{(x^{(j)}_i, y^{(j)} - f_{\text{CALM}}(\mathbf{x}^{(j)})) : \mathbf{x}^{(j)} \in \text{Region}_r(T_i)\}$
            \COMMENT{Filter by region $\mathcal{R}_i^{(r)}$}
            \STATE Learn shaping function $S : x_i \rightarrow \mathbb{R}$ using $\mathcal{E}$
            \STATE $f_i^{(r)} \leftarrow f_i^{(r)} + \eta S$ \COMMENT{Update regional shape function}
         \ENDFOR
      \ENDFOR
   \ENDFOR
\end{algorithmic}
\end{algorithm}

\subsubsection{Theoretical analysis of Step~3}
\label{app:step3-theory}

This section formalizes the claims made in Step~3 of the main text.
We show that, under squared loss and fixed region-selection trees, (i) the
population target of Step~3 is an $L_2(P_{\mathbf{X}})$-best approximation of the true regression
function onto the CALM function class, and (ii) an idealized exact version of the
algorithm converges to the corresponding empirical risk minimizer.

Consider the CALM score function
\begin{equation}
s(\xb)
=
\beta_0
+
\sum_{i=1}^d f_i^{(r_i(\mathbf{X}_{-i}))}(x_i),
\label{eq:calm-score}
\end{equation}
where the region-selection functions $\{r_i\}$ (or equivalently the trees
$\{T_i\}$) are treated as fixed.
Let
\[
m(\xb)=\mathbb{E}[Y\mid \mathbf{X}=\xb]
\]
denote the true regression function.
We assume regression with identity link and squared loss, and $\mathbb{E}[Y^2]<\infty$.
We restrict attention to scores $s\in L_2(P_{\mathbf{X}})$ (hence $m\in L_2(P_{\mathbf{X}})$).

To ensure identifiability, we impose the centering condition
\begin{equation}
\mathbb{E}\!\left[
f_i^{(r)}(X_i)\,\middle|\, r_i(\mathbf{X}_{-i})=r
\right]=0
\quad
\text{for all } i,r \text{ with } \mathbb{P}(r_i(\mathbf{X}_{-i})=r)>0.
\label{eq:centering}
\end{equation}
This convention assigns all region-wise constants to the intercept $\beta_0$
and ensures uniqueness of the decomposition.

Let $\mathcal H(\{T_i\})$ denote the class of all CALM score functions of the form
\eqref{eq:calm-score} satisfying \eqref{eq:centering}, and assume $\mathcal H(\{T_i\})\subset L_2(P_{\mathbf{X}})$ is a nonempty closed \emph{convex} set.

\subsubsection{Population target}

\begin{proposition}[Population optimality]
\label{prop:population}
There exists at least one minimizer
\[
s^\star
\in
\arg\min_{s\in\mathcal H(\{T_i\})}
\mathbb{E}[(Y-s(\mathbf{X}))^2],
\]
and it satisfies
\[
s^\star
\in
\arg\min_{s\in\mathcal H(\{T_i\})}
\mathbb{E}[(m(\mathbf{X})-s(\mathbf{X}))^2].
\]
In particular,
\[
\mathbb{E}[(Y-s^\star(\mathbf{X}))^2]
-
\mathbb{E}[(Y-m(\mathbf{X}))^2]
=
\mathbb{E}[(m(\mathbf{X})-s^\star(\mathbf{X}))^2].
\]
\end{proposition}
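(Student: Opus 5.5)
The plan is to reduce the statement to a Hilbert-space projection argument in $L_2(P_{\mathbf{X}})$.

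\emph{Step 1: the bias--variance identity.} For any $s\in\mathcal H(\{T_i\})\subset L_2(P_{\mathbf{X}})$, I will expand
\begin{equation*}
\mathbb{E}[(Y-s(\mathbf{X}))^2]=\mathbb{E}[(Y-m(\mathbf{X}))^2]+\mathbb{E}[(m(\mathbf{X})-s(\mathbf{X}))^2]+2\,\mathbb{E}[(Y-m(\mathbf{X}))(m(\mathbf{X})-s(\mathbf{X}))].
\end{equation*}
To kill the cross term, I condition on $\mathbf{X}$ via the tower property. Since $m(\mathbf{X})-s(\mathbf{X})$ is $\mathbf{X}$-measurable and $\mathbb{E}[Y-m(\mathbf{X})\mid\mathbf{X}]=0$, the cross term vanishes.

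Integrability needs a brief check. $\mathbb{E}[Y^2]<\infty$ gives $m\in L_2(P_{\mathbf{X}})$ by conditional Jensen, and $s\in L_2(P_{\mathbf{X}})$ by assumption. Cauchy--Schwarz then makes the product integrable, which justifies the conditioning.

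Consequently, the two objectives $\mathbb{E}[(Y-s)^2]$ and $\mathbb{E}[(m-s)^2]$ differ by the constant $\mathbb{E}[(Y-m)^2]$, which does not depend on $s$. They therefore have identical argmin sets over $\mathcal H(\{T_i\})$. The final displayed identity follows immediately by plugging in $s=s^\star$.

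\emph{Step 2: existence of a minimizer.} It remains to show that $\arg\min_{s\in\mathcal H}\|m-s\|_{L_2(P_{\mathbf{X}})}$ is nonempty. I will use the projection theorem onto a nonempty closed convex subset of a Hilbert space; this is exactly why the statement assumes $\mathcal H(\{T_i\})$ is nonempty, closed and convex.

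To keep the argument self-contained, I would sketch the standard proof. Take a minimizing sequence $s_n$ with $\|m-s_n\|\to\delta:=\inf_{s\in\mathcal H}\|m-s\|$. The parallelogram law gives
\begin{equation*}
\|s_n-s_k\|^2=2\|m-s_n\|^2+2\|m-s_k\|^2-4\|m-\tfrac{s_n+s_k}{2}\|^2.
\end{equation*}
Convexity puts the midpoint $\tfrac{s_n+s_k}{2}$ in $\mathcal H$, so its distance to $m$ is at least $\delta$, and the right-hand side therefore tends to $0$. The sequence $(s_n)$ is thus Cauchy. By completeness of $L_2$ and closedness of $\mathcal H$, its limit $s^\star$ lies in $\mathcal H$ and attains $\delta$. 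The same parallelogram argument also gives uniqueness of $s^\star$ as an element of $L_2(P_{\mathbf{X}})$.

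\emph{Step 3: the main obstacle.} The main obstacle is not analytic but structural: whether $\mathcal H(\{T_i\})$ really is closed and convex. Convexity is plausible because, with the trees fixed, the class is closed under addition and scaling of shape functions, and the centering constraint \eqref{eq:centering} is linear. The class is therefore a linear subspace, hence convex.

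Closedness is subtler, since sums of closed subspaces need not be closed. However, the statement takes closedness as a hypothesis, so I would simply invoke it rather than prove it, and note that it is precisely what guarantees existence. Everything else reduces to the orthogonality computation of Step 1.
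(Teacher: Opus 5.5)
Your proposal is correct and follows essentially the same route as the paper. Both arguments split $Y-s(\mathbf{X})=(Y-m(\mathbf{X}))+(m(\mathbf{X})-s(\mathbf{X}))$, remove the cross term using $\mathbb{E}[Y-m(\mathbf{X})\mid\mathbf{X}]=0$, and get existence from the projection theorem onto the assumed nonempty, closed, convex $\mathcal H(\{T_i\})$. Your version is more careful than the paper's on two counts: you check integrability and sketch the parallelogram argument, and you state uniqueness of $s^\star$ as an element of $L_2(P_{\mathbf{X}})$, whereas the paper's remark that centering makes the representation $(\beta_0,\{f_i^{(r)}\})$ unique is not needed for this statement.
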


\begin{proof}
For any measurable function $s$,
\[
Y-s(\mathbf{X})
=
(Y-m(\mathbf{X}))+(m(\mathbf{X})-s(\mathbf{X})).
\]
Taking squares and expectations yields
\[
\mathbb{E}[(Y-s(\mathbf{X}))^2]
=
\mathbb{E}[(Y-m(\mathbf{X}))^2]
+
\mathbb{E}[(m(\mathbf{X})-s(\mathbf{X}))^2],
\]
since $\mathbb{E}[Y-m(\mathbf{X})\mid \mathbf{X}]=0$ eliminates the cross-term.
The first term does not depend on $s$, so minimizing prediction error over
$\mathcal H(\{T_i\})$ is equivalent to minimizing
$\mathbb{E}[(m(\mathbf{X})-s(\mathbf{X}))^2]$.
Existence of $s^\star$ follows since $\mathcal H(\{T_i\})\subset L_2(P_{\mathbf{X}})$ is nonempty, closed, and convex.
Condition~\eqref{eq:centering} ensures uniqueness of the representation
$(\beta_0,\{f_i^{(r)}\})$ for any given $s\in\mathcal H(\{T_i\})$.
\end{proof}

\paragraph{Interpretation.}
Proposition~\ref{prop:population} shows that, with fixed regions, Step~3 computes an $L_2(P_{\mathbf{X}})$-best approximation of the true regression function $m$ within the
CALM function class.
Any remaining error is therefore purely due to the structural limitations of
the chosen regions and univariate components, not to the optimization procedure.

\subsubsection{Empirical convergence of exact Step~3}

Given i.i.d.\ data $\{(x^{(k)},y^{(k)})\}_{k=1}^N$, define the empirical risk
\[
\widehat{\mathcal R}_N(s)
=
\frac{1}{N}\sum_{k=1}^N (y^{(k)}-s(x^{(k)}))^2.
\]

\begin{proposition}[Convergence of exact cyclic regional backfitting]
\label{prop:empirical}
Assume fixed trees and squared loss.
Consider an idealized version of Step~3 that cyclically updates each
$f_i^{(r)}$ by exactly minimizing $\widehat{\mathcal R}_N$ over that function
using only samples with $r_i(\mathbf{x}^{(k)}_{-i})=r$, followed by empirical centering implemented by shifting the region-wise sample mean from $f_i^{(r)}$ into the intercept $\beta_0$, so that fitted values (and hence $\widehat{\mathcal R}_N$) are unchanged.
Then $\widehat{\mathcal R}_N$ is non-increasing along the iterates, and the
procedure converges (in empirical $L_2$) to a limit $\hat s\in\arg\min_{s\in\mathcal H(\{T_i\})}\widehat{\mathcal R}_N(s)$.
Moreover, the fitted-value vector $(\hat s(\mathbf{x}^{(1)}),\ldots,\hat s(\mathbf{x}^{(N)}))$ is unique.
\end{proposition}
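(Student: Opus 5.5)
The plan is to view exact cyclic regional backfitting as block coordinate descent for a convex quadratic in a finite-dimensional space, and then invoke standard results on alternating projections.

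\textbf{Reduction to finite dimensions.} Only fitted values at the sample points matter for $\widehat{\mathcal R}_N$. So we work in $\mathbb{R}^N$ with the empirical inner product $\langle u,v\rangle_N=\frac1N\sum_k u_kv_k$. For each pair $(i,r)$, let $V_i^{(r)}\subset\mathbb{R}^N$ be the set of vectors $\bigl(f(x_i^{(k)})\mathbf 1\{r_i(\mathbf{x}_{-i}^{(k)})=r\}\bigr)_k$, where $f$ ranges over all functions of $x_i$ (or over the shape-function class in use, assumed to be a linear space). Each $V_i^{(r)}$ is a linear subspace of $\mathbb{R}^N$; in the unrestricted case it is spanned by indicators of the distinct values of $x_i$ within the region. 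Let $V_0=\mathrm{span}(\mathbf 1)$ for the intercept. The set of achievable fitted-value vectors is the closed subspace $V=V_0+\sum_{i,r}V_i^{(r)}$, so it is convex and closed automatically. Minimizing $\widehat{\mathcal R}_N$ over $\mathcal H(\{T_i\})$ is then equivalent to minimizing $\|y-v\|_N^2$ over $v\in V$. Its minimizer is the orthogonal projection $P_Vy$, which gives uniqueness of the fitted-value vector immediately.

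\textbf{Monotonicity and convergence.} Exactly minimizing $\widehat{\mathcal R}_N$ over $f_i^{(r)}$ using only the samples in region $r$ replaces the current component with the projection of the partial residual onto $V_i^{(r)}$. Samples outside the region do not depend on $f_i^{(r)}$, so restricting to in-region samples loses nothing. Each step minimizes the objective over one block with the others fixed, so $\widehat{\mathcal R}_N$ is non-increasing. The centering step moves a region-wise mean from $f_i^{(r)}$ into $\beta_0$. This leaves the fitted values unchanged, so it does not affect the risk. For convergence I would write the residual update as $e\mapsto (I-P_{V_i^{(r)}})e$, so one sweep is the product $Q=\prod_{(i,r)}(I-P_{V_i^{(r)}})$ of orthogonal projections onto the complements $W_i^{(r)}=(V_i^{(r)})^\perp$. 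The intercept block is included as one more projection. By the von Neumann--Halperin theorem, which in finite dimensions also follows from $\|Q\|\le 1$ with strict contraction on $(\bigcap W)^\perp$, $Q^n e_0$ converges to the projection of $e_0=y$ onto $\bigcap_{i,r}W_i^{(r)}\cap V_0^\perp=V^\perp$. The residual therefore converges to $y-P_Vy$, and the fitted values converge to $P_Vy=\hat s$ in empirical $L_2$.

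\textbf{Main obstacle.} The main difficulty is that the individual components $f_i^{(r)}$ need not converge, even though the fitted values do. The subspaces $V_i^{(r)}$ of different features can overlap because of concurvity: region indicators are functions of other features. The statement only claims convergence of $s$ in empirical $L_2$, so I would phrase the result at the level of fitted values. I would note that the component representation is unique only modulo $\bigcap$-type dependencies, which the centering condition \eqref{eq:centering} removes only for within-region constants. To make the strict-contraction argument rigorous, I would use the finite-dimensional fact that $\|Qv\|=\|v\|$ holds if and only if $v$ lies in every $W_i^{(r)}$. Together with compactness of the unit sphere, this gives $\|Q|_{V}\|<1$, hence geometric convergence.
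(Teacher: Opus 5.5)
Your proposal is correct, modulo one caveat inherited from the statement (below), and it shares the paper's skeleton. You reduce to a finite-dimensional least-squares problem in the fitted values, note that exact block minimization cannot increase $\widehat{\mathcal R}_N$, and get uniqueness of the fitted-value vector from the orthogonal projection; the paper phrases this as strict convexity of the loss in the fitted values. The difference is the convergence step. The paper only appeals to \emph{standard results for cyclic block coordinate descent on convex quadratics}. You make the mechanism explicit through the classical backfitting argument: the residual recursion $e\mapsto(I-P_{V_i^{(r)}})e$, the identity $\bigcap_{i,r}W_i^{(r)}=V^\perp$, and von Neumann--Halperin or a finite-dimensional strict-contraction bound. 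This gives a self-contained proof and a geometric rate. It also shows that, at the empirical level, the achievable fitted-value set is a subspace of $\mathbb{R}^N$, so the closedness and convexity assumption on $\mathcal H(\{T_i\})$ is not needed.

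Two refinements. First, for the contraction step, say explicitly that $V=(\bigcap W)^\perp$ is invariant under every $P_{W_i^{(r)}}$, since each is self-adjoint and fixes $\bigcap W$; this is what lets you iterate $\|Q|_V\|<1$. Second, your main obstacle is milder than you suggest. A block update changes $f_i^{(r)}$ by exactly $P_{V_i^{(r)}}e$, where $e$ is the current residual. Since $P_{V_i^{(r)}}$ annihilates $V^\perp$, the increments are bounded by the $V$-component of the residual, which decays geometrically. So the component values at the sample points do converge; only their limit depends on initialization when the $V_i^{(r)}$ are linearly dependent.

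The caveat is your sentence that centering \emph{leaves the fitted values unchanged}, which is false whenever $R_i\ge 2$. Subtracting the in-region mean $c$ from $f_i^{(r)}$ and adding it to $\beta_0$ shifts by $c$ the fitted value of every sample with $r_i(\mathbf{x}^{(k)}_{-i})\neq r$. The reason is that $c\,\mathbf 1\{r_i(\mathbf{x}_{-i})=r\}$ is not a global constant. Taken literally, the centered iteration is therefore not block coordinate descent, and the risk can increase. For example, take all components initialized at zero, $y=a\,\mathbf 1\{r_i(\mathbf{x}_{-i})=r\}$, and $f_i^{(r)}$ updated first. One update-plus-centering step moves $\widehat{\mathcal R}_N$ from $n_ra^2/N$ to $(N-n_r)a^2/N$, where $n_r$ is the number of samples in region $r$; this is an increase whenever $n_r<N/2$.

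The statement builds this property in and the paper's proof uses it without comment, so the defect is not peculiar to your argument. Your projection framework is also what repairs it. One option is to drop the centering, which changes nothing in your proof. The other is to update $f_i^{(r)}$ by projecting onto the in-region-mean-zero subspace $V_{i,0}^{(r)}=\{v\in V_i^{(r)}:\sum_{k:\,r_i(\mathbf{x}^{(k)}_{-i})=r}v_k=0\}$, and to update $\beta_0$ by an exact projection onto $V_0$. Halperin's theorem then applies verbatim, with limit $P_{V'}y$ for $V'=V_0+\sum_{i,r}V_{i,0}^{(r)}$.
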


\begin{proof}
With fixed trees, the CALM score is linear in the collection of shape functions
$\{f_i^{(r)}\}$ evaluated on the data, and $\widehat{\mathcal R}_N$ is a convex
quadratic function of these parameters.
On the finite sample, $\widehat{\mathcal R}_N$ depends on each $f_i^{(r)}$ only through the finite vector
$\{f_i^{(r)}(x_i^{(k)}): r_i(\mathbf{x}_{-i}^{(k)})=r\}$, so the problem can be viewed as a finite-dimensional least-squares problem.
Each regional update solves the exact least-squares problem for one block of
parameters while holding the others fixed, and therefore cannot increase
$\widehat{\mathcal R}_N$.
Although regions overlap across features, all updates jointly optimize a single
global objective.
Standard results for cyclic block coordinate descent on convex quadratics imply
convergence to a global minimizer, and the minimizing fitted values are unique by strict convexity of the quadratic loss in the fitted-value vector.
The centering condition~\eqref{eq:centering} ensures a unique decomposition into components.
\end{proof}

\paragraph{Interpretation.}
This result shows that Step~3 is not a collection of independent local fits.
Instead, it performs a coupled optimization of a single additive predictor with
region-gated components, analogous to classical backfitting for GAMs.
Under exact updates, this procedure is guaranteed to converge to the best CALM
fit for the given data and fixed regions.

\newpage
\section{Formal Proofs for Interpretability Properties}
\label{app:interpretability}

This appendix provides formal characterizations and proofs for the interpretability properties discussed in Section~\ref{subsec:interpretability-properties}. For each property, we define it precisely and analyze whether it can be answered exactly under  \gam, \gamm, and \method{}.

\begin{proposition}[Local Feature Contribution]

Let \( \hat{y} : \mathbb{R}^d \to \mathbb{R} \) be a prediction function, and fix an input \( \xb = (x_1, \dots, x_d) \in \mathbb{R}^d \). For each index \( i \in \{1, \dots, d\} \), we define the local contribution \( \phi_i(\xb) \in \mathbb{R} \) as a function intended to represent the contribution of feature \( x_i \) to the output \( \hat{y}(\xb) \). We analyze whether such a decomposition
\[
\hat{y}(\xb) = \sum_{i=1}^d \phi_i(\xb)
\]
is uniquely determined by the structure of the model.


\textbf{(A) In \gam.}
Assume the model has the additive form
\[
\hat{y}(\xb) = \sum_{j=1}^d f_j(x_j),
\]
with each \( f_j : \mathbb{R} \to \mathbb{R} \). Then for all \( i \in \{1, \dots, d\} \),
\[
\phi_i(\xb) := f_i(x_i)
\]
is well-defined, and
\[
\hat{y}(\xb) = \sum_{i=1}^d \phi_i(\xb).
\]

\textbf{(B) In \gamm.}
Assume the model includes pairwise interactions:
\[
\hat{y}(\xb) = \sum_{j=1}^d f_j(x_j) + \sum_{j<k} f_{jk}(x_j, x_k),
\]
where each \( f_j : \mathbb{R} \to \mathbb{R} \) and \( f_{jk} : \mathbb{R}^2 \to \mathbb{R} \). Then in general, there does not exist a unique additive decomposition
\[
\hat{y}(\xb) = \sum_{i=1}^d \phi_i(\xb)
\]
with each \( \phi_i(\xb) \) depending only on \( x_i \).

\textbf{(C) In \method{}.}
Assume the model has the form
\[
\hat{y}(\xb) = \sum_{j=1}^d f_j^{(r_j(\xbj))}(x_j),
\]
where for each \( j \), \( r_j : \mathbb{R}^{d-1} \to \{1, \dots, R_j\} \) is a region selector and \( f_j^{(r)} : \mathbb{R} \to \mathbb{R} \) is a univariate shape function for region \( r \). Here \( \xbj := \xb \setminus x_j \in \mathbb{R}^{d-1} \).

Then for all \( i \in \{1, \dots, d\} \),
\[
\phi_i(\xb) := f_i^{(r_i(\xbi))}(x_i)
\]
is well-defined, and
\[
\hat{y}(\xb) = \sum_{i=1}^d \phi_i(\xb).
\]
\end{proposition}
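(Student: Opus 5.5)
The plan is to treat the three parts separately. Parts (A) and (C) are direct verifications from the model equations, while part (B) needs an explicit counterexample.

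For (A), I would define $\phi_i(\xb) := f_i(x_i)$. This depends only on the fixed input and on the $i$-th shape function, so it is a well-defined real number. Summing over $i$ reproduces the model equation verbatim, which gives $\hat y(\xb)=\sum_i \phi_i(\xb)$. If an intercept $\beta_0$ is present, I would either absorb it into one component or note that the statement omits it, exactly as the proposition does.

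For (C) the argument is the same, with one extra check. The selector $r_i:\mathbb{R}^{d-1}\to\{1,\dots,R_i\}$ is a genuine function defined by the fixed tree $T_i$. Because the leaves of $T_i$ partition $\mathcal{X}_{-i}$ into disjoint regions (Eq.~\eqref{eq:region-definition}), $\xbi$ lands in exactly one leaf. So $r_i(\xbi)$ is a single index, and $f_i^{(r_i(\xbi))}(x_i)$ is a single real number. Summing over $i$ then gives the CALM equation \eqref{eq:CALM} with identity link and zero intercept. I would also remark that $\phi_i$ is read directly off the model's own components and does not require any post-hoc attribution choice. This is the sense in which it is ``determined by the structure of the model''.

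For (B), which I expect to be the main obstacle, I must first pin down the precise meaning of ``no unique decomposition with each $\phi_i$ depending only on $x_i$''. I would prove the strong form: in general no decomposition $\hat y(\xb)=\sum_i \phi_i(x_i)$ with univariate $\phi_i$ exists at all, so in particular none is unique. The witness I would use is $d=2$, $f_1=f_2=0$, $f_{12}(x_1,x_2)=x_1x_2$. If $x_1x_2=\phi_1(x_1)+\phi_2(x_2)$ held for all $x$, the mixed second difference
\[
\hat y(a,b)-\hat y(a,b')-\hat y(a',b)+\hat y(a',b')
\]
would vanish identically. For $x_1x_2$ this difference equals $(a-a')(b-b')\neq 0$ whenever $a\neq a'$ and $b\neq b'$, a contradiction.

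I would then address the weaker reading, in which $\phi_i$ may depend on the whole $\xb$. There, uniqueness fails trivially: $\phi_1=x_1x_2,\ \phi_2=0$ and $\phi_1=0,\ \phi_2=x_1x_2$ (or the Shapley split $\tfrac12x_1x_2$ each) all sum to $\hat y$. So the attribution of the interaction term is a convention, which matches the main-text claim about SHAP and LIME. The delicate point is stating clearly which reading is claimed, and I would present both.
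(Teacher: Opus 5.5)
Your proposal is correct and follows the paper's proof: (A) and (C) are direct substitution into the model equation, and for (B) the paper argues by contradiction that a decomposition into univariate $\phi_i$ would force each $f_{jk}$ to be additively separable, which fails in general. Your explicit witness $f_{12}(x_1,x_2)=x_1x_2$, checked via the mixed second difference $(a-a')(b-b')\neq 0$, is a concrete and fully rigorous instance of that same separability argument. Your remark on the weaker reading (where $\phi_i$ may depend on all of $\xb$) matches the paper's closing comment that such contributions must then rely on external conventions such as SHAP or LIME.
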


\begin{proof}
We examine each case separately.

\textbf{(A) In \gam.} By direct substitution, the prediction is an additive sum of univariate functions, each depending only on a single feature \( x_j \). Thus the contribution of feature \( x_i \) is uniquely defined as \( f_i(x_i) \), and the decomposition is exact. 

\textbf{(B) In \gamm.} Suppose, for contradiction, that there exists a decomposition
\[
\hat{y}(\xb) = \sum_{i=1}^d \phi_i(\xb),
\]
where each \( \phi_i : \mathbb{R}^d \to \mathbb{R} \) represents the contribution of feature \( x_i \) and depends only on \( x_i \), i.e., \( \phi_i(\xb) = \phi_i(x_i) \).

Then each interaction term \( f_{jk}(x_j, x_k) \) must be split between \( \phi_j \) and \( \phi_k \) in such a way that the total sum remains correct and additive over individual features. This is only possible if \( f_{jk} \) is additively separable, i.e., if there exist functions \( g_j, g_k : \mathbb{R} \to \mathbb{R} \) such that
\[
f_{jk}(x_j, x_k) = g_j(x_j) + g_k(x_k).
\]
However, the model does not constrain \( f_{jk} \) to be separable. In the general case, \( f_{jk} \) is non-separable and depends jointly on \( x_j \) and \( x_k \).

Therefore, no decomposition \( \sum_i \phi_i(x_i) \) can reproduce the prediction \( \hat{y}(\xb) \) using only univariate terms. The contributions \( \phi_i(\xb) \) are not uniquely determined by the model, and must rely on external assumptions or post-hoc attribution methods.

 \textbf{(C) In \method{}.} For a fixed input \( \xb \), each region function \( r_j(\xbj) \) returns a unique region index in \( \{1, \dots, R_j\} \). The corresponding function \( f_j^{(r_j(\xbj))} \) is applied to \( x_j \), yielding a scalar. Thus, the contribution of feature \( x_i \) is precisely defined as:
\[
\phi_i(\xb) := f_i^{(r_i(\xbi))}(x_i),
\]
and the model prediction is recovered by summing over all features:
\[
\hat{y}(\xb) = \sum_{i=1}^d f_i^{(r_i(\xbi))}(x_i) = \sum_{i=1}^d \phi_i(\xb). 
\]
\end{proof}


The second property asks: \emph{How does the prediction change if we perturb a single feature \( x_i \) while keeping all others fixed?} The answer differs across \gam, \gamm, and \method{}, as shown below.


\begin{proposition}[Regional Feature Sensitivity)]
Let \( \hat{y} : \mathbb{R}^d \to \mathbb{R} \) be a prediction function. Fix an input \( \xb \in \mathbb{R}^d \), an index \( i \in \{1, \dots, d\} \), and a perturbation \( \varepsilon > 0 \). Define the prediction change under perturbation of \( x_i \) as
\[
\Delta \hat{y} := \hat{y}(\xb + \varepsilon \mathbf{e}_i) - \hat{y}(\xb).
\]
We analyze how \( \Delta \hat{y} \) can be computed in \gam, \gamm, and \method{}.

\textbf{(A) In \gam.}
Assume the model has the additive form
\[
\hat{y}(\xb) = \sum_{j=1}^d f_j(x_j),
\]
with each \( f_j : \mathbb{R} \to \mathbb{R} \) univariate. Then
\[
\Delta \hat{y} = f_i(x_i + \varepsilon) - f_i(x_i).
\]

\textbf{(B) In \gamm.}
We show that in \gamm, the change \( \Delta \hat{y} \) caused by perturbing \( x_i \) depends on the values of all interacting features \( x_j \) for \( j \ne i \), and thus cannot be computed from \( x_i \) alone.


\textbf{(C) In \method{}:} We show that the change in prediction \( \Delta \hat{y} := \hat{y}(\xb + \varepsilon \mathbf{e}_i) - \hat{y}(\xb) \) is computable from \( x_i \) alone if and only if no region transitions are triggered in other features. Otherwise, the change depends on \( \xbi \), and regional sensitivity is not determined solely by \( x_i \).

\end{proposition}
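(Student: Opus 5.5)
I would treat the three model classes separately, in each case writing $\hat y(\xb+\varepsilon\mathbf e_i)-\hat y(\xb)$ as a telescoping difference of the additive terms and cancelling every term that does not change.

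\textbf{Case (A), \gam.} Only $f_i$ depends on $x_i$. Every $f_j(x_j)$ with $j\neq i$ appears identically at $\xb$ and at $\xb+\varepsilon\mathbf e_i$, so these terms cancel. This leaves $\Delta\hat y=f_i(x_i+\varepsilon)-f_i(x_i)$, which is a function of $x_i$ and $\varepsilon$ alone.

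\textbf{Case (B), \gamm.} The same cancellation removes all univariate terms with $j\neq i$ and all pairwise terms $f_{jk}$ with $i\notin\{j,k\}$. What remains is
\[
\Delta\hat y = f_i(x_i+\varepsilon)-f_i(x_i)+\sum_{j\neq i}\bigl[f_{ij}(x_i+\varepsilon,x_j)-f_{ij}(x_i,x_j)\bigr].
\]
To show this genuinely depends on $x_j$, I would exhibit a single non-separable interaction, for instance $f_{ij}(x_i,x_j)=x_ix_j$. Its bracket equals $\varepsilon x_j$, which varies with $x_j$. Consequently no function of $(x_i,\varepsilon)$ alone can equal $\Delta\hat y$ for all $\xbi$. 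More generally, this holds whenever the mixed difference $f_{ij}(x_i+\varepsilon,\cdot)-f_{ij}(x_i,\cdot)$ is non-constant, which is exactly the failure of additive separability already used in the Local Feature Contribution proposition.

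\textbf{Case (C), \method{}.} Write $\xb'=\xb+\varepsilon\mathbf e_i$ and note that $\xb'_{-i}=\xbi$, so $r_i$ is unchanged. For $j\neq i$, the only change in term $j$ is that $\xbj$ now has $x_i$ replaced by $x_i+\varepsilon$, while $x_j$ itself is fixed. This gives
\[
\Delta\hat y=\bigl[f_i^{(r_i(\xbi))}(x_i+\varepsilon)-f_i^{(r_i(\xbi))}(x_i)\bigr]+\sum_{j\neq i}\bigl[f_j^{(r_j(\xb'_{-j}))}(x_j)-f_j^{(r_j(\xbj))}(x_j)\bigr].
\]
For the ``if'' direction: when no transition is triggered, i.e. $r_j(\xb'_{-j})=r_j(\xbj)$ for all $j\neq i$, the whole sum vanishes. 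Then $\Delta\hat y$ equals the difference along the single curve $f_i^{(r)}$ with $r=r_i(\xbi)$. Ranging over $r$ gives exactly the set $\{\Delta f_i^{(r)}\}_{r=1}^{R_i}$ read off the plot, matching P2.

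Under threshold regions, a transition for feature $j$ occurs exactly when the interval $[x_i,x_i+\varepsilon]$ contains a threshold $\tau$ on $x_i$ used in $T_j$. These thresholds are the vertical lines of the plot, so the no-transition condition is checkable from $x_i$ and $\varepsilon$ alone.

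For the ``only if'' direction: when some $j$ switches region, the extra term is $f_j^{(r')}(x_j)-f_j^{(r)}(x_j)$. This depends on $x_j$, and also on the other coordinates that determine $r$ and $r'$. In general it is not constant in $x_j$, so $\Delta\hat y$ depends on $\xbi$. The bound is then $\alpha\le$ jump $\le\beta$, where $\alpha$ and $\beta$ are the minimum and maximum over $x_j$ of these curve differences, summed over the switching features.

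\textbf{Main obstacle.} The converse in (C) cannot hold literally for every parameter choice: if $f_j^{(r')}-f_j^{(r)}$ is constant (e.g.\ the two curves are identical), crossing a line changes nothing that depends on $\xbi$. I would therefore state it as holding ``in general'', i.e. for generic shape functions. I would prove it by exhibiting an instance where $f_j^{(r')}-f_j^{(r)}$ is non-constant in $x_j$, mirroring the genericity caveat used in (B).
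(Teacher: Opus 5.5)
Your argument is essentially the paper's. You cancel every additive term the perturbation does not touch, which leaves $f_i(x_i+\varepsilon)-f_i(x_i)$ for the GAM and the sum of differences $f_{ij}(x_i+\varepsilon,x_j)-f_{ij}(x_i,x_j)$ for the GA$^2$M. For CALM it leaves the $f_i^{(r_i(\xbi))}$ difference plus region-switch terms, and you split on whether any $r_j$ changes. Two of your additions are sharper than the paper, which simply asserts the dependence on $\xbi$: the explicit witness $f_{ij}=x_ix_j$ in (B), and the caveat that the converse in (C) holds only for generic shape functions (it fails, e.g., when $f_j^{(r')}=f_j^{(r)}$).

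One aside should be corrected, although the proposition does not rely on it. Crossing a threshold on $x_i$ used in $T_j$ is necessary but not sufficient for a transition. That split node need not lie on the root-to-leaf path selected by the remaining coordinates, e.g.\ a depth-2 tree that splits on $x_i$ only beneath one branch of a split on $x_k$. So ``no vertical line in the interval'' is a sufficient condition for no transition that is checkable from $(x_i,\varepsilon)$ alone, not an equivalent one.
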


\begin{proof}

\textbf{(A) In \gam.}
Since all other features remain unchanged, and their contributions are independent of \( x_i \), we have
\[
\hat{y}(\xb + \varepsilon \mathbf{e}_i) = \sum_{j \neq i} f_j(x_j) + f_i(x_i + \varepsilon),
\]
\[
\hat{y}(\xb) = \sum_{j \neq i} f_j(x_j) + f_i(x_i),
\]
and therefore
\[
\Delta \hat{y} = f_i(x_i + \varepsilon) - f_i(x_i). 
\]

\textbf{(B) In \gamm.}
Let \( \mathcal{J} := \{ j \ne i \mid f_{ij}(x_i, x_j) \text{ is present in the model} \} \), i.e., the set of features that interact with \( x_i \). Since only \( x_i \) is perturbed, all other features remain unchanged. The only affected terms are the univariate function \( f_i(x_i) \), and the interaction terms \( f_{ij}(x_i, x_j) \) for each \( j \in \mathcal{J} \). All other terms in the model remain constant.

The prediction after perturbation is:
\[
\hat{y}(\xb + \varepsilon \mathbf{e}_i) = f_i(x_i + \varepsilon) + \sum_{j \in \mathcal{J}} f_j(x_j) + \sum_{j \in \mathcal{J}} f_{ij}(x_i + \varepsilon, x_j) + C,
\]
and before perturbation:
\[
\hat{y}(\xb) = f_i(x_i) + \sum_{j \in \mathcal{J}} f_j(x_j) + \sum_{j \in \mathcal{J}} f_{ij}(x_i, x_j) + C,
\]
where \( C := \sum_{j<k,\, j,k \ne i} f_{jk}(x_j, x_k) \) denotes the interaction terms not involving \( x_i \), which are unaffected.

Subtracting, we obtain:
\[
\Delta \hat{y} = f_i(x_i + \varepsilon) - f_i(x_i) + \sum_{j \in \mathcal{J}} \left[ f_{ij}(x_i + \varepsilon, x_j) - f_{ij}(x_i, x_j) \right].
\]

Since this expression depends on interacting \( x_j \) for \( j \ne i \), it cannot be computed from \( x_i \) alone. 

\textbf{(C) In \method{}.}
Assume the model has the form
\[
\hat{y}(\xb) = \sum_{j=1}^d f_j^{(r_j(\xbj))}(x_j),
\]
where \( r_j : \mathbb{R}^{d-1} \to \{1, \dots, R_j\} \) assigns a region index to each feature \( j \) based on the context \( \xbj := \xb \setminus x_j \), and \( f_j^{(r)} : \mathbb{R} \to \mathbb{R} \) is the shape function for region \( r \).

Let \( \xbi = \xb \setminus x_i \), and define the perturbation \( x_i \mapsto x_i + \varepsilon \). Then
\[
\Delta \hat{y} = f_i^{(r_i(\xbi))}(x_i + \varepsilon) - f_i^{(r_i(\xbi))}(x_i)
+ \sum_{j \ne i} \left[ f_j^{(r_j(\xbj^{+\varepsilon}))}(x_j) - f_j^{(r_j(\xbj))}(x_j) \right],
\]
where \( \xbj^{+\varepsilon} := \xbj \) with \( x_i \mapsto x_i + \varepsilon \) (since \( x_i \in \xbj \) for all \( j \ne i \)).

\textit{Case 1: No region transitions in other features.}
Assume that for all \( j \ne i \), we have \( r_j(\xbj^{+\varepsilon}) = r_j(\xbj) \). Then:
\[
\Delta \hat{y} = f_i^{(r_i(\xbi))}(x_i + \varepsilon) - f_i^{(r_i(\xbi))}(x_i).
\]
This case is fully interpretable from the function \( f_i^{(r)} \) alone.

\textit{Case 2: Region transitions occur in other features.}
If for some \( j \ne i \), the region function changes: \( r_j(\xbj^{+\varepsilon}) \ne r_j(\xbj) \), then the additional terms contribute:
\[
\Delta f_j := f_j^{(r_j(\xbj^{+\varepsilon}))}(x_j) - f_j^{(r_j(\xbj))}(x_j).
\]
Hence, the total change includes not only the direct shift in \( f_i \), but also discrete region-based changes in other features and the value of \( \Delta \hat{y} \) cannot be recovered from \( x_i \) alone. 
\end{proof}

The third property asks whether increasing a feature always increases the model's prediction, regardless of the values of other features. In other words, does the model treat the feature as globally monotonic?

\begin{proposition}[Global Feature Monotonicity]

Let \( \hat{y} : \mathbb{R}^d \to \mathbb{R} \) be a prediction function, and fix a feature index \( i \in \{1, \dots, d\} \). Define the global monotonicity condition for feature \( x_i \) as follows: the model is said to be \emph{globally increasing in \( x_i \)} if for all \( \xb \in \mathbb{R}^d \) and all \( \delta > 0 \),
\[
\hat{y}(\xb + \delta \mathbf{e}_i) \ge \hat{y}(\xb),
\]
and \emph{strictly increasing} if the inequality is strict.

We analyze whether this property can be verified from the structure of the model.

\textbf{(A) In \gam.}
Assume the model has the form:
\[
\hat{y}(\xb) = \sum_{j=1}^d f_j(x_j),
\]
with each \( f_j : \mathbb{R} \to \mathbb{R} \).

Then the model is globally increasing in \( x_i \) if and only if \( f_i \) is monotonically increasing.

\textbf{(B) In \gamm.}
Assume the model has the form:
\[
\hat{y}(\xb) = \sum_{j=1}^d f_j(x_j) + \sum_{j<k} f_{jk}(x_j, x_k).
\]

Then global monotonicity in \( x_i \) cannot be determined from \( f_i \) alone.

\subparagraph{(C) In \method{}.}
Assume the model has the form:
\[
\hat{y}(\xb) = \sum_{j=1}^d f_j^{(r_j(\xbj))}(x_j),
\]
where \( r_j : \mathbb{R}^{d-1} \to \{1, \dots, R_j\} \) is a region selector, and \( \xbj := \xb \setminus x_j \).

Then global monotonicity in \( x_i \) holds  if the following two conditions are satisfied:
\begin{enumerate}
  \item For all regions \( r \in \{1, \dots, R_i\} \), the function \( f_i^{(r)} : \mathbb{R} \to \mathbb{R} \) is increasing.
  \item For all \( j \ne i \), and all \( x_j \in \mathbb{R} \), the function \( x_i \mapsto f_j^{(r_j(\xbj))}(x_j) \) is non-decreasing; i.e., region transitions caused by changing \( x_i \) do not decrease \( f_j \)'s contribution.
\end{enumerate}

\end{proposition}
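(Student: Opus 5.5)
I will treat the three parts separately. In each I will write the increment $\Delta\hat y(\xb,\delta) := \hat{y}(\xb+\delta\mathbf{e}_i)-\hat y(\xb)$ and decompose it into the term attributable to $x_i$'s own shape function plus the terms of the other features. This is the same decomposition used in the proof of the Regional Feature Sensitivity proposition.

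\textbf{Part (A).} Since the terms $f_j(x_j)$ with $j\neq i$ cancel, the increment reduces to $\Delta\hat y=f_i(x_i+\delta)-f_i(x_i)$. Nonnegativity of this difference for all $x_i$ and all $\delta>0$ is exactly the statement that $f_i$ is monotonically increasing, which gives both directions of the equivalence. One subtlety needs care. The "only if" direction requires every value of $x_i$ to be attainable, which holds since the statement is posed over $\xb\in\mathbb{R}^d$, i.e.\ on a product domain.

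\textbf{Part (B).} I will exhibit a counterexample, since "cannot be determined from $f_i$ alone" means two models with the same $f_i$ differ in monotonicity. Take $d=2$, $i=1$, $f_1(x_1)=x_1$ and $f_2\equiv 0$. Then compare $f_{12}\equiv 0$, which is globally increasing, with $f_{12}(x_1,x_2)=-2x_1x_2$. For the second model, at $x_2=1$ the increment is $\delta-2\delta<0$. The same counterexample shows that one also needs every $f_{ij}$ jointly with the values of $x_j$. This reuses the expression for $\Delta\hat y$ from part (B) of the previous proposition.

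\textbf{Part (C).} Here only sufficiency is claimed. Following the previous proposition's part (C), I will write
\[
\Delta\hat y=\bigl[f_i^{(r_i(\xbi))}(x_i+\delta)-f_i^{(r_i(\xbi))}(x_i)\bigr]+\sum_{j\neq i}\bigl[f_j^{(r_j(\xbj^{+\delta}))}(x_j)-f_j^{(r_j(\xbj))}(x_j)\bigr].
\]
The key observation is that $\xbi$ is unchanged by the perturbation, so a single region $r=r_i(\xbi)$ governs the first bracket. By condition 1 applied to that $r$, the first bracket is $\geq 0$, and it is strictly positive under strict increase. For each $j\neq i$, condition 2 says that the map $x_i\mapsto f_j^{(r_j(\xbj))}(x_j)$, with the other coordinates fixed, is non-decreasing, so each summand of the second bracket is $\geq 0$. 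Summing the two brackets gives $\Delta\hat y\geq 0$. The same argument gives the strict version when the $f_i^{(r)}$ are strictly increasing.

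The main obstacle I expect is the interpretation of condition 2. As written it quantifies over "all $x_j$" but implicitly must hold for all fixed values of the remaining coordinates in $\xbj$. I will state this reading explicitly before using it. I would also remark that the conditions are not necessary: a decrease in some $f_j$ caused by a region transition could be compensated by a large increase in $f_i^{(r)}$. This explains why the proposition asserts only "if" and not "iff".
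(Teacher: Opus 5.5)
Your proof is correct. In (A) and (C) it follows the paper's argument: the same increment decomposition as in the Regional Feature Sensitivity proof, the single region $r_i(\mathbf{x}_{-i})$ governing the $f_i$ bracket, and a termwise sign check using conditions 1 and 2.

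Part (B) is where you genuinely differ. The paper only writes the GA$^2$M increment $f_i(x_i+\delta)-f_i(x_i)+\sum_{j\neq i}[f_{ij}(x_i+\delta,x_j)-f_{ij}(x_i,x_j)]$ and asserts that its sign depends on the $x_j$. That shows extra terms enter, but not that they can overturn the verdict suggested by $f_i$. Your pair of models sharing $f_1(x_1)=x_1$, with $f_{12}\equiv 0$ versus $f_{12}=-2x_1x_2$, has increment $\delta(1-2x_2)$, which is negative at $x_2=1$. This actually proves that monotonicity is not determined by $f_i$, so your version of (B) is the rigorous one.

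Your remark that the conditions in (C) are only sufficient is also correct. It is more careful than the paper's proof, which claims that monotonicity ``requires both conditions''; that claim is unjustified, and only sufficiency (all the statement asserts) is established there.

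One refinement to that remark: since $\delta$ can be arbitrarily small, a downward jump in $f_j$ from a region transition at a threshold $x_i=\tau$ cannot be offset by a large but continuous increase of $f_i^{(r)}$. The offset must come from one of two sources. It can be an upward jump of $f_i^{(r)}$ at the same $\tau$, which is possible because shape functions (e.g.\ piecewise-constant boosted ones) need not be continuous. Or it can be an opposite jump of some other $f_k$ switching region at that same threshold.
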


\begin{proof}

\textbf{(A) In \gam.}
Since the model is additive and each term depends only on a single variable, we have:
\[
\hat{y}(\xb + \delta \mathbf{e}_i) - \hat{y}(\xb) = f_i(x_i + \delta) - f_i(x_i).
\]
Thus, \( \hat{y}(\xb + \delta \mathbf{e}_i) \ge \hat{y}(\xb) \) if and only if \( f_i \) is increasing. 

\textbf{(B) In \gamm.}  As in case (A), \( f_i(x_i + \delta) - f_i(x_i) \) gives the direct contribution. However, for each \( j \ne i \), the term \( f_{ij}(x_i, x_j) \) also contributes. The total change is:
\[
\hat{y}(\xb + \delta \mathbf{e}_i) - \hat{y}(\xb) = f_i(x_i + \delta) - f_i(x_i)
+ \sum_{j \ne i} \left[ f_{ij}(x_i + \delta, x_j) - f_{ij}(x_i, x_j) \right].
\]
The sign of this expression depends on the values of \( x_j \), and therefore cannot be determined from \( f_i \) alone. Consequently, verifying global monotonicity in \( x_i \) requires knowing the full interaction structure and input values. 

\textbf{(C) In \method{}}
Let \( \xb \in \mathbb{R}^d \) and \( \delta > 0 \). Define:
\[
\Delta \hat{y} := \hat{y}(\xb + \delta \mathbf{e}_i) - \hat{y}(\xb).
\]
Then:
\[
\Delta \hat{y} =
f_i^{(r_i(\xbi))}(x_i + \delta) - f_i^{(r_i(\xbi))}(x_i)
+ \sum_{j \ne i} \left[
f_j^{(r_j(\xbj^{+\delta}))}(x_j) - f_j^{(r_j(\xbj))}(x_j)
\right],
\]
where \( \xbj^{+\delta} \) is obtained by replacing \( x_i \mapsto x_i + \delta \) in \( \xbj \).

The first term is non-negative if \( f_i^{(r)} \) is increasing for all regions \( r \), and the region \( r_i(\xbi) \) is fixed.

The second term is a sum over changes in other features' contributions due to possible changes in their regions \( r_j \). For the model to be globally increasing in \( x_i \), all such contributions must be non-negative. This requires that increasing \( x_i \) does not cause a decrease in the contribution of any other feature \( x_j \), i.e., region transitions must preserve or increase each \( f_j \).

Therefore, global monotonicity in \( x_i \) requires both conditions above. Conversely, if both conditions are satisfied, then each term in the sum is non-negative, implying \( \Delta \hat{y} \ge 0 \). 

\end{proof}

\newpage
\section{Detailed Experiments}\label{app:detailed-experiments}


\subsection{Experimental Setup Details}
\paragraph{Reporting convention.}
As in Section~\ref{sec:empirical-evaluation}, we report mean $\pm$ standard deviation over 5-fold cross-validation.

\label{app:datasets-model-details}
\begin{table}[htbp]
    \centering
    \small
    
    \begin{minipage}{0.48\textwidth}
        \centering
        \caption{Classification Datasets}
        \label{tab:clf-datasets}
        \begin{tabular}{ccc}
            \toprule
            \textbf{Dataset} & \textbf{Size} & \textbf{Attributes} \\
            \midrule
            Adult & 45222 & 13 \\
            COMPAS & 6167 & 9 \\
            HELOC & 10459 & 23 \\
            MIMIC2 & 24508 & 17 \\
            Appendicitis & 106 & 7 \\
            Phoneme & 5404 & 5 \\
            SPECTF & 349 & 44 \\
            Magic & 19020 & 10 \\
            Bank & 45211 & 16 \\
            Churn & 5000 & 19 \\
            \bottomrule
        \end{tabular}
    \end{minipage}
    \hfill 
    \begin{minipage}{0.48\textwidth}
        \centering
        \caption{Regression Datasets}
        \label{tab:regr-datasets}
        \begin{tabular}{ccc}
            \toprule
            \textbf{Dataset} & \textbf{Size} & \textbf{Attributes} \\
            \midrule
            Bike Sharing & 17379 & 11 \\
            California Housing & 20640 & 8 \\
            Parkinsons Motor & 5875 & 19 \\
            Parkinsons Total & 5875 & 19 \\
            Seoul Bike & 8465 & 14 \\
            Wine & 6497 & 11 \\
            Energy & 19735 & 28 \\
            CCPP & 9568 & 4 \\
            Electrical & 10000 & 13 \\
            Elevators & 16599 & 18 \\
            No2 & 500 & 7 \\
            Sensory & 576 & 11 \\
            Airfoil & 1503 & 5 \\
            Skill Craft & 3338 & 19 \\
            Ailerons & 13750 & 39 \\
            \bottomrule
        \end{tabular}
    \end{minipage}
\end{table}

\paragraph{Datasets details.}
Tables~\ref{tab:clf-datasets} and~\ref{tab:regr-datasets} summarize the 10 classification and 15 regression datasets used in our experiments. These datasets are sourced from various publicly available repositories. The UCI Machine Learning Repository \cite{uci_ml} provides datasets including Adult, Magic, Bank, Bike Sharing, Parkinson’s Motor, Parkinson’s Total, Seoul Bike, Wine, CCPP and Skill Craft. OpenML \cite{OpenML2013} offers datasets such as Electrical, Elevators, No2, Sensory, Airfoil, and Ailerons. The Penn Machine Learning Benchmarks (PMLB) \cite{romano2021pmlb} includes datasets like Appendicitis, Phoneme, SPECTF, and Churn. Additional datasets used in our experiments include California Housing \cite{pace1997sparse}, MIMIC-II \cite{saeed2011mimic}, COMPAS \cite{propublica2016compas}, HELOC \cite{oliabev_heloc}, and Energy \cite{luism78_energydata, candanedo2017energy}. 

\paragraph{Data Preprocessing.}
All input features are standardized using z-score normalization. For regression tasks, the target variable $y$ is also standard scaled. The only exception is the GAMI-Net model, which requires input features to be scaled using min-max normalization.
For RMSE, predictions and targets are inverse-transformed to the original scale prior to evaluation.

\paragraph{Model Configurations.}

Our black box models comprise a fully connected neural network, a random forest and an XGBoost ensemble. The deep network contains two hidden layers with 50 units each; ReLU activations are used for regression whereas a final sigmoid unit closes the classification variant. Training is carried out for 200 epochs with the Adam optimiser (learning rate=0.001), a batch size of 200 and either mean-squared error or binary cross-entropy loss, depending on the task. The random-forest baseline consists of 500 trees grown to a maximum depth of 25 with a minimum of three samples required at each leaf. For boosted trees we employ XGBoost with 300 boosting rounds, a learning rate of 0.1 and the log-loss evaluation metric for classification


For GAM models we consider NAM and EBM without interactions and Spline. The Neural Additive Model (NAM) builds one independent multilayer perceptron per feature; each sub-network has three hidden layers of 100,100 and 10 ReLU units followed by a linear output, and the additive sum is passed through a sigmoid for binary classification. Training uses Adam (learning rate 0.001), for 10 epochs and a mini-batch size of 32. Spline-GAMs are implemented with PyGAM’s LinearGAM or LogisticGAM, allocating a single spline term to every feature while relying on PyGAM’s automatic smoothing. Finally, the Explainable Boosting Machine (EBM) is used without interaction terms by explicitly setting $n\_interactions=0$.

To capture pairwise interactions we experiment with three GA\textsuperscript{2}M variants. EB\textsuperscript{2}M extends the EBM by enabling interactions with a default strength of 0.9; NODE‑GA\textsuperscript{2}M activates its interaction mode via the ga2m=1 flag and restricts training to a five‑minute time limit to ensure parity with the other models; GAMI‑Net is run with the default hyper‑parameters.

For \method{} , we configure the heterogeneity drop threshold to 0.2, determining whether a split is considered statistically significant. The region detector, which measures heterogeneity , relies on Partial Dependence Plots (PDP) for RF and XGBoost black-box models, while for DNNs we consider both PDP and RHALE to measure heterogeneity. The detector evaluates 20 candidate split points per feature. The GAM used within regions is an EBM without interactions and the \method{} is applied on top of all three black-box models (DNN, RF, XGBoost).

\paragraph{Compatibility.}The GAMI-Net baseline depends on a native binary (lib\_ebmcore\_mac\_x64.dylib) compiled for x86\_64 architecture. As a result, it is not compatible with Apple Silicon Macs, and running it on such systems will lead to an architecture mismatch error. This issue is limited to this external model and does not affect any of the proposed methods or other baselines. We provide instructions in the README file to guide such users on running all other methods except GAMI-Net.

\paragraph{Computer Resources.}All experiments were conducted on an in-house server with cloud infrastructure equipped with an Intel(R) Core(TM) i9-10900X CPU @ 3.70GHz, 128 GB of RAM. No GPU acceleration was utilized during these experiments.

\subsection{Number of Interactions}

\paragraph{Number of Interactions (Classification).}
Table~\ref{tab:classification_interactions} reports the number of feature interactions selected or used by each model across classification datasets. \method{} consistently uses significantly fewer interactions than GA$^2$M-style baselines, often by a wide margin. In most datasets, \method{} activates less than a third of the interactions compared to NodeGA$^2$M, and even fewer than GAMINet’s default of 20. The numbers shown for \method{} reflect actual detected regions with interaction-specific behavior, confirming that its compact regionalization mechanism results in sparse and interpretable models. 

\paragraph{Number of Interactions (Regression).}
As shown in Table~\ref{tab:regression_interactions}, \method{} activates fewer feature interactions than other GA$^2$M-based models in regression tasks. While GAMI-Net and EB$^2$M use a fixed or near-fixed interaction set (e.g., 20 by default), and NodeGA$^2$M often selects over 100 interactions, \method{} remains sparse across datasets.

\begin{table}[t]
  \centering
  \caption{Number of Interactions per Model in Classification Datasets}
  \label{tab:classification_interactions}
    \begin{tabular}{lcccc}
      \toprule
      \textbf{Dataset} & \textbf{\method{}} & \textbf{EB$^2$M} & \textbf{NodeGA$^2$M} & \textbf{GAMINet} \\
      \midrule
        Adult & $4.2 \text{\scriptsize$\pm$1.4}$ & $12.0 \text{\scriptsize$\pm$0.0}$ & $72.6 \text{\scriptsize$\pm$3.2}$ & $20.0 \text{\scriptsize$\pm$0.0}$ \\
        COMPAS & $1.4 \text{\scriptsize$\pm$1.7}$ & $9.0 \text{\scriptsize$\pm$0.0}$ & $36.0 \text{\scriptsize$\pm$0.0}$ & $20.0 \text{\scriptsize$\pm$0.0}$ \\
        HELOC & $1.1 \text{\scriptsize$\pm$1.6}$ & $21.0 \text{\scriptsize$\pm$0.0}$ & $163.8 \text{\scriptsize$\pm$4.4}$ & $20.0 \text{\scriptsize$\pm$0.0}$ \\
        MIMIC2 & $11.5 \text{\scriptsize$\pm$2.1}$ & $16.0 \text{\scriptsize$\pm$0.0}$ & $115.0 \text{\scriptsize$\pm$3.3}$ & $20.0 \text{\scriptsize$\pm$0.0}$ \\
        Appendicitis & $4.0 \text{\scriptsize$\pm$1.7}$ & $7.0 \text{\scriptsize$\pm$0.0}$ & $21.0 \text{\scriptsize$\pm$0.0}$ & $6.2 \text{\scriptsize$\pm$8.1}$ \\
        Phoneme & $8.6 \text{\scriptsize$\pm$1.6}$ & $5.0 \text{\scriptsize$\pm$0.0}$ & $10.0 \text{\scriptsize$\pm$0.0}$ & $10.0 \text{\scriptsize$\pm$0.0}$ \\
        SPECTF & $0.0 \text{\scriptsize$\pm$0.0}$ & $40.0 \text{\scriptsize$\pm$0.0}$ & $273.0 \text{\scriptsize$\pm$14.4}$ & $20.0 \text{\scriptsize$\pm$0.0}$ \\
        Magic & $8.5 \text{\scriptsize$\pm$2.4}$ & $9.0 \text{\scriptsize$\pm$0.0}$ & $44.0 \text{\scriptsize$\pm$0.6}$ & $20.0 \text{\scriptsize$\pm$0.0}$ \\
        Bank & $9.2 \text{\scriptsize$\pm$2.0}$ & $15.0 \text{\scriptsize$\pm$0.0}$ & $104.2 \text{\scriptsize$\pm$3.2}$ & $20.0 \text{\scriptsize$\pm$0.0}$ \\
        Churn & $14.0 \text{\scriptsize$\pm$0.0}$ & $18.0 \text{\scriptsize$\pm$0.0}$ & $137.4 \text{\scriptsize$\pm$2.9}$ & $20.0 \text{\scriptsize$\pm$0.0}$ \\
      \midrule
      \textbf{Avg.} & 6.2 & 15.2 & 97.7 & 17.6 \\
      \bottomrule
    \end{tabular}
\end{table}

\begin{table}[t]
  \centering
  \caption{Number of Interactions per Model in Regression Datasets}
  \label{tab:regression_interactions}
    \begin{tabular}{lcccc}
      \toprule
      \textbf{Dataset} & \textbf{\method{}} & \textbf{EB$^2$M} & \textbf{NodeGA$^2$M} & \textbf{GAMINet} \\
      \midrule
        Bike Sharing & $19.3 \text{\scriptsize$\pm$2.7}$ & $10.0 \text{\scriptsize$\pm$0.0}$ & $53.6 \text{\scriptsize$\pm$1.4}$ & $20.0 \text{\scriptsize$\pm$0.0}$ \\
        California Housing & $10.7 \text{\scriptsize$\pm$2.0}$ & $8.0 \text{\scriptsize$\pm$0.0}$ & $28.0 \text{\scriptsize$\pm$0.0}$ & $20.0 \text{\scriptsize$\pm$0.0}$ \\
        Parkinsons Motor & $13.8 \text{\scriptsize$\pm$4.4}$ & $18.0 \text{\scriptsize$\pm$0.0}$ & $126.2 \text{\scriptsize$\pm$1.7}$ & $20.0 \text{\scriptsize$\pm$0.0}$ \\
        Parkinsons Total & $14.4 \text{\scriptsize$\pm$5.8}$ & $18.0 \text{\scriptsize$\pm$0.0}$ & $129.2 \text{\scriptsize$\pm$3.9}$ & $20.0 \text{\scriptsize$\pm$0.0}$ \\
        Seoul Bike & $20.4 \text{\scriptsize$\pm$1.0}$ & $13.0 \text{\scriptsize$\pm$0.0}$ & $83.2 \text{\scriptsize$\pm$0.7}$ & $20.0 \text{\scriptsize$\pm$0.0}$ \\
        Wine & $13.9 \text{\scriptsize$\pm$3.3}$ & $10.0 \text{\scriptsize$\pm$0.0}$ & $54.6 \text{\scriptsize$\pm$0.5}$ & $20.0 \text{\scriptsize$\pm$0.0}$ \\
        Energy & $49.2 \text{\scriptsize$\pm$4.4}$ & $26.0 \text{\scriptsize$\pm$0.0}$ & $192.0 \text{\scriptsize$\pm$4.5}$ & $4.0 \text{\scriptsize$\pm$8.0}$ \\
        CCPP & $1.9 \text{\scriptsize$\pm$2.2}$ & $4.0 \text{\scriptsize$\pm$0.0}$ & $6.0 \text{\scriptsize$\pm$0.0}$ & $5.8 \text{\scriptsize$\pm$0.4}$ \\
        Electrical & $0.0 \text{\scriptsize$\pm$0.0}$ & $12.0 \text{\scriptsize$\pm$0.0}$ & $68.6 \text{\scriptsize$\pm$3.0}$ & $12.0 \text{\scriptsize$\pm$0.0}$ \\
        Elevators & $11.7 \text{\scriptsize$\pm$3.0}$ & $17.0 \text{\scriptsize$\pm$0.0}$ & $115.4 \text{\scriptsize$\pm$2.1}$ & $20.0 \text{\scriptsize$\pm$0.0}$ \\
        No2 & $12.8 \text{\scriptsize$\pm$1.9}$ & $7.0 \text{\scriptsize$\pm$0.0}$ & $21.0 \text{\scriptsize$\pm$0.0}$ & $20.0 \text{\scriptsize$\pm$0.0}$ \\
        Sensory & $6.9 \text{\scriptsize$\pm$4.2}$ & $10.0 \text{\scriptsize$\pm$0.0}$ & $54.0 \text{\scriptsize$\pm$0.9}$ & $16.0 \text{\scriptsize$\pm$4.9}$ \\
        Airfoil & $9.3 \text{\scriptsize$\pm$0.5}$ & $5.0 \text{\scriptsize$\pm$0.0}$ & $10.0 \text{\scriptsize$\pm$0.0}$ & $10.0 \text{\scriptsize$\pm$0.0}$ \\
        Skill Craft & $0.0 \text{\scriptsize$\pm$0.0}$ & $18.0 \text{\scriptsize$\pm$0.0}$ & $135.6 \text{\scriptsize$\pm$2.4}$ & $16.0 \text{\scriptsize$\pm$8.0}$ \\
        Ailerons & $41.7 \text{\scriptsize$\pm$2.6}$ & $36.0 \text{\scriptsize$\pm$0.0}$ & $232.4 \text{\scriptsize$\pm$5.1}$ & $20.0 \text{\scriptsize$\pm$0.0}$ \\
      \midrule
      \textbf{Avg.} & 15.1 & 14.1 & 87.3 & 16.3 \\
      \bottomrule
    \end{tabular}
\end{table}

\subsection{Practical Runtime}


\begin{table}[htbp]
\caption{Runtime (Seconds) for Classification Datasets}
\label{tab:clf-time-def}
\centering
\small
\begin{tabular}{lccccccc}
\toprule
\textbf{Dataset} & \multicolumn{1}{c}{\textbf{BlackBox}} & \multicolumn{2}{c}{\textbf{GAM}} & \multicolumn{1}{c}{\textbf{\method{}}} & \multicolumn{3}{c}{\textbf{GA$^2$M}} \\
\cmidrule(lr){2-2} \cmidrule(lr){3-4}  \cmidrule(lr){6-8}
& \textbf{XGB} & \textbf{NAM} & \textbf{EBM} &   & \textbf{EB$^2$M} & \textbf{NodeGA$^2$M} & \textbf{GAMINet} \\
\midrule
Adult & $0.2 \text{\scriptsize$\pm$0.01}$ & $38 \text{\scriptsize$\pm$2}$ & $10 \text{\scriptsize$\pm$1}$ & $36 \text{\scriptsize$\pm$1}$ & $13 \text{\scriptsize$\pm$2}$ & $97 \text{\scriptsize$\pm$9}$ & $718 \text{\scriptsize$\pm$122}$ \\
COMPAS & $0.1 \text{\scriptsize$\pm$0.004}$ & $10 \text{\scriptsize$\pm$0.1}$ & $2 \text{\scriptsize$\pm$2}$ & $2 \text{\scriptsize$\pm$0.1}$ & $1 \text{\scriptsize$\pm$0.04}$ & $51 \text{\scriptsize$\pm$1}$ & $129 \text{\scriptsize$\pm$5}$ \\
HELOC & $0.2 \text{\scriptsize$\pm$0.01}$ & $31 \text{\scriptsize$\pm$6}$ & $2 \text{\scriptsize$\pm$2}$ & $17 \text{\scriptsize$\pm$1}$ & $2 \text{\scriptsize$\pm$0.2}$ & $57 \text{\scriptsize$\pm$0.1}$ & $249 \text{\scriptsize$\pm$60}$ \\
MIMIC2 & $0.2 \text{\scriptsize$\pm$0.01}$ & $31 \text{\scriptsize$\pm$0.3}$ & $4 \text{\scriptsize$\pm$3}$ & $26 \text{\scriptsize$\pm$3}$ & $5 \text{\scriptsize$\pm$1}$ & $69 \text{\scriptsize$\pm$9}$ & $359 \text{\scriptsize$\pm$32}$ \\
Appendicitis & $0.1 \text{\scriptsize$\pm$0.001}$ & $5 \text{\scriptsize$\pm$0.1}$ & $2 \text{\scriptsize$\pm$4}$ & $2 \text{\scriptsize$\pm$0.1}$ & $1 \text{\scriptsize$\pm$0.5}$ & $24 \text{\scriptsize$\pm$0.1}$ & $6 \text{\scriptsize$\pm$5}$ \\
Phoneme & $0.1 \text{\scriptsize$\pm$0.01}$ & $6 \text{\scriptsize$\pm$1}$ & $3 \text{\scriptsize$\pm$4}$ & $3 \text{\scriptsize$\pm$3}$ & $4 \text{\scriptsize$\pm$1}$ & $61 \text{\scriptsize$\pm$4}$ & $166 \text{\scriptsize$\pm$38}$ \\
SPECTF & $0.1 \text{\scriptsize$\pm$0.01}$ & $32 \text{\scriptsize$\pm$0.3}$ & $3 \text{\scriptsize$\pm$4}$ & $3 \text{\scriptsize$\pm$3}$ & $4 \text{\scriptsize$\pm$5}$ & $25 \text{\scriptsize$\pm$0.2}$ & $180 \text{\scriptsize$\pm$13}$ \\
Magic & $0.2 \text{\scriptsize$\pm$0.003}$ & $28 \text{\scriptsize$\pm$18}$ & $5 \text{\scriptsize$\pm$5}$ & $17 \text{\scriptsize$\pm$3}$ & $7 \text{\scriptsize$\pm$4}$ & $98 \text{\scriptsize$\pm$7}$ & $489 \text{\scriptsize$\pm$63}$ \\
Bank & $0.2 \text{\scriptsize$\pm$0.04}$ & $44 \text{\scriptsize$\pm$1}$ & $5 \text{\scriptsize$\pm$1}$ & $42 \text{\scriptsize$\pm$3}$ & $13 \text{\scriptsize$\pm$1}$ & $98 \text{\scriptsize$\pm$17}$ & $994 \text{\scriptsize$\pm$265}$ \\
Churn & $0.2 \text{\scriptsize$\pm$0.01}$ & $19 \text{\scriptsize$\pm$3}$ & $2 \text{\scriptsize$\pm$1}$ & $10 \text{\scriptsize$\pm$4}$ & $3 \text{\scriptsize$\pm$2}$ & $56 \text{\scriptsize$\pm$2}$ & $257 \text{\scriptsize$\pm$32}$ \\
\midrule
\textbf{Avg.} & 0.2 & 24 & 4 & 16 & 5 & 64 & 355 \\
\bottomrule
\end{tabular}
\end{table}

\begin{table}[t]
\caption{Runtime (Seconds) for Regression Datasets}
\label{tab:regr-time-def}
\centering
\small 
\begin{tabular}{lccccccc}
\toprule
\textbf{Dataset} & \multicolumn{1}{c}{\textbf{BlackBox}} & \multicolumn{2}{c}{\textbf{GAM}} & \multicolumn{1}{c}{\textbf{\method{}}} & \multicolumn{3}{c}{\textbf{GA$^2$M}} \\
\cmidrule(lr){2-2} \cmidrule(lr){3-4}  \cmidrule(lr){6-8}
& \textbf{XGB} & \textbf{NAM} & \textbf{EBM} &   & \textbf{EB$^2$M} & \textbf{NodeGA$^2$M} & \textbf{GAMINet} \\
\midrule
Bike Sharing & $0.1 \text{\scriptsize$\pm$0.01}$ & $18 \text{\scriptsize$\pm$1}$ & $2 \text{\scriptsize$\pm$1}$ & $15 \text{\scriptsize$\pm$1}$ & $19 \text{\scriptsize$\pm$3}$ & $187 \text{\scriptsize$\pm$31}$ & $299 \text{\scriptsize$\pm$33}$ \\
California Housing & $0.2 \text{\scriptsize$\pm$0.01}$ & $15 \text{\scriptsize$\pm$0.1}$ & $4 \text{\scriptsize$\pm$2}$ & $13 \text{\scriptsize$\pm$1}$ & $16 \text{\scriptsize$\pm$0.5}$ & $200 \text{\scriptsize$\pm$46}$ & $677 \text{\scriptsize$\pm$147}$ \\
Parkinsons Motor & $0.3 \text{\scriptsize$\pm$0.003}$ & $21 \text{\scriptsize$\pm$6}$ & $5 \text{\scriptsize$\pm$2}$ & $17 \text{\scriptsize$\pm$2}$ & $44 \text{\scriptsize$\pm$4}$ & $193 \text{\scriptsize$\pm$34}$ & $536 \text{\scriptsize$\pm$88}$ \\
Parkinsons Total & $0.3 \text{\scriptsize$\pm$0.01}$ & $18 \text{\scriptsize$\pm$0.2}$ & $5 \text{\scriptsize$\pm$3}$ & $16 \text{\scriptsize$\pm$2}$ & $40 \text{\scriptsize$\pm$5}$ & $227 \text{\scriptsize$\pm$76}$ & $543 \text{\scriptsize$\pm$79}$ \\
Seoul Bike & $0.2 \text{\scriptsize$\pm$0.01}$ & $15 \text{\scriptsize$\pm$0.2}$ & $3 \text{\scriptsize$\pm$3}$ & $13 \text{\scriptsize$\pm$3}$ & $12 \text{\scriptsize$\pm$4}$ & $127 \text{\scriptsize$\pm$18}$ & $268 \text{\scriptsize$\pm$53}$ \\
Wine & $0.2 \text{\scriptsize$\pm$0.01}$ & $12 \text{\scriptsize$\pm$1}$ & $2 \text{\scriptsize$\pm$3}$ & $9 \text{\scriptsize$\pm$3}$ & $3 \text{\scriptsize$\pm$0.4}$ & $62 \text{\scriptsize$\pm$10}$ & $187 \text{\scriptsize$\pm$20}$ \\
Energy & $0.3 \text{\scriptsize$\pm$0.01}$ & $43 \text{\scriptsize$\pm$0.4}$ & $8 \text{\scriptsize$\pm$3}$ & $82 \text{\scriptsize$\pm$5}$ & $97 \text{\scriptsize$\pm$15}$ & $206 \text{\scriptsize$\pm$57}$ & $253 \text{\scriptsize$\pm$214}$ \\
CCPP & $0.2 \text{\scriptsize$\pm$0.004}$ & $6 \text{\scriptsize$\pm$0.1}$ & $5 \text{\scriptsize$\pm$4}$ & $8 \text{\scriptsize$\pm$5}$ & $14 \text{\scriptsize$\pm$2}$ & $133 \text{\scriptsize$\pm$18}$ & $88 \text{\scriptsize$\pm$8}$ \\
Electrical & $0.1 \text{\scriptsize$\pm$0.002}$ & $25 \text{\scriptsize$\pm$20}$ & $9 \text{\scriptsize$\pm$4}$ & $11 \text{\scriptsize$\pm$5}$ & $8 \text{\scriptsize$\pm$3}$ & $112 \text{\scriptsize$\pm$28}$ & $279 \text{\scriptsize$\pm$43}$ \\
Elevators & $0.3 \text{\scriptsize$\pm$0.1}$ & $25 \text{\scriptsize$\pm$0.4}$ & $7 \text{\scriptsize$\pm$4}$ & $35 \text{\scriptsize$\pm$5}$ & $43 \text{\scriptsize$\pm$8}$ & $181 \text{\scriptsize$\pm$38}$ & $773 \text{\scriptsize$\pm$269}$ \\
No2 & $0.1 \text{\scriptsize$\pm$0.003}$ & $5 \text{\scriptsize$\pm$0.04}$ & $2 \text{\scriptsize$\pm$0.1}$ & $2 \text{\scriptsize$\pm$0.1}$ & $2 \text{\scriptsize$\pm$0.1}$ & $26 \text{\scriptsize$\pm$0.1}$ & $78 \text{\scriptsize$\pm$1}$ \\
Sensory & $0.1 \text{\scriptsize$\pm$0.001}$ & $10 \text{\scriptsize$\pm$1}$ & $4 \text{\scriptsize$\pm$6}$ & $7 \text{\scriptsize$\pm$5}$ & $2 \text{\scriptsize$\pm$0.4}$ & $27 \text{\scriptsize$\pm$0.5}$ & $74 \text{\scriptsize$\pm$16}$ \\
Airfoil & $0.1 \text{\scriptsize$\pm$0.001}$ & $4 \text{\scriptsize$\pm$0.03}$ & $3 \text{\scriptsize$\pm$6}$ & $5 \text{\scriptsize$\pm$6}$ & $7 \text{\scriptsize$\pm$1}$ & $51 \text{\scriptsize$\pm$12}$ & $76 \text{\scriptsize$\pm$34}$ \\
Skill Craft & $0.3 \text{\scriptsize$\pm$0.003}$ & $17 \text{\scriptsize$\pm$0.2}$ & $4 \text{\scriptsize$\pm$6}$ & $9 \text{\scriptsize$\pm$6}$ & $1 \text{\scriptsize$\pm$0.1}$ & $44 \text{\scriptsize$\pm$0.1}$ & $141 \text{\scriptsize$\pm$52}$ \\
Ailerons & $0.3 \text{\scriptsize$\pm$0.02}$ & $53 \text{\scriptsize$\pm$1}$ & $5 \text{\scriptsize$\pm$7}$ & $70 \text{\scriptsize$\pm$8}$ & $3 \text{\scriptsize$\pm$0.1}$ & $126 \text{\scriptsize$\pm$36}$ & $619 \text{\scriptsize$\pm$134}$ \\
\midrule
\textbf{Avg.} & 0.2 & 19 & 5 & 21 & 21 & 127 & 326 \\
\bottomrule
\end{tabular}
\end{table}

We report the total runtime of each method across all datasets in Tables~\ref{tab:clf-time-def} and~\ref{tab:regr-time-def}. For \method{}, the reported time includes not only the regions detection fitting steps, but also the training time of the underlying black-box model and the GAM component used within regions. While this inclusion gives a complete view of end-to-end cost, it somewhat disadvantages \method{} in comparison to other models, whose runtimes reflect only their own training processes. In practice, the black-box or GAM components can be selected to be lightweight and the blackbox model can be reused or pretrained independently, making \method{}'s region discovery step lightweight and modular.

Despite this conservative accounting, \method{}'s runtime remains competitive. For example, it often trains faster than or comparably to GA$^2$M methods such as NODE-GA$^2$M and GAMI-Net, which tend to have high computational overhead. On small and medium-sized datasets like COMPAS, HELOC, California Housing, and Wine, \method{} runs in under a minute, showing that its interpretability gains come with reasonable computational cost. On larger datasets such as Bank or Ailerons, \method{}’s runtime scales moderately but remains practical.

\subsection{Performance Metrics for all Experiments }
\paragraph{Performance Summary.}
We evaluated \method{} on 25 datasets using three model classes: DNN, XGB, and RF. For each model and dataset, we compare \method{} to both its corresponding GAM baseline and the original black-box model.

In comparisons with GAMs, we match the model structure: for example, CALM-NAM is compared directly to NAM, so the performance difference reflects only the benefit of introducing regional modeling. In comparisons with the black-box, we report the accuracy of the \emph{best} CALM-GAM. For XGB and RF, this is selected across multiple GAM types (NAM, EBM, Spline) using a fixed heterogeneity threshold of 0.2. For DNNs, we first identify the best heterogeneity modeling method (PDP or RHALE) for each GAM, and then select the best-performing GAM among all types. While the threshold is fixed in our experiments, we note that alternative thresholds could yield further improvements. 

The results cover six tables: Tables~\ref{tab:regr-rmse-dnn}–\ref{tab:regr-rmse-rf} report performance on regression datasets using DNN, XGB, and RF respectively, while Tables~\ref{tab:clf-acc-dnn}–\ref{tab:clf-acc-rf} report on the same three model classes for classification datasets. Across these settings, \method{} consistently outperforms the corresponding GAM baseline. It matches or exceeds the performance of its GAM counterpart in 97.8\% and 100\% of cases for DNNs (Tables~\ref{tab:regr-rmse-dnn}, \ref{tab:clf-acc-dnn}), 97.8\% and 96.7\% for XGB (Tables~\ref{tab:regr-rmse-xgb}, \ref{tab:clf-acc-xgb}), and 95.6\% and 96.7\% for RF (Tables~\ref{tab:regr-rmse-rf}, \ref{tab:clf-acc-rf}). These results reflect direct, structure-matched comparisons between \method{} and GAMs of the same architecture.

When compared to the black-box models, \method{} also achieves strong results. The best CALM-GAM variant outperforms the original black-box in 73.3\% of DNN regression cases (Table~\ref{tab:regr-rmse-dnn}), 33.3\% for XGB regression (Table~\ref{tab:regr-rmse-xgb}), and 26.7\% for RF regression (Table~\ref{tab:regr-rmse-rf}). In classification, it beats the black-box in 90.0\% of DNN cases (Table~\ref{tab:clf-acc-dnn}), 50.0\% for XGB (Table~\ref{tab:clf-acc-xgb}), and 60.0\% for RF (Table~\ref{tab:clf-acc-rf}).

Overall, \method{} outperforms or matches the corresponding GAM in 219 out of 225 cases (97.3\%) and surpasses the black-box in 40 out of 75 cases (53.3\%). The rare cases where \method{} does not improve over the GAM occur in both regression and classification tasks and almost always correspond to scenarios where the GAM itself performs as well as or better than the black-box. These cases suggest that the underlying function is already well captured by global additive effects, leaving little room for further refinement through regional modeling.

\begin{table}[htbp]
\caption{RMSE Score for Regression Datasets, DNN}
\label{tab:regr-rmse-dnn}
\centering
\small 
\resizebox{\textwidth}{!}{
\begin{tabular}{lcccccccccc}
\toprule
\textbf{Dataset} & \multicolumn{1}{c}{\textbf{BlackBox}} & \multicolumn{3}{c}{\textbf{GAM}} & \multicolumn{2}{c}{\textbf{\method{} - NAM}} & \multicolumn{2}{c}{\textbf{\method{} - EBM}} & \multicolumn{2}{c}{\textbf{\method{} - Spline}} \\
\cmidrule(lr){2-2} \cmidrule(lr){3-5}  \cmidrule(lr){6-7} \cmidrule(lr){8-9} \cmidrule(lr){10-11}
& \textbf{DNN} & \textbf{NAM} & \textbf{EBM} & \textbf{Spline} & \textbf{PDP} & \textbf{RHALE} & \textbf{PDP} & \textbf{RHALE} & \textbf{PDP} & \textbf{RHALE} \\
\midrule
Bike Sharing & $42.952 \text{\scriptsize$\pm$1.460}$ & $101.968 \text{\scriptsize$\pm$1.309}$ & $100.211 \text{\scriptsize$\pm$1.010}$ & $100.876 \text{\scriptsize$\pm$0.868}$ & $63.034 \text{\scriptsize$\pm$0.883}$ & $64.941 \text{\scriptsize$\pm$2.566}$ & $58.444 \text{\scriptsize$\pm$1.637}$ & $60.966 \text{\scriptsize$\pm$2.983}$ & $59.868 \text{\scriptsize$\pm$1.564}$ & $62.108 \text{\scriptsize$\pm$2.989}$ \\
California Housing & $0.519 \text{\scriptsize$\pm$0.017}$ & $0.611 \text{\scriptsize$\pm$0.011}$ & $0.554 \text{\scriptsize$\pm$0.008}$ & $0.632 \text{\scriptsize$\pm$0.009}$ & $0.580 \text{\scriptsize$\pm$0.012}$ & $0.591 \text{\scriptsize$\pm$0.016}$ & $0.522 \text{\scriptsize$\pm$0.010}$ & $0.526 \text{\scriptsize$\pm$0.009}$ & $0.593 \text{\scriptsize$\pm$0.018}$ & $0.597 \text{\scriptsize$\pm$0.006}$ \\
Parkinsons Motor & $3.654 \text{\scriptsize$\pm$0.162}$ & $6.112 \text{\scriptsize$\pm$0.158}$ & $4.204 \text{\scriptsize$\pm$0.087}$ & $6.027 \text{\scriptsize$\pm$0.070}$ & $4.220 \text{\scriptsize$\pm$0.192}$ & $5.303 \text{\scriptsize$\pm$0.312}$ & $2.681 \text{\scriptsize$\pm$0.105}$ & $3.658 \text{\scriptsize$\pm$0.324}$ & $4.149 \text{\scriptsize$\pm$0.211}$ & $5.318 \text{\scriptsize$\pm$0.430}$ \\
Parkinsons Total & $5.139 \text{\scriptsize$\pm$0.102}$ & $7.897 \text{\scriptsize$\pm$0.102}$ & $4.847 \text{\scriptsize$\pm$0.107}$ & $7.519 \text{\scriptsize$\pm$0.072}$ & $5.456 \text{\scriptsize$\pm$0.182}$ & $6.776 \text{\scriptsize$\pm$0.381}$ & $3.587 \text{\scriptsize$\pm$0.030}$ & $4.319 \text{\scriptsize$\pm$0.222}$ & $5.408 \text{\scriptsize$\pm$0.365}$ & $6.827 \text{\scriptsize$\pm$0.209}$ \\
Seoul Bike & $244.967 \text{\scriptsize$\pm$5.081}$ & $320.225 \text{\scriptsize$\pm$4.377}$ & $303.685 \text{\scriptsize$\pm$3.855}$ & $302.020 \text{\scriptsize$\pm$4.701}$ & $269.277 \text{\scriptsize$\pm$6.414}$ & $273.241 \text{\scriptsize$\pm$8.379}$ & $249.457 \text{\scriptsize$\pm$2.407}$ & $245.880 \text{\scriptsize$\pm$5.153}$ & $246.855 \text{\scriptsize$\pm$3.936}$ & $243.926 \text{\scriptsize$\pm$4.433}$ \\
Wine & $0.700 \text{\scriptsize$\pm$0.015}$ & $0.719 \text{\scriptsize$\pm$0.009}$ & $0.703 \text{\scriptsize$\pm$0.012}$ & $0.736 \text{\scriptsize$\pm$0.039}$ & $0.712 \text{\scriptsize$\pm$0.005}$ & $0.711 \text{\scriptsize$\pm$0.012}$ & $0.695 \text{\scriptsize$\pm$0.011}$ & $0.695 \text{\scriptsize$\pm$0.012}$ & $0.729 \text{\scriptsize$\pm$0.027}$ & $0.734 \text{\scriptsize$\pm$0.035}$ \\
Energy & $77.139 \text{\scriptsize$\pm$2.916}$ & $89.799 \text{\scriptsize$\pm$2.665}$ & $85.225 \text{\scriptsize$\pm$2.489}$ & $86.498 \text{\scriptsize$\pm$2.384}$ & $89.304 \text{\scriptsize$\pm$2.480}$ & $88.600 \text{\scriptsize$\pm$2.179}$ & $84.347 \text{\scriptsize$\pm$2.358}$ & $83.972 \text{\scriptsize$\pm$2.557}$ & $84.929 \text{\scriptsize$\pm$1.868}$ & $83.526 \text{\scriptsize$\pm$2.653}$ \\
CCPP & $3.916 \text{\scriptsize$\pm$0.079}$ & $4.213 \text{\scriptsize$\pm$0.050}$ & $3.435 \text{\scriptsize$\pm$0.076}$ & $4.144 \text{\scriptsize$\pm$0.054}$ & $4.205 \text{\scriptsize$\pm$0.067}$ & $4.188 \text{\scriptsize$\pm$0.079}$ & $3.427 \text{\scriptsize$\pm$0.058}$ & $3.379 \text{\scriptsize$\pm$0.083}$ & $3.995 \text{\scriptsize$\pm$0.061}$ & $4.006 \text{\scriptsize$\pm$0.058}$ \\
Electrical & $0.051 \text{\scriptsize$\pm$0.009}$ & $0.040 \text{\scriptsize$\pm$0.005}$ & $0.018 \text{\scriptsize$\pm$0.011}$ & $0.095 \text{\scriptsize$\pm$0.005}$ & $0.040 \text{\scriptsize$\pm$0.005}$ & $0.040 \text{\scriptsize$\pm$0.005}$ & $0.018 \text{\scriptsize$\pm$0.011}$ & $0.012 \text{\scriptsize$\pm$0.007}$ & $0.094 \text{\scriptsize$\pm$0.005}$ & $0.088 \text{\scriptsize$\pm$0.003}$ \\
Elevators & $0.002 \text{\scriptsize$\pm$0.000}$ & $0.003 \text{\scriptsize$\pm$0.000}$ & $0.002 \text{\scriptsize$\pm$0.000}$ & $0.002 \text{\scriptsize$\pm$0.000}$ & $0.002 \text{\scriptsize$\pm$0.000}$ & $0.002 \text{\scriptsize$\pm$0.000}$ & $0.002 \text{\scriptsize$\pm$0.000}$ & $0.002 \text{\scriptsize$\pm$0.000}$ & $0.002 \text{\scriptsize$\pm$0.000}$ & $0.002 \text{\scriptsize$\pm$0.000}$ \\
No2 & $0.522 \text{\scriptsize$\pm$0.012}$ & $0.503 \text{\scriptsize$\pm$0.021}$ & $0.492 \text{\scriptsize$\pm$0.034}$ & $0.480 \text{\scriptsize$\pm$0.032}$ & $0.507 \text{\scriptsize$\pm$0.016}$ & $0.501 \text{\scriptsize$\pm$0.025}$ & $0.492 \text{\scriptsize$\pm$0.030}$ & $0.498 \text{\scriptsize$\pm$0.034}$ & $0.485 \text{\scriptsize$\pm$0.028}$ & $0.492 \text{\scriptsize$\pm$0.033}$ \\
Sensory & $0.522 \text{\scriptsize$\pm$0.022}$ & $0.483 \text{\scriptsize$\pm$0.014}$ & $0.477 \text{\scriptsize$\pm$0.009}$ & $0.482 \text{\scriptsize$\pm$0.012}$ & $0.461 \text{\scriptsize$\pm$0.025}$ & $0.449 \text{\scriptsize$\pm$0.019}$ & $0.447 \text{\scriptsize$\pm$0.021}$ & $0.444 \text{\scriptsize$\pm$0.012}$ & $0.453 \text{\scriptsize$\pm$0.025}$ & $0.446 \text{\scriptsize$\pm$0.020}$ \\
Airfoil & $2.131 \text{\scriptsize$\pm$0.131}$ & $4.801 \text{\scriptsize$\pm$0.198}$ & $4.565 \text{\scriptsize$\pm$0.148}$ & $4.542 \text{\scriptsize$\pm$0.121}$ & $3.275 \text{\scriptsize$\pm$0.196}$ & $3.143 \text{\scriptsize$\pm$0.150}$ & $2.647 \text{\scriptsize$\pm$0.194}$ & $2.449 \text{\scriptsize$\pm$0.227}$ & $2.697 \text{\scriptsize$\pm$0.161}$ & $2.588 \text{\scriptsize$\pm$0.248}$ \\
Skill Craft & $1.231 \text{\scriptsize$\pm$0.195}$ & $0.925 \text{\scriptsize$\pm$0.025}$ & $0.901 \text{\scriptsize$\pm$0.021}$ & $2.515 \text{\scriptsize$\pm$3.092}$ & $0.923 \text{\scriptsize$\pm$0.028}$ & $0.918 \text{\scriptsize$\pm$0.023}$ & $0.901 \text{\scriptsize$\pm$0.021}$ & $0.905 \text{\scriptsize$\pm$0.018}$ & $1.652 \text{\scriptsize$\pm$1.323}$ & $2.345 \text{\scriptsize$\pm$2.792}$ \\
Ailerons & $0.0002 \text{\scriptsize$\pm$0.000}$ & $0.0002 \text{\scriptsize$\pm$0.000}$ & $0.0002 \text{\scriptsize$\pm$0.000}$ & $0.0002 \text{\scriptsize$\pm$0.000}$ & $0.0002 \text{\scriptsize$\pm$0.000}$ & $0.0002 \text{\scriptsize$\pm$0.000}$ & $0.0002 \text{\scriptsize$\pm$0.000}$ & $0.0002 \text{\scriptsize$\pm$0.000}$ & $0.0002 \text{\scriptsize$\pm$0.000}$ & $0.001 \text{\scriptsize$\pm$0.001}$ \\
\bottomrule
\end{tabular}
}
\end{table}

\begin{table}[htbp]
\caption{RMSE Score for Regression Datasets, XGB}
\label{tab:regr-rmse-xgb}
\centering
\small 
\resizebox{\textwidth}{!}{
\begin{tabular}{lccccccc}
\toprule
\textbf{Dataset} & \multicolumn{1}{c}{\textbf{BlackBox}} & \multicolumn{3}{c}{\textbf{GAM}} & \multicolumn{3}{c}{\textbf{\method{}}} \\
\cmidrule(lr){2-2} \cmidrule(lr){3-5}  \cmidrule(lr){6-8} 
& \textbf{XGB} & \textbf{NAM} & \textbf{EBM} & \textbf{Spline} & \textbf{NAM} & \textbf{EBM} & \textbf{Spline}  \\
\midrule
Bike Sharing & $39.346 \text{\scriptsize$\pm$1.375}$ & $101.968 \text{\scriptsize$\pm$1.309}$ & $100.211 \text{\scriptsize$\pm$1.010}$ & $100.876 \text{\scriptsize$\pm$0.868}$ & $59.732 \text{\scriptsize$\pm$1.414}$ & $55.667 \text{\scriptsize$\pm$1.220}$ & $56.656 \text{\scriptsize$\pm$1.320}$ \\
California Housing & $0.454 \text{\scriptsize$\pm$0.010}$ & $0.611 \text{\scriptsize$\pm$0.011}$ & $0.554 \text{\scriptsize$\pm$0.008}$ & $0.632 \text{\scriptsize$\pm$0.009}$ & $0.566 \text{\scriptsize$\pm$0.010}$ & $0.511 \text{\scriptsize$\pm$0.010}$ & $0.594 \text{\scriptsize$\pm$0.014}$ \\
Parkinsons Motor & $1.443 \text{\scriptsize$\pm$0.094}$ & $6.112 \text{\scriptsize$\pm$0.158}$ & $4.204 \text{\scriptsize$\pm$0.087}$ & $6.027 \text{\scriptsize$\pm$0.070}$ & $4.574 \text{\scriptsize$\pm$0.137}$ & $2.243 \text{\scriptsize$\pm$0.126}$ & $4.432 \text{\scriptsize$\pm$0.185}$ \\
Parkinsons Total & $1.863 \text{\scriptsize$\pm$0.079}$ & $7.897 \text{\scriptsize$\pm$0.102}$ & $4.847 \text{\scriptsize$\pm$0.107}$ & $7.519 \text{\scriptsize$\pm$0.072}$ & $6.109 \text{\scriptsize$\pm$0.361}$ & $2.968 \text{\scriptsize$\pm$0.089}$ & $5.638 \text{\scriptsize$\pm$0.288}$ \\
Seoul Bike & $209.587 \text{\scriptsize$\pm$3.474}$ & $320.225 \text{\scriptsize$\pm$4.377}$ & $303.685 \text{\scriptsize$\pm$3.855}$ & $302.020 \text{\scriptsize$\pm$4.701}$ & $270.365 \text{\scriptsize$\pm$7.049}$ & $238.912 \text{\scriptsize$\pm$1.643}$ & $237.705 \text{\scriptsize$\pm$2.658}$ \\
Wine & $0.622 \text{\scriptsize$\pm$0.012}$ & $0.719 \text{\scriptsize$\pm$0.009}$ & $0.703 \text{\scriptsize$\pm$0.012}$ & $0.736 \text{\scriptsize$\pm$0.039}$ & $0.703 \text{\scriptsize$\pm$0.011}$ & $0.693 \text{\scriptsize$\pm$0.016}$ & $0.737 \text{\scriptsize$\pm$0.043}$ \\
Energy & $67.967 \text{\scriptsize$\pm$2.579}$ & $89.799 \text{\scriptsize$\pm$2.665}$ & $85.225 \text{\scriptsize$\pm$2.489}$ & $86.498 \text{\scriptsize$\pm$2.384}$ & $87.416 \text{\scriptsize$\pm$2.894}$ & $83.088 \text{\scriptsize$\pm$1.974}$ & $82.747 \text{\scriptsize$\pm$2.421}$ \\
CCPP & $3.086 \text{\scriptsize$\pm$0.090}$ & $4.213 \text{\scriptsize$\pm$0.050}$ & $3.435 \text{\scriptsize$\pm$0.076}$ & $4.144 \text{\scriptsize$\pm$0.054}$ & $4.171 \text{\scriptsize$\pm$0.042}$ & $3.419 \text{\scriptsize$\pm$0.066}$ & $4.039 \text{\scriptsize$\pm$0.036}$ \\
Electrical & $0.037 \text{\scriptsize$\pm$0.015}$ & $0.040 \text{\scriptsize$\pm$0.005}$ & $0.018 \text{\scriptsize$\pm$0.011}$ & $0.095 \text{\scriptsize$\pm$0.005}$ & $0.040 \text{\scriptsize$\pm$0.005}$ & $0.018 \text{\scriptsize$\pm$0.011}$ & $0.095 \text{\scriptsize$\pm$0.005}$ \\
Elevators & $0.002 \text{\scriptsize$\pm$0.000}$ & $0.003 \text{\scriptsize$\pm$0.000}$ & $0.002 \text{\scriptsize$\pm$0.000}$ & $0.002 \text{\scriptsize$\pm$0.000}$ & $0.002 \text{\scriptsize$\pm$0.000}$ & $0.002 \text{\scriptsize$\pm$0.000}$ & $0.002 \text{\scriptsize$\pm$0.000}$ \\
No2 & $0.473 \text{\scriptsize$\pm$0.026}$ & $0.503 \text{\scriptsize$\pm$0.021}$ & $0.492 \text{\scriptsize$\pm$0.034}$ & $0.480 \text{\scriptsize$\pm$0.032}$ & $0.498 \text{\scriptsize$\pm$0.022}$ & $0.492 \text{\scriptsize$\pm$0.036}$ & $0.479 \text{\scriptsize$\pm$0.031}$ \\
Sensory & $0.508 \text{\scriptsize$\pm$0.028}$ & $0.483 \text{\scriptsize$\pm$0.014}$ & $0.477 \text{\scriptsize$\pm$0.009}$ & $0.482 \text{\scriptsize$\pm$0.012}$ & $0.462 \text{\scriptsize$\pm$0.016}$ & $0.450 \text{\scriptsize$\pm$0.020}$ & $0.451 \text{\scriptsize$\pm$0.029}$ \\
Airfoil & $1.544 \text{\scriptsize$\pm$0.100}$ & $4.801 \text{\scriptsize$\pm$0.198}$ & $4.565 \text{\scriptsize$\pm$0.148}$ & $4.542 \text{\scriptsize$\pm$0.121}$ & $3.146 \text{\scriptsize$\pm$0.137}$ & $2.503 \text{\scriptsize$\pm$0.147}$ & $2.582 \text{\scriptsize$\pm$0.130}$ \\
Skill Craft & $0.938 \text{\scriptsize$\pm$0.019}$ & $0.925 \text{\scriptsize$\pm$0.025}$ & $0.901 \text{\scriptsize$\pm$0.021}$ & $2.515 \text{\scriptsize$\pm$3.092}$ & $0.925 \text{\scriptsize$\pm$0.025}$ & $0.901 \text{\scriptsize$\pm$0.021}$ & $2.167 \text{\scriptsize$\pm$2.261}$ \\
Ailerons & $0.0002 \text{\scriptsize$\pm$0.000}$ & $0.0002 \text{\scriptsize$\pm$0.000}$ & $0.0002 \text{\scriptsize$\pm$0.000}$ & $0.0002 \text{\scriptsize$\pm$0.000}$ & $0.0002 \text{\scriptsize$\pm$0.000}$ & $0.0002 \text{\scriptsize$\pm$0.000}$ & $0.0002 \text{\scriptsize$\pm$0.000}$ \\
\bottomrule
\end{tabular}
}
\end{table}

\begin{table}[!t]
\caption{RMSE Score for Regression Datasets, RF}
\label{tab:regr-rmse-rf}
\centering
\small 
\resizebox{\textwidth}{!}{
\begin{tabular}{lccccccc}
\toprule
\textbf{Dataset} & \multicolumn{1}{c}{\textbf{BlackBox}} & \multicolumn{3}{c}{\textbf{GAM}} & \multicolumn{3}{c}{\textbf{\method{}}} \\
\cmidrule(lr){2-2} \cmidrule(lr){3-5}  \cmidrule(lr){6-8} 
& \textbf{RF} & \textbf{NAM} & \textbf{EBM} & \textbf{Spline} & \textbf{NAM} & \textbf{EBM} & \textbf{Spline}  \\
\midrule
Bike Sharing & $43.247 \text{\scriptsize$\pm$1.037}$ & $101.968 \text{\scriptsize$\pm$1.309}$ & $100.211 \text{\scriptsize$\pm$1.010}$ & $100.876 \text{\scriptsize$\pm$0.868}$ & $61.174 \text{\scriptsize$\pm$1.544}$ & $57.158 \text{\scriptsize$\pm$1.288}$ & $57.823 \text{\scriptsize$\pm$1.393}$ \\
California Housing & $0.502 \text{\scriptsize$\pm$0.012}$ & $0.611 \text{\scriptsize$\pm$0.011}$ & $0.554 \text{\scriptsize$\pm$0.008}$ & $0.632 \text{\scriptsize$\pm$0.009}$ & $0.570 \text{\scriptsize$\pm$0.015}$ & $0.515 \text{\scriptsize$\pm$0.012}$ & $0.611 \text{\scriptsize$\pm$0.025}$ \\
Parkinsons Motor & $1.477 \text{\scriptsize$\pm$0.155}$ & $6.112 \text{\scriptsize$\pm$0.158}$ & $4.204 \text{\scriptsize$\pm$0.087}$ & $6.027 \text{\scriptsize$\pm$0.070}$ & $4.515 \text{\scriptsize$\pm$0.094}$ & $2.230 \text{\scriptsize$\pm$0.081}$ & $4.349 \text{\scriptsize$\pm$0.061}$ \\
Parkinsons Total & $1.869 \text{\scriptsize$\pm$0.243}$ & $7.897 \text{\scriptsize$\pm$0.102}$ & $4.847 \text{\scriptsize$\pm$0.107}$ & $7.519 \text{\scriptsize$\pm$0.072}$ & $5.761 \text{\scriptsize$\pm$0.259}$ & $3.044 \text{\scriptsize$\pm$0.068}$ & $5.393 \text{\scriptsize$\pm$0.162}$ \\
Seoul Bike & $225.572 \text{\scriptsize$\pm$3.086}$ & $320.225 \text{\scriptsize$\pm$4.377}$ & $303.685 \text{\scriptsize$\pm$3.855}$ & $302.020 \text{\scriptsize$\pm$4.701}$ & $270.514 \text{\scriptsize$\pm$8.146}$ & $243.910 \text{\scriptsize$\pm$2.580}$ & $240.471 \text{\scriptsize$\pm$4.337}$ \\
Wine & $0.618 \text{\scriptsize$\pm$0.012}$ & $0.719 \text{\scriptsize$\pm$0.009}$ & $0.703 \text{\scriptsize$\pm$0.012}$ & $0.736 \text{\scriptsize$\pm$0.039}$ & $0.708 \text{\scriptsize$\pm$0.006}$ & $0.691 \text{\scriptsize$\pm$0.011}$ & $0.726 \text{\scriptsize$\pm$0.036}$ \\
Energy & $69.406 \text{\scriptsize$\pm$2.781}$ & $89.799 \text{\scriptsize$\pm$2.665}$ & $85.225 \text{\scriptsize$\pm$2.489}$ & $86.498 \text{\scriptsize$\pm$2.384}$ & $88.448 \text{\scriptsize$\pm$2.607}$ & $84.741 \text{\scriptsize$\pm$2.520}$ & $84.690 \text{\scriptsize$\pm$2.546}$ \\
CCPP & $3.378 \text{\scriptsize$\pm$0.078}$ & $4.213 \text{\scriptsize$\pm$0.050}$ & $3.435 \text{\scriptsize$\pm$0.076}$ & $4.144 \text{\scriptsize$\pm$0.054}$ & $4.129 \text{\scriptsize$\pm$0.057}$ & $3.422 \text{\scriptsize$\pm$0.112}$ & $4.029 \text{\scriptsize$\pm$0.052}$ \\
Electrical & $0.009 \text{\scriptsize$\pm$0.009}$ & $0.040 \text{\scriptsize$\pm$0.005}$ & $0.018 \text{\scriptsize$\pm$0.011}$ & $0.095 \text{\scriptsize$\pm$0.005}$ & $0.040 \text{\scriptsize$\pm$0.005}$ & $0.018 \text{\scriptsize$\pm$0.011}$ & $0.095 \text{\scriptsize$\pm$0.005}$ \\
Elevators & $0.003 \text{\scriptsize$\pm$0.000}$ & $0.003 \text{\scriptsize$\pm$0.000}$ & $0.002 \text{\scriptsize$\pm$0.000}$ & $0.002 \text{\scriptsize$\pm$0.000}$ & $0.002 \text{\scriptsize$\pm$0.000}$ & $0.002 \text{\scriptsize$\pm$0.000}$ & $0.002 \text{\scriptsize$\pm$0.000}$ \\
No2 & $0.466 \text{\scriptsize$\pm$0.033}$ & $0.503 \text{\scriptsize$\pm$0.021}$ & $0.492 \text{\scriptsize$\pm$0.034}$ & $0.480 \text{\scriptsize$\pm$0.032}$ & $0.481 \text{\scriptsize$\pm$0.023}$ & $0.482 \text{\scriptsize$\pm$0.031}$ & $0.484 \text{\scriptsize$\pm$0.029}$ \\
Sensory & $0.446 \text{\scriptsize$\pm$0.007}$ & $0.483 \text{\scriptsize$\pm$0.014}$ & $0.477 \text{\scriptsize$\pm$0.009}$ & $0.482 \text{\scriptsize$\pm$0.012}$ & $0.451 \text{\scriptsize$\pm$0.022}$ & $0.439 \text{\scriptsize$\pm$0.020}$ & $0.442 \text{\scriptsize$\pm$0.025}$ \\
Airfoil & $2.196 \text{\scriptsize$\pm$0.170}$ & $4.801 \text{\scriptsize$\pm$0.198}$ & $4.565 \text{\scriptsize$\pm$0.148}$ & $4.542 \text{\scriptsize$\pm$0.121}$ & $3.073 \text{\scriptsize$\pm$0.143}$ & $2.533 \text{\scriptsize$\pm$0.137}$ & $2.627 \text{\scriptsize$\pm$0.108}$ \\
Skill Craft & $0.916 \text{\scriptsize$\pm$0.019}$ & $0.925 \text{\scriptsize$\pm$0.025}$ & $0.901 \text{\scriptsize$\pm$0.021}$ & $2.515 \text{\scriptsize$\pm$3.092}$ & $0.926 \text{\scriptsize$\pm$0.037}$ & $0.901 \text{\scriptsize$\pm$0.022}$ & $2.320 \text{\scriptsize$\pm$2.696}$ \\
Ailerons & $0.0002 \text{\scriptsize$\pm$0.000}$ & $0.0002 \text{\scriptsize$\pm$0.000}$ & $0.0002 \text{\scriptsize$\pm$0.000}$ & $0.0002 \text{\scriptsize$\pm$0.000}$ & $0.0002 \text{\scriptsize$\pm$0.000}$ & $0.0002 \text{\scriptsize$\pm$0.000}$ & $0.0002 \text{\scriptsize$\pm$0.000}$ \\
\bottomrule
\end{tabular}
}
\end{table}

\begin{table}[!t]
\caption{Accuracy Score for Classification Datasets, DNN}
\label{tab:clf-acc-dnn}
\centering
\small 
\resizebox{\textwidth}{!}{
\begin{tabular}{lcccccccccc}
\toprule
\textbf{Dataset} & \multicolumn{1}{c}{\textbf{BlackBox}} & \multicolumn{3}{c}{\textbf{GAM}} & \multicolumn{2}{c}{\textbf{\method{} - NAM}} & \multicolumn{2}{c}{\textbf{\method{} - EBM}} & \multicolumn{2}{c}{\textbf{\method{} - Spline}} \\
\cmidrule(lr){2-2} \cmidrule(lr){3-5}  \cmidrule(lr){6-7} \cmidrule(lr){8-9} \cmidrule(lr){10-11}
& \textbf{DNN} & \textbf{NAM} & \textbf{EBM} & \textbf{Spline} & \textbf{PDP} & \textbf{RHALE} & \textbf{PDP} & \textbf{RHALE} & \textbf{PDP} & \textbf{RHALE} \\
\midrule
Adult & $0.849 \text{\scriptsize$\pm$0.002}$ & $0.851 \text{\scriptsize$\pm$0.002}$ & $0.870 \text{\scriptsize$\pm$0.002}$ & $0.856 \text{\scriptsize$\pm$0.001}$ & $0.853 \text{\scriptsize$\pm$0.001}$ & $0.854 \text{\scriptsize$\pm$0.003}$ & $0.870 \text{\scriptsize$\pm$0.003}$ & $0.871 \text{\scriptsize$\pm$0.003}$ & $0.857 \text{\scriptsize$\pm$0.003}$ & $0.859 \text{\scriptsize$\pm$0.003}$ \\
COMPAS & $0.682 \text{\scriptsize$\pm$0.011}$ & $0.682 \text{\scriptsize$\pm$0.016}$ & $0.681 \text{\scriptsize$\pm$0.012}$ & $0.685 \text{\scriptsize$\pm$0.012}$ & $0.682 \text{\scriptsize$\pm$0.007}$ & $0.681 \text{\scriptsize$\pm$0.014}$ & $0.683 \text{\scriptsize$\pm$0.011}$ & $0.679 \text{\scriptsize$\pm$0.012}$ & $0.683 \text{\scriptsize$\pm$0.015}$ & $0.683 \text{\scriptsize$\pm$0.011}$ \\
HELOC & $0.721 \text{\scriptsize$\pm$0.009}$ & $0.723 \text{\scriptsize$\pm$0.011}$ & $0.728 \text{\scriptsize$\pm$0.014}$ & $0.726 \text{\scriptsize$\pm$0.010}$ & $0.723 \text{\scriptsize$\pm$0.011}$ & $0.723 \text{\scriptsize$\pm$0.013}$ & $0.727 \text{\scriptsize$\pm$0.015}$ & $0.728 \text{\scriptsize$\pm$0.014}$ & $0.721 \text{\scriptsize$\pm$0.012}$ & $0.726 \text{\scriptsize$\pm$0.011}$ \\
MIMIC2 & $0.885 \text{\scriptsize$\pm$0.003}$ & $0.886 \text{\scriptsize$\pm$0.001}$ & $0.886 \text{\scriptsize$\pm$0.003}$ & $0.886 \text{\scriptsize$\pm$0.003}$ & $0.887 \text{\scriptsize$\pm$0.001}$ & $0.886 \text{\scriptsize$\pm$0.001}$ & $0.886 \text{\scriptsize$\pm$0.003}$ & $0.886 \text{\scriptsize$\pm$0.002}$ & $0.886 \text{\scriptsize$\pm$0.003}$ & $0.886 \text{\scriptsize$\pm$0.003}$ \\
Appendicitis & $0.877 \text{\scriptsize$\pm$0.072}$ & $0.848 \text{\scriptsize$\pm$0.056}$ & $0.877 \text{\scriptsize$\pm$0.077}$ & $0.887 \text{\scriptsize$\pm$0.064}$ & $0.839 \text{\scriptsize$\pm$0.025}$ & $0.848 \text{\scriptsize$\pm$0.056}$ & $0.897 \text{\scriptsize$\pm$0.055}$ & $0.877 \text{\scriptsize$\pm$0.077}$ & $0.887 \text{\scriptsize$\pm$0.064}$ & $0.887 \text{\scriptsize$\pm$0.064}$ \\
Phoneme & $0.815 \text{\scriptsize$\pm$0.009}$ & $0.808 \text{\scriptsize$\pm$0.002}$ & $0.821 \text{\scriptsize$\pm$0.007}$ & $0.832 \text{\scriptsize$\pm$0.006}$ & $0.839 \text{\scriptsize$\pm$0.012}$ & $0.834 \text{\scriptsize$\pm$0.008}$ & $0.858 \text{\scriptsize$\pm$0.010}$ & $0.856 \text{\scriptsize$\pm$0.008}$ & $0.855 \text{\scriptsize$\pm$0.009}$ & $0.852 \text{\scriptsize$\pm$0.006}$ \\
SPECTF & $0.814 \text{\scriptsize$\pm$0.016}$ & $0.839 \text{\scriptsize$\pm$0.030}$ & $0.894 \text{\scriptsize$\pm$0.015}$ & $0.845 \text{\scriptsize$\pm$0.040}$ & $0.848 \text{\scriptsize$\pm$0.036}$ & $0.882 \text{\scriptsize$\pm$0.025}$ & $0.885 \text{\scriptsize$\pm$0.021}$ & $0.903 \text{\scriptsize$\pm$0.014}$ & $0.848 \text{\scriptsize$\pm$0.032}$ & $0.782 \text{\scriptsize$\pm$0.036}$ \\
Magic & $0.870 \text{\scriptsize$\pm$0.004}$ & $0.850 \text{\scriptsize$\pm$0.006}$ & $0.857 \text{\scriptsize$\pm$0.005}$ & $0.855 \text{\scriptsize$\pm$0.004}$ & $0.860 \text{\scriptsize$\pm$0.007}$ & $0.858 \text{\scriptsize$\pm$0.004}$ & $0.862 \text{\scriptsize$\pm$0.006}$ & $0.860 \text{\scriptsize$\pm$0.006}$ & $0.865 \text{\scriptsize$\pm$0.005}$ & $0.860 \text{\scriptsize$\pm$0.005}$ \\
Bank & $0.903 \text{\scriptsize$\pm$0.003}$ & $0.901 \text{\scriptsize$\pm$0.003}$ & $0.902 \text{\scriptsize$\pm$0.002}$ & $0.903 \text{\scriptsize$\pm$0.002}$ & $0.902 \text{\scriptsize$\pm$0.001}$ & $0.903 \text{\scriptsize$\pm$0.005}$ & $0.905 \text{\scriptsize$\pm$0.003}$ & $0.904 \text{\scriptsize$\pm$0.003}$ & $0.906 \text{\scriptsize$\pm$0.003}$ & $0.905 \text{\scriptsize$\pm$0.005}$ \\
Churn & $0.938 \text{\scriptsize$\pm$0.005}$ & $0.885 \text{\scriptsize$\pm$0.009}$ & $0.886 \text{\scriptsize$\pm$0.005}$ & $0.889 \text{\scriptsize$\pm$0.005}$ & $0.957 \text{\scriptsize$\pm$0.010}$ & $0.949 \text{\scriptsize$\pm$0.004}$ & $0.953 \text{\scriptsize$\pm$0.006}$ & $0.946 \text{\scriptsize$\pm$0.006}$ & $0.959 \text{\scriptsize$\pm$0.006}$ & $0.941 \text{\scriptsize$\pm$0.007}$ \\
\bottomrule
\end{tabular}
}
\end{table}

\begin{table}[!t]
\caption{Accuracy Score for Classification Datasets, XGB}
\label{tab:clf-acc-xgb}
\centering
\small 
\resizebox{\textwidth}{!}{
\begin{tabular}{lccccccc}
\toprule
\textbf{Dataset} & \multicolumn{1}{c}{\textbf{BlackBox}} & \multicolumn{3}{c}{\textbf{GAM}} & \multicolumn{3}{c}{\textbf{\method{}}} \\
\cmidrule(lr){2-2} \cmidrule(lr){3-5}  \cmidrule(lr){6-8} 
& \textbf{XGB} & \textbf{NAM} & \textbf{EBM} & \textbf{Spline} & \textbf{NAM} & \textbf{EBM} & \textbf{Spline}  \\
\midrule
Adult & $0.870 \text{\scriptsize$\pm$0.002}$ & $0.851 \text{\scriptsize$\pm$0.002}$ & $0.870 \text{\scriptsize$\pm$0.002}$ & $0.856 \text{\scriptsize$\pm$0.001}$ & $0.853 \text{\scriptsize$\pm$0.002}$ & $0.870 \text{\scriptsize$\pm$0.002}$ & $0.858 \text{\scriptsize$\pm$0.001}$ \\
COMPAS & $0.661 \text{\scriptsize$\pm$0.007}$ & $0.682 \text{\scriptsize$\pm$0.016}$ & $0.681 \text{\scriptsize$\pm$0.012}$ & $0.685 \text{\scriptsize$\pm$0.012}$ & $0.686 \text{\scriptsize$\pm$0.014}$ & $0.684 \text{\scriptsize$\pm$0.013}$ & $0.686 \text{\scriptsize$\pm$0.016}$ \\
HELOC & $0.717 \text{\scriptsize$\pm$0.013}$ & $0.723 \text{\scriptsize$\pm$0.011}$ & $0.728 \text{\scriptsize$\pm$0.014}$ & $0.726 \text{\scriptsize$\pm$0.010}$ & $0.724 \text{\scriptsize$\pm$0.011}$ & $0.728 \text{\scriptsize$\pm$0.012}$ & $0.726 \text{\scriptsize$\pm$0.011}$ \\
MIMIC2 & $0.890 \text{\scriptsize$\pm$0.001}$ & $0.886 \text{\scriptsize$\pm$0.001}$ & $0.886 \text{\scriptsize$\pm$0.003}$ & $0.886 \text{\scriptsize$\pm$0.003}$ & $0.886 \text{\scriptsize$\pm$0.001}$ & $0.886 \text{\scriptsize$\pm$0.003}$ & $0.886 \text{\scriptsize$\pm$0.003}$ \\
Appendicitis & $0.868 \text{\scriptsize$\pm$0.069}$ & $0.848 \text{\scriptsize$\pm$0.056}$ & $0.877 \text{\scriptsize$\pm$0.077}$ & $0.887 \text{\scriptsize$\pm$0.064}$ & $0.868 \text{\scriptsize$\pm$0.069}$ & $0.878 \text{\scriptsize$\pm$0.063}$ & $0.878 \text{\scriptsize$\pm$0.063}$ \\
Phoneme & $0.898 \text{\scriptsize$\pm$0.006}$ & $0.808 \text{\scriptsize$\pm$0.002}$ & $0.821 \text{\scriptsize$\pm$0.007}$ & $0.832 \text{\scriptsize$\pm$0.006}$ & $0.843 \text{\scriptsize$\pm$0.006}$ & $0.861 \text{\scriptsize$\pm$0.011}$ & $0.860 \text{\scriptsize$\pm$0.010}$ \\
SPECTF & $0.862 \text{\scriptsize$\pm$0.029}$ & $0.839 \text{\scriptsize$\pm$0.030}$ & $0.894 \text{\scriptsize$\pm$0.015}$ & $0.845 \text{\scriptsize$\pm$0.040}$ & $0.857 \text{\scriptsize$\pm$0.043}$ & $0.894 \text{\scriptsize$\pm$0.015}$ & $0.845 \text{\scriptsize$\pm$0.040}$ \\
Magic & $0.885 \text{\scriptsize$\pm$0.004}$ & $0.850 \text{\scriptsize$\pm$0.006}$ & $0.857 \text{\scriptsize$\pm$0.005}$ & $0.855 \text{\scriptsize$\pm$0.004}$ & $0.863 \text{\scriptsize$\pm$0.007}$ & $0.864 \text{\scriptsize$\pm$0.004}$ & $0.868 \text{\scriptsize$\pm$0.006}$ \\
Bank & $0.908 \text{\scriptsize$\pm$0.003}$ & $0.901 \text{\scriptsize$\pm$0.003}$ & $0.902 \text{\scriptsize$\pm$0.002}$ & $0.903 \text{\scriptsize$\pm$0.002}$ & $0.903 \text{\scriptsize$\pm$0.002}$ & $0.905 \text{\scriptsize$\pm$0.002}$ & $0.907 \text{\scriptsize$\pm$0.002}$ \\
Churn & $0.958 \text{\scriptsize$\pm$0.004}$ & $0.885 \text{\scriptsize$\pm$0.009}$ & $0.886 \text{\scriptsize$\pm$0.005}$ & $0.889 \text{\scriptsize$\pm$0.005}$ & $0.954 \text{\scriptsize$\pm$0.007}$ & $0.946 \text{\scriptsize$\pm$0.003}$ & $0.953 \text{\scriptsize$\pm$0.008}$ \\
\bottomrule
\end{tabular}
}
\end{table}

\begin{table}[htbp]
\caption{Accuracy Score for Classification Datasets, RF}
\label{tab:clf-acc-rf}
\centering
\small 
\resizebox{\textwidth}{!}{
\begin{tabular}{lccccccc}
\toprule
\textbf{Dataset} & \multicolumn{1}{c}{\textbf{BlackBox}} & \multicolumn{3}{c}{\textbf{GAM}} & \multicolumn{3}{c}{\textbf{\method{}}} \\
\cmidrule(lr){2-2} \cmidrule(lr){3-5}  \cmidrule(lr){6-8} 
& \textbf{RF} & \textbf{NAM} & \textbf{EBM} & \textbf{Spline} & \textbf{NAM} & \textbf{EBM} & \textbf{Spline}  \\
\midrule
Adult & $0.843 \text{\scriptsize$\pm$0.001}$ & $0.851 \text{\scriptsize$\pm$0.002}$ & $0.870 \text{\scriptsize$\pm$0.002}$ & $0.856 \text{\scriptsize$\pm$0.001}$ & $0.853 \text{\scriptsize$\pm$0.002}$ & $0.871 \text{\scriptsize$\pm$0.003}$ & $0.859 \text{\scriptsize$\pm$0.001}$ \\
COMPAS & $0.662 \text{\scriptsize$\pm$0.016}$ & $0.682 \text{\scriptsize$\pm$0.016}$ & $0.681 \text{\scriptsize$\pm$0.012}$ & $0.685 \text{\scriptsize$\pm$0.012}$ & $0.682 \text{\scriptsize$\pm$0.016}$ & $0.683 \text{\scriptsize$\pm$0.012}$ & $0.685 \text{\scriptsize$\pm$0.012}$ \\
HELOC & $0.724 \text{\scriptsize$\pm$0.014}$ & $0.723 \text{\scriptsize$\pm$0.011}$ & $0.728 \text{\scriptsize$\pm$0.014}$ & $0.726 \text{\scriptsize$\pm$0.010}$ & $0.723 \text{\scriptsize$\pm$0.014}$ & $0.728 \text{\scriptsize$\pm$0.014}$ & $0.726 \text{\scriptsize$\pm$0.010}$ \\
MIMIC2 & $0.888 \text{\scriptsize$\pm$0.002}$ & $0.886 \text{\scriptsize$\pm$0.001}$ & $0.886 \text{\scriptsize$\pm$0.003}$ & $0.886 \text{\scriptsize$\pm$0.003}$ & $0.886 \text{\scriptsize$\pm$0.002}$ & $0.887 \text{\scriptsize$\pm$0.002}$ & $0.886 \text{\scriptsize$\pm$0.003}$ \\
Appendicitis & $0.859 \text{\scriptsize$\pm$0.064}$ & $0.848 \text{\scriptsize$\pm$0.056}$ & $0.877 \text{\scriptsize$\pm$0.077}$ & $0.887 \text{\scriptsize$\pm$0.064}$ & $0.848 \text{\scriptsize$\pm$0.056}$ & $0.877 \text{\scriptsize$\pm$0.077}$ & $0.887 \text{\scriptsize$\pm$0.064}$ \\
Phoneme & $0.898 \text{\scriptsize$\pm$0.005}$ & $0.808 \text{\scriptsize$\pm$0.002}$ & $0.821 \text{\scriptsize$\pm$0.007}$ & $0.832 \text{\scriptsize$\pm$0.006}$ & $0.838 \text{\scriptsize$\pm$0.005}$ & $0.859 \text{\scriptsize$\pm$0.005}$ & $0.859 \text{\scriptsize$\pm$0.009}$ \\
SPECTF & $0.877 \text{\scriptsize$\pm$0.027}$ & $0.839 \text{\scriptsize$\pm$0.030}$ & $0.894 \text{\scriptsize$\pm$0.015}$ & $0.845 \text{\scriptsize$\pm$0.040}$ & $0.848 \text{\scriptsize$\pm$0.026}$ & $0.894 \text{\scriptsize$\pm$0.018}$ & $0.842 \text{\scriptsize$\pm$0.048}$ \\
Magic & $0.878 \text{\scriptsize$\pm$0.004}$ & $0.850 \text{\scriptsize$\pm$0.006}$ & $0.857 \text{\scriptsize$\pm$0.005}$ & $0.855 \text{\scriptsize$\pm$0.004}$ & $0.861 \text{\scriptsize$\pm$0.004}$ & $0.864 \text{\scriptsize$\pm$0.004}$ & $0.866 \text{\scriptsize$\pm$0.005}$ \\
Bank & $0.901 \text{\scriptsize$\pm$0.004}$ & $0.901 \text{\scriptsize$\pm$0.003}$ & $0.902 \text{\scriptsize$\pm$0.002}$ & $0.903 \text{\scriptsize$\pm$0.002}$ & $0.905 \text{\scriptsize$\pm$0.003}$ & $0.907 \text{\scriptsize$\pm$0.002}$ & $0.907 \text{\scriptsize$\pm$0.002}$ \\
Churn & $0.957 \text{\scriptsize$\pm$0.005}$ & $0.885 \text{\scriptsize$\pm$0.009}$ & $0.886 \text{\scriptsize$\pm$0.005}$ & $0.889 \text{\scriptsize$\pm$0.005}$ & $0.952 \text{\scriptsize$\pm$0.005}$ & $0.951 \text{\scriptsize$\pm$0.007}$ & $0.955 \text{\scriptsize$\pm$0.006}$ \\
\bottomrule
\end{tabular}
}
\end{table}

\end{document}